\documentclass[letterpaper,journal]{IEEEtran}

\usepackage{amsmath,amsfonts,amssymb}
\usepackage{algorithmic}
\usepackage{algorithm}
\usepackage{array}
\usepackage{tabularx}
\usepackage[caption=false,font=normalsize,labelfont=sf,textfont=sf]{subfig}
\usepackage{textcomp}
\usepackage{stfloats}
\usepackage{url}
\usepackage{verbatim}

\usepackage{graphicx}
\usepackage{adjustbox}
\usepackage[nocompress]{cite}
\usepackage{booktabs}
\usepackage{multirow}
\usepackage[table]{xcolor}
\usepackage{placeins}
\usepackage{float}
\usepackage{needspace}
\usepackage{pifont}
\usepackage[colorlinks=true, linkcolor=blue, citecolor=blue]{hyperref}
\newcommand{\hsaFigRef}[2][]{\hyperref[#2]{Fig.~\ref*{#2}#1}}
\newcommand{\hsaFigureRef}[2][]{\hsaFigRef[#1]{#2}}
\newcommand{\hsaTableRef}[1]{\hyperref[#1]{Table~\ref*{#1}}}
\newcommand{\hsaAppendixRef}[1]{\hyperref[#1]{Appendix~\ref*{#1}}}
\makeatletter
\newcommand{\hsaEqNumber}[1]{\textup{\tagform@{\ref*{#1}}}}
\makeatother
\newcommand{\hsaEqRef}[1]{\hyperref[#1]{Eq.~\hsaEqNumber{#1}}}
\newcommand{\hsaEquationRef}[1]{\hsaEqRef{#1}}
\newcommand{\hsaDefinitionRef}[1]{\hyperref[#1]{Definition~\ref*{#1}}}
\newcommand{\hsaAssumptionRef}[1]{\hyperref[#1]{Assumption~\ref*{#1}}}
\newcommand{\hsaTheoremRef}[1]{\hyperref[#1]{Theorem~\ref*{#1}}}
\newcommand{\hsaCorollaryRef}[1]{\hyperref[#1]{Corollary~\ref*{#1}}}
\newcommand{\hsaPropositionRef}[1]{\hyperref[#1]{Proposition~\ref*{#1}}}
\newcommand{\hsaAlgorithmRef}[1]{\hyperref[#1]{Algorithm~\ref*{#1}}}
\newcommand{\hsaSectionRef}[1]{\hyperref[#1]{Section~\ref*{#1}}}

\newtheorem{definition}{Definition}
\newtheorem{assumption}{Assumption}
\newtheorem{theorem}{Theorem}
\newtheorem{corollary}{Corollary}
\newtheorem{proposition}{Proposition}
\makeatletter
\def\@begintheorem#1#2{\@IEEEtmpitemindent\itemindent\relax\topsep 0pt\rmfamily\trivlist%
    \item[]{\itshape\indent #1\ #2:}\itemindent\@IEEEtmpitemindent\relax}
\def\@opargbegintheorem#1#2#3{\@IEEEtmpitemindent\itemindent\relax\topsep 0pt\rmfamily\trivlist%
    \item[]{\itshape\indent #1\ #2\ (#3):}\itemindent\@IEEEtmpitemindent\relax}
\makeatother

\newcolumntype{C}[1]{>{\centering\arraybackslash}p{#1}}
\newcolumntype{M}[1]{>{\centering\arraybackslash}m{#1}}
\newcolumntype{Y}{>{\centering\arraybackslash}X}
\newcolumntype{Z}{>{\raggedleft\arraybackslash}X}
\definecolor{hsamain}{RGB}{30,105,160}
\definecolor{hsarow}{RGB}{232,244,251}
\definecolor{hsaloss}{RGB}{178,72,64}
\newcommand{\hsacell}[1]{\cellcolor{hsarow}#1}
\newcommand{\gaincell}[2]{\begin{tabular}[c]{@{}c@{}}#1\\[-0.15ex]{\scriptsize\color{hsamain}(#2)}\end{tabular}}
\newcommand{\gainmaincell}[2]{\begin{tabular}[c]{@{}c@{}}#1\\[-0.15ex]{\fontsize{6}{6.5}\selectfont\color{hsamain}(#2)}\end{tabular}}

\newcommand{\gaininline}[2]{\mbox{#1\,{\scriptsize\color{hsamain}(#2)}}}
\newcommand{\lossinline}[2]{\mbox{#1\,{\scriptsize\color{hsaloss}(#2)}}}
\newcommand{\tworowhead}[1]{\multirow{2}{*}[-1.7ex]{#1}}
\newcommand{\gainrowcell}[1]{\begin{tabular}[c]{@{}c@{}}#1\end{tabular}}
\newcommand{\hsaTakeaway}[1]{%
  \par\nobreak
  \ifdim\prevdepth>-1000pt \vskip-\prevdepth \fi
  \vskip8pt\relax\nointerlineskip
  \begingroup
  \setlength{\fboxsep}{4pt}%
  \vbox{\hbox{\colorbox{gray!12}{\parbox{\dimexpr\linewidth-2\fboxsep\relax}{\small\textcolor{black}{\textbf{Takeaway.} #1}}}}\kern0pt}%
  \endgroup
  \vskip6pt\relax}
\newcommand{\icmlvenue}[1]{\emph{Proc. #1 Int. Conf. Mach. Learn.}}
\newcommand{\pmlrseries}{ser. \emph{Proc. Mach. Learn. Res.}}
\newcommand{\neuripsvenue}{\emph{Adv. Neural Inf. Process. Syst.}}
\newcommand{\cvprvenue}{\emph{Proc. IEEE/CVF Conf. Comput. Vis. Pattern Recognit.}}
\newcommand{\iclrvenue}{\emph{Int. Conf. Learn. Represent.}}
\hypersetup{
  pdftitle={Hub-Spectral Activation of Latent Multimodal Knowledge},
  pdfauthor={Ying Guo, Haidong Chen, Linrui Xu, Xiaohao Liu, Chuancheng Shi, Canran Xiao, Dan Zhang, Fei Shen, Li Shen, Tat-Seng Chua},
  pdfsubject={Hub-spectral activation in frozen multimodal representations}
}

\begin{document}

\title{Hub-Spectral Activation of Latent Multimodal Knowledge}

\author{
Ying~Guo$^{*}$,
Haidong~Chen$^{*}$,
    Linrui~Xu,
    Xiaohao~Liu,
    Chuancheng~Shi,
    Canran~Xiao, \\
    Dan Zhang,
    Fei Shen,~\IEEEmembership{Senior~Member,~IEEE},
    Li Shen,~\IEEEmembership{Senior~Member,~IEEE},
    and Tat-Seng Chua

\thanks{This work was supported in part by the Key Supported Program of Joint
Fund of the National Natural Science Foundation of China (Grant No. U23B2029)
and the Beijing Natural Science Foundation (Grant No. 4262004), and in part by the National Research Foundation, Singapore
under its National Large Language Models Funding Initiative
(AISG Award No: AISG-NMLP-2024-002).
Any opinions, findings and conclusions or recommendations expressed in this
material are those of the author(s) and do not reflect the views of
National Research Foundation, Singapore.
$^{*}$ denotes equal contribution.
(Corresponding author: Fei Shen.)}

\thanks{Ying Guo and Haidong Chen are with the Beijing Key Laboratory of Key Technologies for AI+ Domain Applications, North China University of Technology, Beijing 100144, China (e-mail: guoying@ncut.edu.cn; chenhaidong123@mail.ncut.edu.cn).}

\thanks{Linrui Xu is with the School of Geosciences and Info-Physics, Central South University, Changsha 410083, China (e-mail: xulinrui@csu.edu.cn).}

\thanks{Chuancheng Shi is with the School of
Computer Science, The University of Sydney, Australia
(e-mail: cshi0459@uni.sydney.edu.au).}

\thanks{Canran Xiao and Li Shen are with the School of Cyber Science and Technology, Shenzhen Campus of Sun Yat-sen University, China
(e-mail: xiaocr3@mail.sysu.edu.cn; mathshenli@gmail.com).}

\thanks{Xiaohao Liu, Dan Zhang, Fei Shen and Tat-Seng Chua are with the NExT++ Research
Centre, National University of Singapore, Singapore
(e-mail: xiaohao.liu@u.nus.edu, zhangdan25@nus.edu.sg, shenfei29@nus.edu.sg; dcscts@nus.edu.sg).}

}

\markboth{Hub-Spectral Activation of Latent Multimodal Knowledge}%
{Guo et al.: Hub-Spectral Activation of Latent Multimodal Knowledge}

\maketitle

\begin{abstract}
\color{black}
Multimodal representation learning seeks shared representations for cross-modal retrieval and knowledge transfer. 
Hub-based binding reduces pairwise supervision costs, but separate hub connections cannot guarantee reliable alignment between modalities without direct joint training. 
We introduce Hub-Spectral Activation (HSA), a closed-form method for recovering and activating the hub-readable component of latent multimodal knowledge in frozen representations. 
We formalize this knowledge as source-induced cross-modal dependence and characterize the component determined by the second-order statistics of two trained hub edges. 
Under a second-order source model, we establish conditions for exact recovery of the complete source-induced relation and bound the dimension of its hub-readable component by the hub covariance rank. 
HSA composes and standardizes hub-edge statistics, extracts paired spectral directions, and combines reliability-weighted matching evidence with source-gated candidate resolution for bidirectional retrieval and prototype classification. HSA requires no target-pair supervision, gradient optimization, or backbone updates. 
Across 19 retrieval and 11 prototype-classification relations on ImageBind and LanguageBind, HSA raises mean bidirectional Recall@10 from 18.27\% to 31.15\% and mean macro Top-1 accuracy from 29.01\% to 52.43\%, respectively. 
Controlled analyses further identify valid hub-edge correspondence and leading spectral directions as key sources of retrieval gains, demonstrating the utility of latent multimodal knowledge beyond native similarity scores.
Code and models are publicly available at \url{https://github.com/Luo1Yan/HSA}.
\end{abstract}

\begin{IEEEkeywords}
Cross-modal alignment, cross-modal classification, cross-modal retrieval, latent multimodal knowledge, multimodal binding, multimodal representation learning.
\end{IEEEkeywords}

\Needspace{5\baselineskip}
\section{Introduction}

{\color{black}
The same object or event~\cite{baltrusaitis2019multimodal,xu2023transformers,
zhu2024visionx,liu2020ntu120} can be observed through images, language, audio, depth, thermal signals, and motion. Multimodal representation learning organizes these heterogeneous observations so that information acquired in one modality can support matching and transfer in another \cite{lahat2015fusion,lu2023theory,zong2024survey}. Joint objectives learn such relations from paired or co-occurring observations \cite{l3net,cmc,mmv,avid,vatt,audioclip}. Direct target-pair training can provide precise alignment for a chosen relation, as illustrated in \hsaFigRef[(a)]{fig:hsa_overview} \cite{clip,align}. Extending this strategy to every modality pair requires paired data and objectives for a number of relations that grows quadratically with the number of modalities.
\par}

\begin{figure}[t]
\centering
\includegraphics[width=\linewidth]{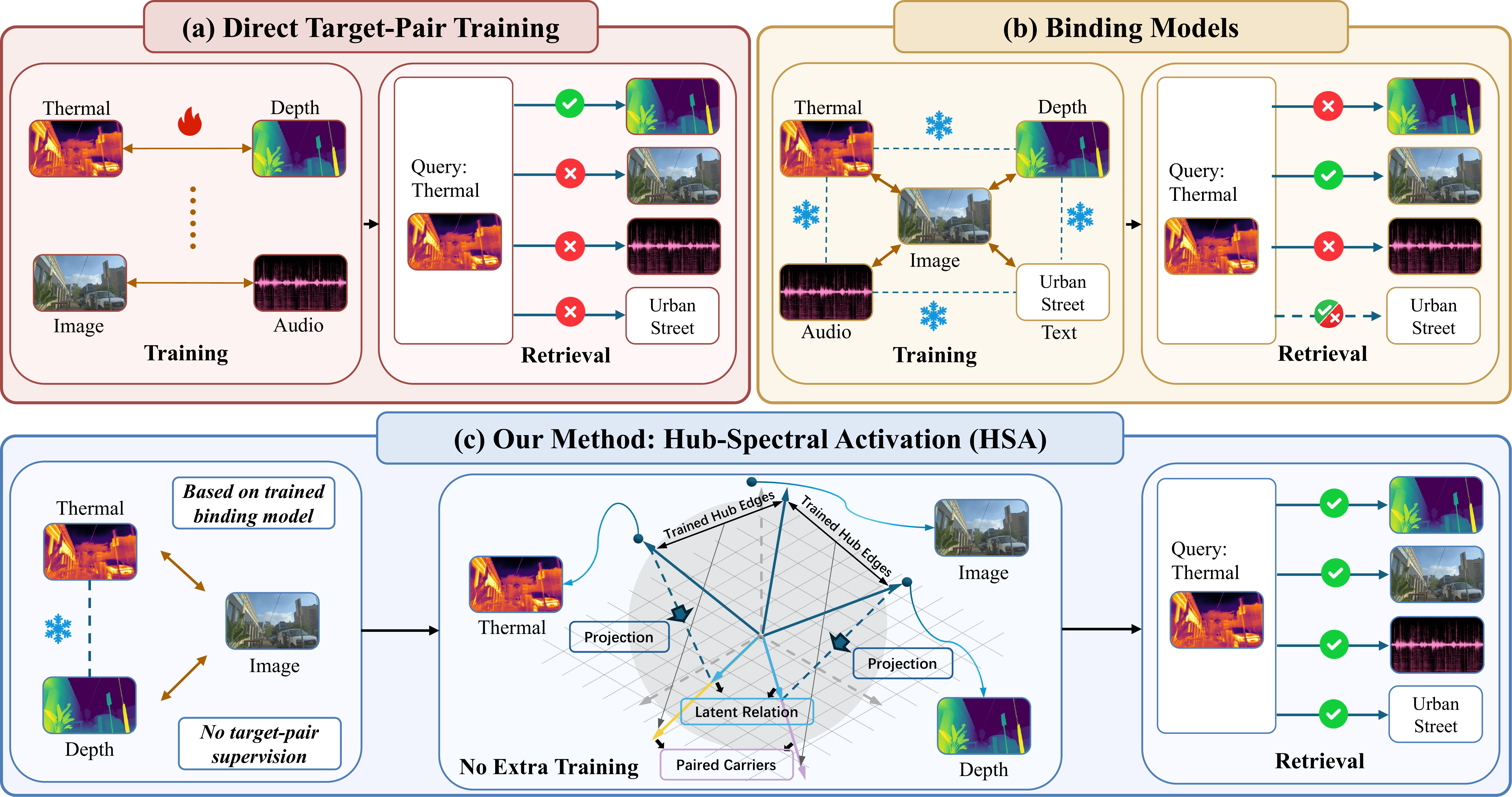}
\caption{\textcolor{black}{\textbf{From Multimodal Training to Hub-Spectral Activation.} (a) Target-pair training learns one relation from paired data. (b) Hub-based binding reduces pairwise supervision but leaves held-out retrieval uneven. (c) HSA reads the relation supported by two trained hub connections and produces retrieval rankings. HSA improves held-out retrieval in closed form, with frozen encoders and without positive target-pair examples or gradient optimization.}}
\label{fig:hsa_overview}
\end{figure}

{\color{black}
Hub-based binding models reduce this supervision burden by aligning each modality with a shared hub. ImageBind uses image to connect text, audio, depth, thermal, and inertial modalities \cite{imagebind}; LanguageBind uses language to connect video, infrared, depth, audio, and image \cite{languagebind}. Text-hub binding has also been extended to medical imaging modalities \cite{m3bind}. We call two non-hub modalities connected only through separately trained hub edges a \emph{held-out pair}. The indirect path can support transfer, yet native retrieval remains highly uneven, as illustrated in \hsaFigRef[(b)]{fig:hsa_overview}. This variability motivates a specific question: after binding-model training is complete, what relation between a held-out pair can be identified and activated using only its two trained hub connections?
\par}

{\color{black}
We answer this question with Hub-Spectral Activation (HSA), shown in \hsaFigRef[(c)]{fig:hsa_overview}. We define \emph{latent multimodal knowledge} as source-induced dependence preserved in frozen representations and identify the \emph{hub-readable component} determined by the second-order statistics of the two trained hub edges. HSA composes their moment systems, standardizes the resulting relation, and identifies paired spectral carriers that couple informative directions across the target spaces. It maps samples into these low-dimensional knowledge coordinates and combines reliability-weighted carrier agreement with source-gated candidate resolution. HSA thereby turns the recovered hub-readable relation into a basis for comparing candidates, supporting both bidirectional retrieval and prototype classification. Our analysis establishes when the hub-readable component equals the complete source-induced relation and further bounds its dimension by the rank of the hub representation. Every fitted quantity is obtained in closed form, without positive target-pair examples, pair identities, gradient optimization, or backbone updates.
\par}

{\color{black}
Existing approaches connect indirect modalities through target-pair correspondences, common anchors, overlapping modalities, unpaired target marginals, learned proxies, or target-encoder adaptation \cite{asif,relrep,cmcr,exmcr,realign,emergentbridge}. Related frozen-encoder methods still optimize lightweight maps from paired target data \cite{structure2025,freezealign2025}. HSA operates in a fully frozen post-training setting, using the two hub-edge datasets to construct and calibrate the held-out relation readout. Across nineteen ImageBind and LanguageBind retrieval relations, it raises mean Recall@10 from 18.27\% to 31.15\%, compared with 31.49\% for fixed-duration supervised target-pair training. Across the eleven prototype-classification relations, HSA estimates the target-modality relation and constructs its readout using the classification source features, raising mean macro Top-1 accuracy from 29.01\% to 52.43\%. Hub-edge shuffling, disjoint-source estimation, and carrier controls jointly show that HSA uses valid sample correspondence within the two hub edges to recover the hub-readable relation, with leading paired spectral carriers providing its main retrieval gain.
\par}

{\color{black}
Before delving into details, we summarize our contributions as follows:
\begin{itemize}
    \item We formalize latent multimodal knowledge as source-induced dependence, identify the component readable through two hub edges, and give its exact-recovery condition. A rank bound further quantifies the capacity available through the hub.
    \item We develop HSA, a closed-form readout that recovers the target-modality relation from two hub edges, extracts paired spectral carriers, and uses the resulting relation for bidirectional retrieval and prototype classification. Relation fitting uses no positive target-pair examples, gradient optimization, or backbone updates.
    \item Across nineteen retrieval relations and eleven classification relations from ImageBind and LanguageBind, HSA improves every reported relation-level point estimate over frozen cosine. Correspondence, disjoint-source, and spectral controls link the retrieval gains to valid hub-edge statistics and the leading paired carriers.
\end{itemize}
}

\section{Related Work}

\subsection{Building Unified Multimodal Spaces}

Unified multimodal spaces are learned with complementary objectives. Canonical-correlation methods learn shared coordinates in classical, multiset, and deep settings \cite{hotelling1936,kettenring1971,andrew2013deepcca,wang2015multiview}. Other approaches use reconstruction \cite{ngiam2011multimodal}, joint generative modeling \cite{srivastava2012multimodal}, or pairwise correspondence and contrastive alignment \cite{l3net,cmc,mmv,avid,vatt,audioclip,clip,align}. Vision-language pretraining provides an extensively studied setting for these
objectives \cite{zhang2024visiontasks}. ViLBERT, LXMERT, and UNITER learn
joint image-text representations through transformer-based interaction and
paired pretraining \cite{vilbert,lxmert,uniter}. FLAVA combines contrastive
alignment with multimodal fusion \cite{flava}; ALBEF aligns image and text
representations before cross-modal attention, while BLIP learns understanding
and generation from filtered and synthetic captions \cite{albef,blip}. Frozen
visual encoders can also support trainable cross-modal connections: LiT tunes
a text encoder against a locked image encoder, and BLIP-2 trains a querying
transformer between frozen image and language models \cite{lit,blip2}. Multimodal
conditioning also supports image-and-pose-guided generation in
IMAGPose~\cite{shen2024imagpose} and garment-conditioned synthesis in
IMAGDressing-v1~\cite{shen2025imagdressing}.

When modalities rarely co-occur, binding methods exploit shared structure across separately collected datasets \cite{unialign2025}. ImageBind uses image as the hub for text, audio, depth, thermal, and IMU, while LanguageBind uses language to connect video, infrared, depth, audio, and image \cite{imagebind,languagebind}. Medical binding similarly connects text, imaging, and physiological signals \cite{m3bind,medbind,probmed}. UniBind learns modality-agnostic alignment centers, and ULIP places point clouds, images, and language in one representation space \cite{unibind,ulip}. PMRL uses a rank-1 Gram objective for anchor-free simultaneous alignment; CalMRL handles missing observations through anchor-shift analysis and representation-level imputation \cite{pmrl,calmrl}. These methods construct or adapt a unified space. Their objectives determine what cross-modal structure is encoded and how it is organized. HSA operates after training and reads the hub-readable component of a held-out relation from two frozen hub edges.

\subsection{Connecting Indirectly Related Modalities}

Methods that connect indirectly related modalities differ in the information available for alignment. ReAlign uses training-free Anchor, Trace, and Centroid Alignment to map text embeddings toward the image distribution from large unpaired image and text marginals \cite{realign}. ASIF forms a coupled dictionary from paired target examples \cite{asif}. C-MCR and Ex-MCR connect independently trained spaces through an overlapping modality \cite{cmcr,exmcr}, whereas FreeBind and OmniBind fuse pretrained expert spaces \cite{freebind,omnibind}. COX couples known and unseen modalities without instance-level pairs by learning variational bottlenecks and emergent correspondences \cite{cox2025}. TextME projects unseen modalities into an LLM embedding space from text descriptions \cite{textme}. Related frozen-encoder methods learn lightweight maps from limited paired supervision \cite{structure2025,freezealign2025}. Together, these approaches use target pairs, overlapping modalities, text, unpaired marginals, or learned correspondences. Their supervision and model-update requirements therefore vary across settings.

EmergentBridge strengthens weak emergent transfer by learning a proxy generator from
an anchor to an already aligned modality \cite{emergentbridge}. It then adapts the target-modality encoder with anchor-paired data and an orthogonal-subspace proxy objective, while keeping the anchor encoder frozen. This procedure uses no direct pairs between the evaluated modalities, but it requires proxy training and gradient-based target-encoder adaptation. HSA keeps the hub and both target encoders frozen. Its fit uses only the existing $A$--$H$ and $H$--$B$ hub-edge datasets and excludes $A$--$B$ identities, target losses, gradient optimization, and target-informed rank selection.

\subsection{Linear Recovery and Representation Geometry}

Analyses of frozen representations examine the structure retained across models and modalities~\cite{klabunde2025similarity,li2015convergent,boix2022gulp,bansal2021stitching,lenc2015equivalence}. Representational similarity analysis, canonical-correlation comparisons, centered-kernel alignment, and statistical tests compare internal organization \cite{rsa,svcca,pwcca,cka,visionlanguagesim2024,williams2021shape,ding2021grounding}. Modality-gap and cross-encoder studies show that semantic structure can coexist with modality-dependent separation \cite{visionlanguagesim2024,modalitygap}. The Platonic Representation Hypothesis and subsequent matching work study compatible relational geometry across independently learned representations \cite{platonic,blindmatch2025}. Alignment also depends on modality similarity and the balance of redundant and unique information \cite{emergencealignment}. Relative representations expose shared organization by expressing samples through similarities to common anchors \cite{relrep}. These studies characterize cross-modal compatibility; HSA constructs a held-out retrieval rule from two frozen hub edges.

Classical linear methods offer transparent tools for cross-space recovery. CCA finds correlated coordinates from paired two-view observations, with multiset extensions for more than two views \cite{hotelling1936,kettenring1971}. Ridge regression stabilizes linear prediction under correlated coordinates \cite{hoerl1970ridge}, and Orthogonal Procrustes estimates an optimal orthogonal map from paired correspondences \cite{schonemann1966}. Latent Space Translation derives a closed-form transformation from parallel semantic anchors that denote the same high-level concept in both spaces \cite{latenttranslation}. We use these constructions as protocol references. Bidirectional ridge~\cite{hoerl1970ridge} and Procrustes~\cite{schonemann1966} map
each target independently into the hub space, while Paired Orthogonal Procrustes
(Paired-OP)~\cite{schonemann1966} solves one orthogonal map per retrieval direction
from paired $A$--$B$ rows.
Building on these classical linear tools, HSA studies which component of latent multimodal knowledge can be identified from the second-order statistics of two trained hub connections. It characterizes the recovery limits of this component and uses it for held-out retrieval and prototype classification, connecting representation analysis with cross-modal transfer without direct target-pair supervision or backbone updates.

\section{Preliminary}
\label{sec:preliminaries}

\noindent\textbf{Held-Out Hub Setting.}
A real-world object, scene, or event may give rise to several modality-specific observations that retain dependence through their common source. We consider two target modalities $A$ and $B$ that are connected during training only through a hub modality $H$. Let $O_i$ denote the common source and let $\mathbf L_i=\ell(O_i)\in\mathbb R^q$ collect the factors shared across its observations. For each $m\in\{A,H,B\}$, $X_i^m$ is a partial observation of $O_i$, and a frozen encoder maps it into the common $d$-dimensional binding space:
\begin{equation}
\begin{gathered}
O_i\sim\mathcal P_O,
\qquad
X_i^m\mid O_i\sim p_m(\cdot\mid O_i),
\qquad
\mathbf L_i=\ell(O_i),\\
\mathbf a_i=\operatorname{nrm}(f_A(X_i^A)),
\qquad
\mathbf h_i=\operatorname{nrm}(f_H(X_i^H)),\\
\mathbf b_i=\operatorname{nrm}(f_B(X_i^B)).
\end{gathered}
\label{eq:hsa_source_model}
\end{equation}
Although fitting receives no direct $A$--$B$ pairs, the target representations can retain a source-induced relation through $\mathbf L_i$. We formalize this relation next.
\par

\begin{definition}[Source-induced latent multimodal knowledge]
\label{def:latent_knowledge} 
For square-integrable frozen representations $\mathbf A$ and $\mathbf B$ from a common source, latent multimodal knowledge is the cross-covariance of their source-conditioned means:
\begin{equation}
\mathcal K_{AB}
:=\operatorname{Cov}\!\left(
\mathbb E[\mathbf A\mid\mathbf L],
\mathbb E[\mathbf B\mid\mathbf L]
\right).
\label{eq:hsa_knowledge_relation}
\end{equation}
It collects the dependence carried by shared source factors in the two frozen target spaces.
\end{definition}

\noindent\textbf{Available Information.}
\textcolor{black}{During fitting, HSA receives only two paired hub-edge datasets,}
\begin{equation*}
\mathcal D_{AH}
=\{(\mathbf a_i,\mathbf h_i^{AH})\}_{i=1}^{n_{AH}},
\qquad
\mathcal D_{HB}
=\{(\mathbf h_j^{HB},\mathbf b_j)\}_{j=1}^{n_{HB}}.
\end{equation*}
HSA estimates the two edge-moment systems separately, without requiring cross-edge row correspondence. The edge datasets may therefore use disjoint source instances. When synchronized observations are available, one source row may instead contribute a pair to each dataset, so the two hub representations can coincide. HSA summarizes each hub edge through its moments, including $\Sigma_{AH}$ and $\Sigma_{HB}$, without forming $\Sigma_{AB}$ or an $A$--$B$ training objective. At population level, the available information is
\begin{equation}
\begin{aligned}
\mathfrak S_H=\{&\boldsymbol\mu_A,\boldsymbol\mu_H,\boldsymbol\mu_B,
\Sigma_{AA},\Sigma_{AH},\\
&\Sigma_{HH},\Sigma_{HB},\Sigma_{BB}\}.
\end{aligned}
\label{eq:hsa_information_interface}
\end{equation}
Its empirical counterpart, $\widehat{\mathfrak S}_H$, supplies the second-order statistics for estimating the hub-readable relation. HSA uses the two hub-edge datasets for rank selection and score calibration. Relation fitting excludes the empirical $A$--$B$ cross-covariance, an $A$--$B$ training loss, target labels, and target-informed rank selection. The statistical task is to identify the component of $\mathcal K_{AB}$ fixed by $\mathfrak S_H$ and convert it into a score for matching new sample pairs.

\noindent\textbf{Second-Order Source Model.}
To connect the shared factors, frozen representations, and observed hub edges, we use a second-order source model. The model permits target-specific residual dependence while requiring both hub edges to reflect the shared factors consistently. Intuitively, the shared factors $\mathbf L$ link the two observed hub edges, while $A$ and $B$ may retain residual dependence beyond HSA's stated recovery scope.
\begin{assumption}[\textcolor{black}{Second-order consistency across the two hub edges}]
\label{ass:hsa_source_model}
All variables have finite second moments, $\mathbb E[\mathbf L]=\mathbf0$, and $\operatorname{Cov}(\mathbf L)=I_q$. For each
$(m,\mathbf M)\in\{(A,\mathbf A),(H,\mathbf H),(B,\mathbf B)\}$,
$\mathbb E[\mathbf M\mid\mathbf L]=\boldsymbol\mu_m+G_m\mathbf L$, and
$\boldsymbol\varepsilon_m:=\mathbf M-\mathbb E[\mathbf M\mid\mathbf L]$ satisfies
$\mathbb E[\boldsymbol\varepsilon_m\mid\mathbf L]=\mathbf0$.
The \textcolor{black}{observed hub edges} satisfy
$\operatorname{Cov}(\boldsymbol\varepsilon_A,\boldsymbol\varepsilon_H)=0$ and
$\operatorname{Cov}(\boldsymbol\varepsilon_H,\boldsymbol\varepsilon_B)=0$.
Their samples provide compatible estimates of the population means and the second-order statistics of the hub.
\end{assumption}

\hsaAssumptionRef{ass:hsa_source_model} yields
\begin{equation}
\begin{array}{@{}r@{\;}c@{\;}l@{\qquad}r@{\;}c@{\;}l@{}}
\mathbf A & = & \boldsymbol\mu_A+G_A\mathbf L+\boldsymbol\varepsilon_A,
& \mathbf H & = & \boldsymbol\mu_H+G_H\mathbf L+\boldsymbol\varepsilon_H,\\
\mathbf B & = & \boldsymbol\mu_B+G_B\mathbf L+\boldsymbol\varepsilon_B,
& \mathcal K_{AB} & = & G_AG_B^\top.
\end{array}
\label{eq:hsa_factor_knowledge}
\end{equation}
The population target cross-covariance may also contain target-residual dependence:
$\Sigma_{AB}=\mathcal K_{AB}+\Omega_{AB}$, where
$\Omega_{AB}:=\operatorname{Cov}(\boldsymbol\varepsilon_A,\boldsymbol\varepsilon_B)$ is unrestricted. This separation ties the recovery target to the common source.
The derivation of this decomposition and the exact information interface supplied by the
two hub edges are detailed in the appendix.

\section{Hub-Spectral Activation}
\label{sec:method}

HSA proceeds from identification to task readout, as summarized in \hsaFigRef{fig:hsa_method_overview}. It first isolates the relation determined by the two observed hub edges, identifies paired spectral carriers that preserve this relation, converts their sample-wise evidence into a calibrated score, and applies that score to retrieval rankings or class prototypes. During fitting, HSA estimates the carrier matrices from sample
statistics of the two hub-edge datasets. During task readout,
these matrices remain fixed as each input sample is mapped to
a carrier coordinate vector for query--candidate scoring.
\par

\begin{figure*}[t]
\centering
\includegraphics[width=\textwidth]{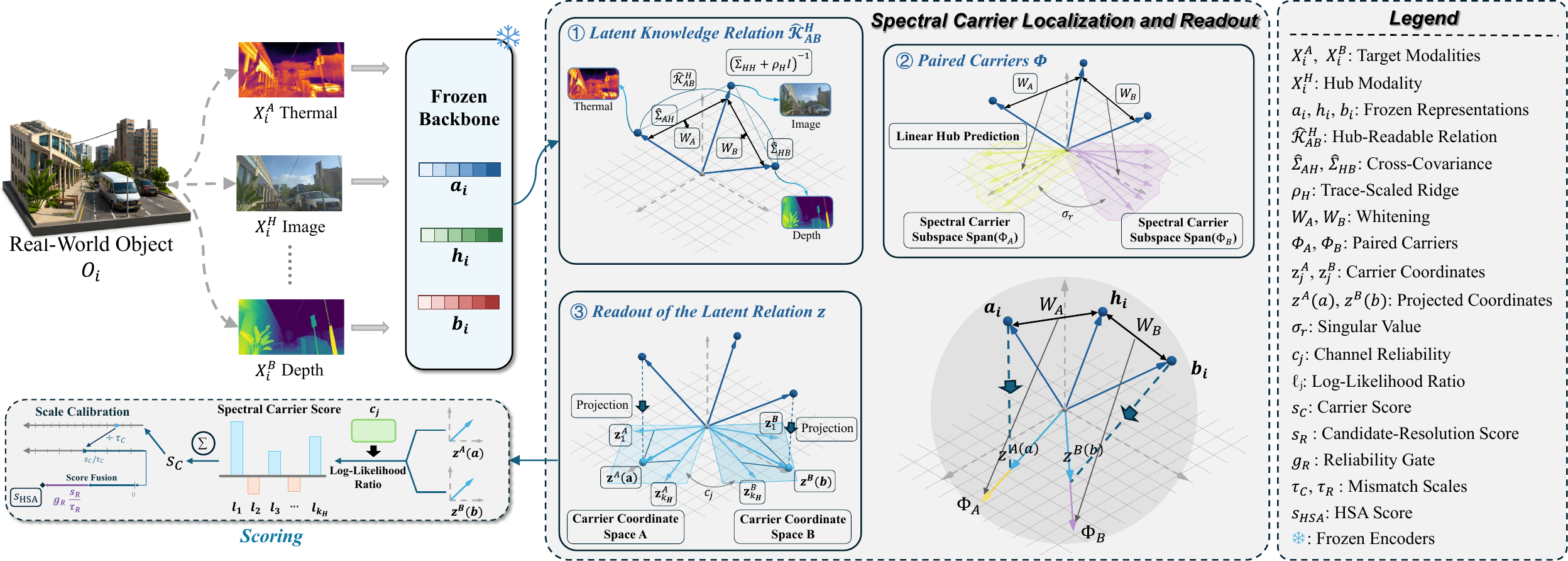}
\caption{\textbf{Overview of Hub-Spectral Activation.} Frozen representations from the observed $A$--$H$ and $H$--$B$ hub edges provide the moment systems used by HSA. Stages \ding{172}--\ding{174} construct and standardize the hub-readable relation $\widehat{\mathcal K}_{AB}^{H}$, identify paired carriers $\Phi_A$ and $\Phi_B$, and project candidates into readout coordinates $\mathbf z^A$ and $\mathbf z^B$. Their strength-aware agreement defines the carrier score $s_C$, which is calibrated
and combined with gated candidate resolution to form $s_{\mathrm{HSA}}$ for retrieval
and prototype classification.}
\label{fig:hsa_method_overview}
\end{figure*}

\subsection{\textcolor{black}{Hub-Readable Relation and Capacity}}
\label{sec:method_readout}

\noindent\textbf{Hub-Readable Relation.}
The available information contains the two observed hub edges while omitting the target cross-relation. The following theorem identifies the component determined by these edges and states when that component recovers the complete relation induced by the common source.
\begin{theorem}[Identifiable hub-readable latent knowledge]
\label{thm:hub_readable}
Under \hsaAssumptionRef{ass:hsa_source_model}, let
$\Psi_H=\operatorname{Cov}(\boldsymbol\varepsilon_H)$ and define
\begin{equation}
\begin{array}{@{}r@{\;}r@{\;}l@{}}
\mathcal P_H
& = & G_H^\top(G_HG_H^\top+\Psi_H)^{\dagger}G_H,
\qquad 0\preceq\mathcal P_H\preceq I_q,\\
\mathbf A^H
& = & \Sigma_{AH}\Sigma_{HH}^{\dagger}
(\mathbf H-\boldsymbol\mu_H),\\
\mathbf B^H
& = & \Sigma_{BH}\Sigma_{HH}^{\dagger}
(\mathbf H-\boldsymbol\mu_H),\\
\boxed{\mathcal K_{AB}^{H}}
& := & \Sigma_{AH}\Sigma_{HH}^{\dagger}\Sigma_{HB},\\
& = & \operatorname{Cov}(\mathbf A^H,\mathbf B^H)
=G_A\mathcal P_HG_B^\top.
\end{array}
\label{eq:hsa_hub_visible_knowledge}
\end{equation}
Then
\begin{equation}
\mathcal K_{AB}
=\mathcal K_{AB}^{H}
+G_A(I_q-\mathcal P_H)G_B^\top,
\label{eq:hsa_knowledge_decomposition}
\end{equation}
and exact hub readout holds if and only if
$G_A(I_q-\mathcal P_H)G_B^\top=0$.
The component $\mathcal K_{AB}^{H}$ is identified by $\mathfrak S_H$, while the same interface generally underidentifies $\mathcal K_{AB}$.
\end{theorem}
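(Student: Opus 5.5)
The plan is to compute all second moments from the factor representation \hsaEqRef{eq:hsa_factor_knowledge} and then read off each claim.

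\textbf{Moment identities.} By \hsaAssumptionRef{ass:hsa_source_model}, $\operatorname{Cov}(\mathbf L)=I_q$. The residuals satisfy $\mathbb E[\boldsymbol\varepsilon_m\mid\mathbf L]=\mathbf0$, so $\operatorname{Cov}(G_m\mathbf L,\boldsymbol\varepsilon_{m'})=0$ for every pair of modalities. Together with the two edge conditions $\operatorname{Cov}(\boldsymbol\varepsilon_A,\boldsymbol\varepsilon_H)=0$ and $\operatorname{Cov}(\boldsymbol\varepsilon_H,\boldsymbol\varepsilon_B)=0$, this gives
\[
\Sigma_{AH}=G_AG_H^\top,\qquad \Sigma_{HB}=G_HG_B^\top,\qquad \Sigma_{HH}=G_HG_H^\top+\Psi_H .
\]
Substituting,
\[
\Sigma_{AH}\Sigma_{HH}^{\dagger}\Sigma_{HB}=G_A\,G_H^\top(G_HG_H^\top+\Psi_H)^{\dagger}G_H\,G_B^\top=G_A\mathcal P_HG_B^\top .
\]
For the covariance form, note that $\mathbf A^H$ and $\mathbf B^H$ are linear in the centered hub. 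Hence
\[
\operatorname{Cov}(\mathbf A^H,\mathbf B^H)=\Sigma_{AH}\Sigma_{HH}^{\dagger}\Sigma_{HH}\Sigma_{HH}^{\dagger}\Sigma_{HB}=\Sigma_{AH}\Sigma_{HH}^{\dagger}\Sigma_{HB},
\]
using the Moore--Penrose identity $\Sigma^\dagger\Sigma\Sigma^\dagger=\Sigma^\dagger$ and symmetry of $\Sigma_{HH}^\dagger$.

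\textbf{Bounds on $\mathcal P_H$.} Set $M=G_HG_H^\top+\Psi_H\succeq0$. Then $\mathcal P_H=G_H^\top M^\dagger G_H\succeq0$ is immediate. The upper bound $\mathcal P_H\preceq I_q$ is the main technical point, because $\Psi_H$ may be singular and the pseudoinverse must be handled carefully. The first step is to note that the range of $G_H$ lies in the range of $M$, since $M\succeq G_HG_H^\top$. Therefore $MM^\dagger G_H=G_H$. Next, write $\mathcal P_H=\operatorname{Cov}(G_H^\top M^\dagger(\mathbf H-\boldsymbol\mu_H),\mathbf L)$. This is the covariance of $\mathbf L$ with its best linear predictor from $\mathbf H$, since $\operatorname{Cov}(\mathbf H,\mathbf L)=G_H$. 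By the standard projection argument it equals $\operatorname{Cov}(\widehat{\mathbf L})$, where $\widehat{\mathbf L}=G_H^\top M^\dagger(\mathbf H-\boldsymbol\mu_H)$. The orthogonality decomposition then gives
\[
I_q=\operatorname{Cov}(\mathbf L)=\operatorname{Cov}(\widehat{\mathbf L})+\operatorname{Cov}(\mathbf L-\widehat{\mathbf L})\succeq\mathcal P_H .
\]
To verify the orthogonality, I would use
\[
\operatorname{Cov}(\widehat{\mathbf L},\mathbf L-\widehat{\mathbf L})=G_H^\top M^\dagger G_H-G_H^\top M^\dagger MM^\dagger G_H=0 .
\]

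\textbf{Decomposition and exactness.} Since $\mathcal K_{AB}=G_AG_B^\top$ by \hsaEqRef{eq:hsa_factor_knowledge}, subtracting $G_A\mathcal P_HG_B^\top$ yields \hsaEqRef{eq:hsa_knowledge_decomposition}. Exact readout, meaning $\mathcal K_{AB}=\mathcal K_{AB}^H$, is then equivalent to the remainder term vanishing.

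\textbf{Identification and non-identification.} Identification holds because $\mathcal K^H_{AB}$ is an explicit function of $\Sigma_{AH}$, $\Sigma_{HH}$ and $\Sigma_{HB}$, all of which lie in $\mathfrak S_H$. For generic underidentification, I would construct two models with the same $\mathfrak S_H$ but different $\mathcal K_{AB}$. Take $q=2$, with $\mathbf H$ loading only on $L_1$. Let $A$ and $B$ each load on both factors, and choose the $L_2$ loadings $g_A$, $g_B$ with $g_Ag_B^\top$ varying. Adjust $\operatorname{Cov}(\boldsymbol\varepsilon_A)$ and $\operatorname{Cov}(\boldsymbol\varepsilon_B)$ to keep $\Sigma_{AA}$ and $\Sigma_{BB}$ fixed, for example by moving variance between $g_A$ and the residual. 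The hub moments are unchanged, but $\mathcal K_{AB}$ changes.
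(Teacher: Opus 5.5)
Your proof is correct and follows essentially the paper's route: the edge moment identities, the best-linear-predictor orthogonality argument giving $\operatorname{Cov}(\mathbf L-\widehat{\mathbf L})=I_q-\mathcal P_H\succeq 0$, and subtraction from $\mathcal K_{AB}=G_AG_B^\top$. Your range step $MM^\dagger G_H=G_H$ is not needed, because $M^\dagger MM^\dagger=M^\dagger$ already cancels the cross term.

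The one real difference is the underidentification example. The paper uses the scalar, zero-residual correlation family $\Gamma(t)$, in which $\mathcal K_{AB}=t$ sweeps an explicit interval while $\mathcal K_{AB}^H=\alpha\beta$ stays fixed. Your two-factor model, with a factor $L_2$ invisible to the hub, is equally valid and shows the unidentified remainder $G_A(I_q-\mathcal P_H)G_B^\top$ directly. It is even simpler if you just flip the sign of $g_A$, which leaves $\Sigma_{AA}$ unchanged without adjusting any residual.
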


The matrix product in \hsaEqRef{eq:hsa_hub_visible_knowledge} is the covariance of the two best linear hub predictions, giving it a classical regression interpretation. \hsaTheoremRef{thm:hub_readable} specifies its role under the restricted information interface: $\mathcal P_H$ measures how much of the shared source state is linearly resolved by the frozen hub, and $\mathcal K_{AB}^{H}$ transports that resolved component into the two target spaces. \hsaEquationRef{eq:hsa_knowledge_decomposition} fixes the claim boundary by separating this identified component from the generally underidentified remainder of $\mathcal K_{AB}$. The theorem's proof and an explicit non-identifiability construction are provided in the appendix.
\par

\noindent\textbf{Hub Capacity.}
The same result limits how many independent relation channels the hub can expose.
\begin{corollary}[Hub-capacity bound]
\label{cor:hub_capacity}
The hub-readable relation satisfies
\begin{equation}
\operatorname{rank}(\mathcal K_{AB}^{H})
\leq \operatorname{rank}(\Sigma_{HH}).
\label{eq:hsa_hub_rank_bound}
\end{equation}
If the hub representation has at most $N_H$ distinct population states, then
$\operatorname{rank}(\mathcal K_{AB}^{H})\leq N_H-1$.
\end{corollary}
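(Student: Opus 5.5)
The plan is to read the rank bound directly off the factored form $\mathcal K_{AB}^{H}=\Sigma_{AH}\Sigma_{HH}^{\dagger}\Sigma_{HB}$ from \hsaTheoremRef{thm:hub_readable}. Rank is submultiplicative, $\operatorname{rank}(XYZ)\le\min\{\operatorname{rank}X,\operatorname{rank}Y,\operatorname{rank}Z\}$, and the pseudoinverse satisfies $\operatorname{rank}(\Sigma_{HH}^{\dagger})=\operatorname{rank}(\Sigma_{HH})$. Together these give \hsaEqRef{eq:hsa_hub_rank_bound}. Equivalently, $\mathcal K_{AB}^{H}=\operatorname{Cov}(\mathbf A^H,\mathbf B^H)$ with $\mathbf A^H=\Sigma_{AH}\Sigma_{HH}^{\dagger}(\mathbf H-\boldsymbol\mu_H)$. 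Since $\mathbf H-\boldsymbol\mu_H$ lies almost surely in $\operatorname{range}(\Sigma_{HH})$, both linear predictions are functions of a random vector confined to a subspace of dimension $\operatorname{rank}(\Sigma_{HH})$. This gives the same conclusion and the more interpretable "channel" reading. I would also note $\Sigma_{HB}=\Sigma_{BH}^\top$, so the third factor has the same rank as $\Sigma_{BH}$; this plays no role in the bound.

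For the second claim, suppose $\mathbf H$ takes at most $N_H$ distinct values $\mathbf h^{(1)},\dots,\mathbf h^{(N_H)}$ with probabilities $\pi_k$. Write $\Sigma_{HH}=\sum_k\pi_k(\mathbf h^{(k)}-\boldsymbol\mu_H)(\mathbf h^{(k)}-\boldsymbol\mu_H)^\top$. Its column space is spanned by the centered vectors $\mathbf h^{(k)}-\boldsymbol\mu_H$. These vectors satisfy the linear relation $\sum_k\pi_k(\mathbf h^{(k)}-\boldsymbol\mu_H)=\mathbf 0$ with all $\pi_k>0$ on the support. Hence any one of them is a combination of the others, and the span has dimension at most $N_H-1$. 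Combining this with the first part yields $\operatorname{rank}(\mathcal K_{AB}^{H})\le N_H-1$.

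The only real subtlety is the discrete case. I need to restrict to states with positive probability, so that the centering relation has nonzero coefficients and genuinely removes a dimension. If a state has zero probability, I drop it, since it neither contributes to $\Sigma_{HH}$ nor increases the count. I would also remark that the bound is stated at population level. The same argument applies verbatim to the empirical $\widehat\Sigma_{HH}$ with $N_H$ replaced by the number of distinct hub rows, which gives the practical cap on usable spectral carriers.
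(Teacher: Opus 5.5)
Your proof is correct and follows the paper's own argument: the rank inequality for the product $\Sigma_{AH}\Sigma_{HH}^{\dagger}\Sigma_{HB}$ together with $\operatorname{rank}(\Sigma_{HH}^{\dagger})=\operatorname{rank}(\Sigma_{HH})$, then the fact that the centered support of $\mathbf H$ spans at most $N_H-1$ dimensions. You justify that last fact explicitly through $\sum_k\pi_k(\mathbf h^{(k)}-\boldsymbol\mu_H)=\mathbf 0$, whereas the paper simply asserts it. One refinement to your closing remark: HSA's estimator uses the full-rank ridge inverse $(\overline\Sigma_{HH}+\rho_HI)^{-1}$, so the empirical cap does not pass through $\widehat\Sigma_{HH}$. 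It passes instead through the factors $\widehat\Sigma_{AH}$ and $\widehat\Sigma_{HB}$, whose ranks are at most the per-edge counts of distinct hub rows minus one, which is consistent with the per-edge minimum in the paper's $k_H$ rule.
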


\subsection{\textcolor{black}{Spectral Carrier Identification}}
\label{sec:method_carrier}

\noindent\textbf{\ding{172} Standardized Hub Operator.}
HSA next estimates and standardizes this operator. Applying the standard leading-singular-subspace variational
principle~\cite{golub2013matrix} yields paired low-dimensional directions that preserve the greatest relation strength at a fixed dimension.
Let $\widehat\Sigma_{HH}^{A}$ and $\widehat\Sigma_{HH}^{B}$ denote the hub covariances estimated on the two edges. HSA pools them with equal edge weight as
$\overline\Sigma_{HH}=\tfrac12(\widehat\Sigma_{HH}^{A}+\widehat\Sigma_{HH}^{B})$.
It uses the trace-scaled ridges
$\rho_X=\lambda\operatorname{tr}(\widehat\Sigma_{XX})/d_X$ for $X\in\{A,B\}$ and
$\rho_H=\lambda\operatorname{tr}(\overline\Sigma_{HH})/d_H$.
The empirical hub-readable relation and its standardized decomposition are
\begin{equation}
\begin{array}{@{}r@{\;}r@{\;}l@{}}
\widehat{\mathcal K}_{AB}^{H}
& = & \widehat\Sigma_{AH}
(\overline\Sigma_{HH}+\rho_HI)^{-1}
\widehat\Sigma_{HB},\\
T_H
& := & W_A\widehat{\mathcal K}_{AB}^{H}W_B^\top
=U_C\operatorname{diag}(\sigma_1,\ldots,\sigma_r)V_C^\top,\\
W_X & = & (\widehat\Sigma_{XX}+\rho_XI)^{-1/2}.
\end{array}
\label{eq:hsa_spectral_carrier}
\end{equation}
A rounded-state empirical proxy motivated by \hsaCorollaryRef{cor:hub_capacity} sets a heuristic cap on the carrier count $k_H$. Details of this rule and its numerical safeguards are provided in the appendix.

\noindent\textbf{\ding{173} Optimal Paired Carriers.}
\begin{proposition}[Optimal paired spectral carriers]
\label{prop:optimal_carrier}
For any $1\leq k\leq r$, among all $k$-dimensional orthonormal paired projections in the standardized target spaces,
\begin{equation}
\max_{\substack{P^\top P=I_k\\Q^\top Q=I_k}}
\operatorname{tr}(P^\top T_HQ)
=\sum_{j=1}^{k}\sigma_j.
\label{eq:hsa_carrier_optimality}
\end{equation}
The maximum is attained by $P=U_{C,k}$ and $Q=V_{C,k}$. Mapping these directions to the original target coordinates as
$\Phi_A^{(k)}=W_AU_{C,k}$ and
$\Phi_B^{(k)}=W_BV_{C,k}$ gives
\begin{equation}
(\Phi_A^{(k)})^\top
\widehat{\mathcal K}_{AB}^{H}
\Phi_B^{(k)}
=\operatorname{diag}(\sigma_1,\ldots,\sigma_k).
\label{eq:hsa_carrier_diagonalization}
\end{equation}
\end{proposition}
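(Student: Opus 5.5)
The plan is to reduce the carrier-optimality claim to von Neumann's trace inequality, which is a clean way to state the leading-singular-subspace variational principle the paper cites from \cite{golub2013matrix}. Throughout, write $T_H=U_C\Sigma_rV_C^\top$ with $\Sigma_r=\operatorname{diag}(\sigma_1,\ldots,\sigma_r)$, $\sigma_1\ge\cdots\ge\sigma_r>0$, and $U_C,V_C$ having orthonormal columns. Here $r=\operatorname{rank}(T_H)$; since $W_A,W_B$ are positive definite because of the ridges, $r=\operatorname{rank}(\widehat{\mathcal K}_{AB}^{H})$.

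\emph{Upper bound.} For feasible $P$ and $Q$, write
\[
\operatorname{tr}(P^\top T_HQ)=\operatorname{tr}(T_H\,QP^\top).
\]
The matrix $QP^\top$ has exactly $k$ singular values equal to one, and all its others are zero. Von Neumann's inequality $|\operatorname{tr}(XY)|\le\sum_j\sigma_j(X)\sigma_j(Y)$ then gives $\operatorname{tr}(P^\top T_HQ)\le\sum_{j=1}^k\sigma_j$.

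If I prefer to avoid citing von Neumann directly, I can argue elementarily instead. Set $M=U_C^\top P$ and $N=V_C^\top Q$, so that
\[
\operatorname{tr}(P^\top T_HQ)=\sum_{j=1}^r\sigma_j\,(MN^\top)_{jj}.
\]
Cauchy--Schwarz gives $|(MN^\top)_{jj}|\le\tfrac12\bigl(\|M_{j\cdot}\|^2+\|N_{j\cdot}\|^2\bigr)=:c_j$. Since $M^\top M\preceq I_k$ and $N^\top N\preceq I_k$, each row norm is at most one, so $c_j\in[0,1]$. Also $\sum_j c_j\le k$, because $\|M\|_F^2\le k$ and $\|N\|_F^2\le k$. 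Maximizing the linear function $\sum_j\sigma_jc_j$ subject to $0\le c_j\le1$ and $\sum_j c_j\le k$, with the $\sigma_j$ sorted in decreasing order, gives $\sum_{j\le k}\sigma_j$. This is a simple rearrangement argument.

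The only delicate step is this bound. It requires the weight constraints $c_j\le1$ and $\sum_j c_j\le k$ together, so I would present it via von Neumann and keep the rearrangement argument as the self-contained fallback.

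\emph{Attainment.} Take $P=U_{C,k}$ and $Q=V_{C,k}$. Orthonormality of the singular vectors gives $U_{C,k}^\top T_HV_{C,k}=\operatorname{diag}(\sigma_1,\ldots,\sigma_k)$. Its trace is $\sum_{j\le k}\sigma_j$, so the bound is attained.

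\emph{Diagonalization.} By definition, $\widehat{\mathcal K}_{AB}^{H}=W_A^{-1}T_HW_B^{-\top}$. Since $W_A$ and $W_B$ are symmetric and invertible, substituting $\Phi_A^{(k)}=W_AU_{C,k}$ and $\Phi_B^{(k)}=W_BV_{C,k}$ gives
\[
(\Phi_A^{(k)})^\top\widehat{\mathcal K}_{AB}^{H}\Phi_B^{(k)}=U_{C,k}^\top W_AW_A^{-1}T_HW_B^{-1}W_BV_{C,k}=U_{C,k}^\top T_HV_{C,k}.
\]
This equals $\operatorname{diag}(\sigma_1,\ldots,\sigma_k)$ by the attainment step.
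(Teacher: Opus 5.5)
Your proposal is correct and follows the paper's proof. The bound is von Neumann's trace inequality applied to $T_H$ and the rank-$k$ partial isometry $QP^\top$ (the paper writes this as $\langle T_H,PQ^\top\rangle_F$), equality holds at $P=U_{C,k}$ and $Q=V_{C,k}$, and the diagonalization follows by substituting through the symmetric whiteners. Your rearrangement fallback is a valid self-contained proof of the needed special case. Your detour through $W_A^{-1}$ and $W_B^{-1}$ is harmless but unnecessary, because $(\Phi_A^{(k)})^\top\widehat{\mathcal K}_{AB}^{H}\Phi_B^{(k)}=U_{C,k}^\top W_A\widehat{\mathcal K}_{AB}^{H}W_B^\top V_{C,k}=U_{C,k}^\top T_HV_{C,k}$ needs only the symmetry of $W_A$ and $W_B$.
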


\noindent\textbf{\ding{174} Carrier Coordinates.}
\hsaPropositionRef{prop:optimal_carrier} turns the recovered dense operator into ordered coordinate-wise evidence: the chosen pair of subspaces maximizes retained hub-readable relation strength, and each resulting channel has the known strength $\sigma_j$. HSA sets $k=k_H$, writes $\Phi_A=\Phi_A^{(k_H)}$ and $\Phi_B=\Phi_B^{(k_H)}$, and defines the projected knowledge coordinates
\begin{equation}
\mathbf z^A(\mathbf a)
=\Phi_A^\top(\mathbf a-\widehat{\boldsymbol\mu}_A),
\qquad
\mathbf z^B(\mathbf b)
=\Phi_B^\top(\mathbf b-\widehat{\boldsymbol\mu}_B).
\label{eq:hsa_knowledge_readout}
\end{equation}
Each of the $k_H$ columns of $\Phi_A$ represents a selected
carrier direction. The component $z_j^A(\mathbf a)$ is the
inner product of $\mathbf a-\widehat{\boldsymbol\mu}_A$
with column $j$, for $j=1,\ldots,k_H$. These components
form the coordinate vector $\mathbf z^A(\mathbf a)$,
with the same construction applying to modality $B$. Together, these steps perform spectral carrier localization and readout: the paired singular directions locate the hub-readable relation, while the projected coordinates expose it as sample-wise evidence. The retained singular triplets also give the best rank-$k_H$ Frobenius approximation of $T_H$. The proof of the proposition and derivation of the sample score below are provided in the appendix.

{\color{black}
\makeatletter
\renewcommand\fs@ruled{%
  \def\@fs@cfont{\color{black}\bfseries}%
  \def\@fs@capt##1##2{{\color{black}\bfseries ##1} {\color{black}##2}\par}%
  \def\@fs@pre{{\color{black}\hrule height.8pt depth0pt}\kern2pt}%
  \def\@fs@mid{\kern2pt{\color{black}\hrule}\kern2pt}%
  \def\@fs@post{\kern2pt{\color{black}\hrule}\relax}%
  \let\@fs@iftopcapt\iftrue}
\makeatother
\begin{algorithm}[t]
\color{black}
\small
\caption{HSA: Identify, Calibrate, and Read Out}
\label{alg:hsa}
\begin{algorithmic}[1]
\REQUIRE Hub-edge data, task readout sets, and registered HSA settings
\ENSURE Retrieval rankings or prototype-classification labels
\STATE \textbf{Identify:} estimate $\widehat{\mathfrak S}_H$ from the two hub edges as in \hyperref[eq:hsa_information_interface]{Eq.~(\ref*{eq:hsa_information_interface})}.
\STATE Compute $\widehat{\mathcal K}_{AB}^{H}$, $W_A$, $W_B$, and $T_H$ using \hyperref[eq:hsa_spectral_carrier]{Eq.~(\ref*{eq:hsa_spectral_carrier})}.
\STATE Select $k_H$ by the hub-capacity rule (see the appendix) and obtain $\Phi_A,\Phi_B$ from \hyperref[prop:optimal_carrier]{Proposition~\ref*{prop:optimal_carrier}}.
\STATE Define carrier evidence $s_C$ using \hyperref[eq:hsa_carrier_score]{Eq.~(\ref*{eq:hsa_carrier_score})}.
\STATE Select $k_R$ by the shuffled-edge null (see the appendix) and define $s_R$ using \hyperref[eq:hsa_complementary_score]{Eq.~(\ref*{eq:hsa_complementary_score})}.
\STATE \textbf{Calibrate:} estimate $\widehat\Omega_A,\widehat\Omega_B$ and compute $g_R$ using \hyperref[eq:hsa_visibility_gate]{Eq.~(\ref*{eq:hsa_visibility_gate})}.
\STATE Use five fixed derangements of the equal-count target marginals in $\mathcal D_{AH},\mathcal D_{HB}$ to compute $\tau_C,\tau_R$ using \hyperref[eq:hsa_mismatch_scale]{Eq.~(\ref*{eq:hsa_mismatch_scale})}.
\STATE \textbf{Read Out:} map $\mathcal A_{\mathrm{ro}},\mathcal B_{\mathrm{ro}}$ using \hyperref[eq:hsa_knowledge_readout]{Eq.~(\ref*{eq:hsa_knowledge_readout})} and evaluate all pairs with \hyperref[eq:complete_hsa]{Eq.~(\ref*{eq:complete_hsa})}.
\STATE For retrieval, form score matrices and bidirectional rankings using \hyperref[eq:hsa_score_matrices]{Eqs.~(\ref*{eq:hsa_score_matrices})}--\hyperref[eq:hsa_bidirectional_rankings]{(\ref*{eq:hsa_bidirectional_rankings})}.
\STATE For classification, treat $\mathcal A_{\mathrm{ro}}$ as the prototype bank and predict labels using \hyperref[eq:hsa_prototype_classification]{Eq.~(\ref*{eq:hsa_prototype_classification})}.
\RETURN The task-specific rankings or predicted labels
\end{algorithmic}
\end{algorithm}
}

\subsection{\textcolor{black}{HSA Scoring and Task Readouts}}
\label{sec:method_score}

\noindent\textbf{Carrier Evidence.} The paired carriers convert the hub-readable relation into aligned coordinates. HSA scores a candidate pair by asking whether these coordinates are more likely under a matched model than under a mismatched model. Set $c_j=\operatorname{clip}(\sigma_j,0,1-10^{-6})$. For channel $j$, HSA models a matched coordinate pair as a standardized bivariate Gaussian with correlation $c_j$ and a mismatched pair as independent standardized coordinates. Let
\begin{equation*}
\ell_j(x,y;c_j)
:=\log\frac{p([x,y]^\top\mid Y=1)}
{p([x,y]^\top\mid Y=0)}
\end{equation*}
be the coordinate-wise log-likelihood ratio. The spectral carrier score is
\begin{equation}
s_C(\mathbf a,\mathbf b)
=\sum_{j=1}^{k_H}
\ell_j\!\left(
z_j^A(\mathbf a),z_j^B(\mathbf b);c_j
\right).
\label{eq:hsa_carrier_score}
\end{equation}
The singular value therefore controls how strongly each carrier coordinate contributes to the same-source evidence. The closed-form expansion is provided in the appendix, where the Gaussian working model is confined to these standardized coordinates.

\noindent\textbf{Candidate Resolution.} Several gallery items can share similar carrier coordinates, so HSA distinguishes them in the orthogonal complement of the leading raw relation subspace. A fixed shuffled-edge null determines the subspace rank $k_R$; details of the selection rule and numerical safeguards are provided in the appendix. The normalized vectors $\mathbf r^A$ and $\mathbf r^B$ provide candidate-resolution coordinates, while $s_R$ measures fine-grained query-candidate compatibility after removing that subspace.
Let
$\widehat{\mathcal K}_{AB}^{H}=U_R\operatorname{diag}(\eta)V_R^\top$,
form the rank-$k_R$ projectors $\Pi_A^R$ and $\Pi_B^R$, and define
\begin{equation}
\begin{array}{@{}r@{\;}c@{\;}l@{}}
\mathbf r^A(\mathbf a)
& = & \operatorname{nrm}\!\left((I-\Pi_A^R)
(\mathbf a-\widehat{\boldsymbol\mu}_A)\right),\\
\mathbf r^B(\mathbf b)
& = & \operatorname{nrm}\!\left((I-\Pi_B^R)
(\mathbf b-\widehat{\boldsymbol\mu}_B)\right),\\
s_R(\mathbf a,\mathbf b)
& = & \mathbf r^A(\mathbf a)^\top\mathbf r^B(\mathbf b).
\end{array}
\label{eq:hsa_complementary_score}
\end{equation}
\noindent\textbf{Reliability and Calibration.} The candidate-resolution term should be used only when residual geometry is organized consistently across the two target spaces. HSA measures this reliability using only the observed hub edges. Let $\widehat\Omega_A$ and $\widehat\Omega_B$ be the ridge-regularized covariance matrices. Applying the same feature permutation $P_\pi$ to the rows and columns of $\widehat\Omega_B$ preserves its spectrum but breaks its coordinate correspondence with $\widehat\Omega_A$. The excess of the observed affinity over this permutation null defines a reliability gate:
\begin{equation}
\begin{aligned}
\alpha_R
&=
\frac{\langle\widehat\Omega_A,\widehat\Omega_B\rangle_F}
{\lVert\widehat\Omega_A\rVert_F\lVert\widehat\Omega_B\rVert_F},\\
q_R
&=
\operatorname{Quantile}_{0.95}
\left(\left\{
\frac{\langle\widehat\Omega_A,
P_\pi\widehat\Omega_BP_\pi^\top\rangle_F}
{\lVert\widehat\Omega_A\rVert_F\lVert\widehat\Omega_B\rVert_F}
\right\}_{\pi\in\Pi_G}\right),\\
g_R
&=
\operatorname{clip}\!\left(
\frac{\alpha_R-q_R}{1-q_R},0,1\right).
\end{aligned}
\label{eq:hsa_visibility_gate}
\end{equation}
The resulting gate uses only these hub-edge covariance matrices and leaves the carrier evidence unchanged. The edge-specific estimators, null construction, and zero-denominator rule are detailed in the appendix. In all reported runs, the two target-side training marginals have equal row counts: synchronized source rows supply equal-count marginals in the main protocol, and the disjoint-source control uses matched equal-count splits. Five fixed derangements of these marginals estimate mismatch scales $\tau_C$ and $\tau_R$, placing the carrier and candidate-resolution scores on comparable scales.
For $Q\in\{C,R\}$,
\begin{subequations}
\begin{gather}
\begin{aligned}
\tau_Q
&=\frac15\sum_{\pi\in\Pi_{\mathrm{cal}}}\\[-0.3ex]
&\quad{}
\max\!\left\{
\operatorname{Std}_{i}
[s_Q(\mathbf a_i,\mathbf b_{\pi(i)})],10^{-6}
\right\},
\end{aligned}
\label{eq:hsa_mismatch_scale}
\\[-0.5ex]
s_{\mathrm{HSA}}(\mathbf a,\mathbf b)
=\frac{s_C(\mathbf a,\mathbf b)}{\tau_C}
+g_R\frac{s_R(\mathbf a,\mathbf b)}{\tau_R}.
\label{eq:complete_hsa}
\end{gather}
\end{subequations}
\noindent\textbf{Bidirectional Ranking.} The calibrated score defines one match matrix: its rows rank $B$ candidates for $A$ queries, and its columns rank $A$ candidates for $B$ queries.
\begin{gather}
\begin{array}{@{}r@{\;}c@{\;}l@{}}
[S^{A\rightarrow B}]_{ij}
& = & s_{\mathrm{HSA}}(\mathbf a_i,\mathbf b_j),\\
S^{B\rightarrow A} & = & (S^{A\rightarrow B})^\top.
\end{array}
\label{eq:hsa_score_matrices}
\\[-0.5ex]
\begin{array}{@{}r@{\;}c@{\;}l@{}}
\pi_i^{A\rightarrow B}
& = & \operatorname{argsort}_{j}^{\downarrow}[S^{A\rightarrow B}]_{ij},\\
\pi_j^{B\rightarrow A}
& = & \operatorname{argsort}_{i}^{\downarrow}[S^{A\rightarrow B}]_{ij}.
\end{array}
\label{eq:hsa_bidirectional_rankings}
\end{gather}
\noindent\textbf{Prototype Classification.}
The same fixed pair score supports class prediction. Let $\mathbf p_c^A$ be a normalized class prototype in modality $A$, formed from labeled $A$-side training features or from a fixed bank of class prompts when text provides the prototype modality. A test query $\mathbf b$ from modality $B$ is classified by
\begin{equation}
\widehat y(\mathbf b)
=\underset{c\in\mathcal C}{\operatorname{argmax}}\;
s_{\mathrm{HSA}}(\mathbf p_c^A,\mathbf b).
\label{eq:hsa_prototype_classification}
\end{equation}
Within each relation, all compared methods use the same prototype bank. Class identities determine this bank; target test outcomes and $A$--$B$ pair identities do not enter HSA fitting or calibration. All directions, ranks, weights, and scales are fixed before target evaluation. When encoder coordinates are compatible, the relation readout for the same target-modality pair can support both retrieval and prototype classification with all fitted parameters fixed. The main classification comparisons estimate the relation and construct its readout using the specified source features. Further validation of both task readouts under a single fitted state is provided in the appendix. \hsaAlgorithmRef{alg:hsa} summarizes the two readouts; details of the score-block implementation are provided in the appendix.

\subsection{\textcolor{black}{Further Analysis}}
\label{sec:method_analysis}

\noindent\textbf{Knowledge, Recoverability, and Visibility.}
\hsaDefinitionRef{def:latent_knowledge} and \hsaTheoremRef{thm:hub_readable} separate the full source-induced relation $\mathcal K_{AB}$ from the hub-readable component $\mathcal K_{AB}^{H}$. Recoverability depends on which source factors the hub resolves, while native visibility further depends on how the paired directions of $\mathcal K_{AB}^{H}$ align with the cosine axes. Nonzero singular values can therefore coexist with weak native retrieval when the left and right directions couple different axes.
\par

\noindent\textbf{Why Spectral Activation Exposes the Relation.} HSA resolves the coordinate mismatch before scoring. \hsaPropositionRef{prop:optimal_carrier} selects paired singular directions that retain maximal relation strength at a fixed dimension. \hsaEquationRef{eq:hsa_carrier_diagonalization} represents the relation as diagonal channels with strengths $\sigma_j$, while \hsaEqRef{eq:hsa_carrier_score} converts channel-wise agreement into strength-aware sample evidence. Spectral activation thereby maps cross-coordinate coupling into matched carrier channels for direct scoring.
\par

\noindent\textbf{Testable Consequences.}
Three structural predictions follow. Shuffling either hub edge should weaken the
relation; gains should persist with disjoint source instances, up to finite-sample
variation. Preserving cross-coordinate hub correlations should improve readout.
Leading carriers should outperform subsequent or random carriers when counts and
assigned reliability spectra are matched. The scoring rule predicts gains from adding
informative carriers and increasing their reliability until estimation noise
dominates. Gating and candidate resolution should help when residual geometry is
reliable. \hsaSectionRef{sec:exp_analysis} evaluates these predictions through
correspondence, disjoint-source, spectral, scaling, and component-ablation
experiments.
\par

\noindent\textbf{Scope.}
These claims are limited to second-order relations that can be linearly resolved through the frozen hub. The identifiable component $\mathcal K_{AB}^{H}$ may cover only part of $\mathcal K_{AB}$, and the Gaussian model applies only to the standardized carrier coordinates. Relation formation during training remains outside the present scope.
\par

\section{Experiments}
\label{sec:experiments}

{\color{black}
The experiments address five questions:
\begin{enumerate}
    \item[\textbf{Q1}] \textbf{Effect:} Does HSA improve held-out retrieval across relations, backbones, and directions, while also improving prototype classification across relations and backbones?
    \item[\textbf{Q2}] \textbf{Source:} Given the retrieval effect, does it require valid correspondence within both hub edges and persist without shared source instances?
    \item[\textbf{Q3}] \textbf{Location:} Given its source, which hub-covariance structure and spectral carriers contain the usable relation, and how concentrated is it?
    \item[\textbf{Q4}] \textbf{Activation:} Given the localized carriers, how do reliability, gating, and candidate resolution turn them into ranking gains?
    \item[\textbf{Q5}] \textbf{Cost:} Once the readout is specified, what fitting, retrieval-scoring, and classification-readout overhead does it introduce?
\end{enumerate}
Each answer supplies the premise for the next; the following settings define their common protocol.
\par}

\begin{table*}[t]
\caption{\textbf{Held-Out Retrieval on Nine ImageBind Relations.} Boldface and underlining mark the best and second-best values among all methods except Full A--B. $^\dagger$ marks methods that use target-pair identities; blue parentheses show HSA gains over frozen cosine.}
\label{tab:imagebind_main}
\centering
\scriptsize
\setlength{\tabcolsep}{1.6pt}
\renewcommand{\arraystretch}{1.10}
\begin{adjustbox}{Clip=0pt 0pt 0pt 0pt}
\begin{tabularx}{\textwidth}{@{}l*{13}{Y}@{}}
\toprule
\tworowhead{Method} &
\multicolumn{9}{c}{Evaluation Relations: Bidirectional Recall@10 (\%) $\uparrow$} &
\tworowhead{\shortstack{Mean\\R@1}} &
\tworowhead{\shortstack{Mean\\R@5}} &
\tworowhead{\shortstack{Mean\\R@10}} &
\tworowhead{\shortstack{R@10\\Recovery}} \\
\cmidrule(lr){2-10}
&
\shortstack{Tx--Au\\VGS} &
\shortstack{Tx--Th\\TR} &
\shortstack{Tx--IMU\\E4D} &
\shortstack{Au--D\\BV} &
\shortstack{Au--Th\\MAVD} &
\shortstack{Au--IMU\\E4D} &
\shortstack{D--Th\\TR} &
\shortstack{D--IMU\\UTD} &
\shortstack{Th--IMU\\CA} & & & & \\
\midrule
\rowcolor{gray!12}
Full A--B$^\dagger$ & 84.57 & 74.71 & 4.01 & 7.13 & 3.09 & 7.15 & 15.76 & 6.01 & 4.15 & 16.28 & 20.34 & 22.95 & 100.0 \\
\midrule
Frozen Cosine & 75.20 & 7.68 & 1.53 & 1.97 & 1.62 & 1.75 & 0.34 & 3.02 & 4.08 & 7.65 & 9.80 & 10.80 & 47.0 \\
Hub-Relative~\cite{relrep} & 36.85 & 12.67 & 1.46 & 1.88 & 2.20 & 2.26 & 0.27 & 2.56 & 4.08 & 2.88 & 4.70 & 7.14 & 31.1 \\
Bi. Ridge~\cite{hoerl1970ridge} & 82.05 & 70.52 & \underline{6.85} & \underline{6.85} & \textbf{4.63} & 6.57 & 11.71 & \underline{7.21} & \underline{13.88} & \underline{15.24} & \underline{20.04} & \underline{23.36} & \underline{101.8} \\
Bi. Procrustes~\cite{schonemann1966} & 83.30 & 61.19 & 4.74 & 5.65 & 2.43 & 6.88 & 4.35 & 3.14 & 8.16 & 12.91 & 17.39 & 19.98 & 87.1 \\
\midrule
ReAlign~\cite{realign} & 75.60 & 9.10 & 1.82 & 2.91 & 2.31 & 1.95 & 0.33 & 1.74 & 4.29 & 7.77 & 9.63 & 11.12 & 48.4 \\
ERM~\cite{domainbed} & 79.28 & 71.46 & 3.67 & 5.91 & \underline{3.74} & \underline{7.26} & 12.84 & 5.47 & 4.63 & 14.97 & 18.99 & 21.58 & 94.0 \\
IRM~\cite{irm} & 79.75 & \underline{73.08} & 1.85 & 5.28 & 3.09 & 3.22 & \textbf{14.85} & 2.36 & 4.01 & 15.21 & 18.86 & 20.83 & 90.8 \\
VREx~\cite{vrex} & 81.18 & 68.57 & 4.64 & 5.91 & 2.70 & 5.24 & 12.97 & 6.94 & 4.76 & 14.43 & 18.98 & 21.43 & 93.4 \\
DANN~\cite{dann} & 77.90 & 70.14 & 4.20 & 5.74 & 3.43 & 6.54 & 13.10 & 5.19 & 3.88 & 14.34 & 18.62 & 21.12 & 92.0 \\
CORAL~\cite{deepcoral} & 79.28 & 71.46 & 3.57 & 5.91 & 3.70 & 6.26 & 12.84 & 5.62 & 4.97 & 15.00 & 18.96 & 21.51 & 93.7 \\
ASIF$^\dagger$~\cite{asif} & \underline{83.40} & 66.47 & 4.96 & 6.76 & 3.59 & 5.24 & \underline{14.15} & 3.26 & 4.29 & 14.02 & 19.25 & 21.35 & 93.0 \\
Paired-OP$^\dagger$~\cite{schonemann1966} & 82.95 & 59.07 & 3.86 & 5.39 & 2.43 & 4.52 & 4.37 & 3.84 & 6.94 & 12.35 & 17.10 & 19.26 & 83.9 \\
\midrule
\rowcolor{hsarow}
\gainrowcell{HSA (Ours)} & \gainmaincell{\textbf{84.45}}{+9.25} & \gainmaincell{\textbf{74.87}}{+67.20} & \gainmaincell{\textbf{8.97}}{+7.43} & \gainmaincell{\textbf{6.93}}{+4.97} & \gainmaincell{\textbf{4.63}}{+3.01} & \gainmaincell{\textbf{8.93}}{+7.19} & \gainmaincell{9.66}{+9.32} & \gainmaincell{\textbf{9.30}}{+6.28} & \gainmaincell{\textbf{16.33}}{+12.24} & \gainmaincell{\textbf{16.39}}{+8.73} & \gainmaincell{\textbf{21.53}}{+11.73} & \gainmaincell{\textbf{24.90}}{+14.10} & \gainrowcell{\textbf{108.5}} \\
\bottomrule
\end{tabularx}
\end{adjustbox}
\end{table*}

\subsection{Experimental Settings}
\label{sec:exp_setup}

Figure and table abbreviations are Tx (text), Im (image), Vi (video), Au (audio), D (depth), Th (thermal), and IMU (inertial measurement).

{\color{black}
\noindent\textbf{Scope and Data.}
We evaluate frozen ImageBind~\cite{imagebind} and LanguageBind~\cite{languagebind} on 19 nondegenerate held-out relations spanning seven modalities and ten datasets: VGGSound~\cite{vggsound}, NYUv2~\cite{nyuv2}, TartanRGBT distributed with AnyThermal~\cite{tartanrgbt}, Ego4D~\cite{ego4d}, BatVision~\cite{batvision}, MAVD~\cite{mavd}, UTD-MHAD~\cite{utdmhad}, Caltech Aerial RGBT~\cite{caltechaerialrgbt}, UCF101~\cite{ucf101}, and MSR-VTT~\cite{msrvtt}. An evaluation relation is a backbone--dataset--target-pair configuration. For each of the 19 retrieval relations, all methods share the same training and test partitions, test queries, and complete test gallery. The classification study reports eleven class-labeled relations, with six under ImageBind and five under LanguageBind. Class definitions, splits, and counts are detailed in the appendix.

\noindent\textbf{Comparisons and Information Boundary.}
The main tables distinguish source-only analytic, adapted trainable, target-paired, and HSA groups. Analytic references are frozen cosine, hub-relative similarity~\cite{relrep}, bidirectional ridge~\cite{hoerl1970ridge}, bidirectional Procrustes~\cite{schonemann1966}, and ReAlign~\cite{realign}, which applies Anchor, Trace, and Centroid Alignment to unpaired $A$ and $B$ training marginals. ERM~\cite{domainbed}, IRM~\cite{irm}, VREx~\cite{vrex}, DANN~\cite{dann}, and CORAL~\cite{deepcoral} use the same two-layer head architecture with the $A$--$H$ and $H$--$B$ edges as training environments; respectively, they minimize mean edge loss, enforce invariant optimality, penalize risk variance, adversarially suppress modality identity, and align covariances with the Deep CORAL penalty. Target-paired references are ASIF~\cite{asif}, Paired-OP~\cite{schonemann1966}, and Full A--B: ASIF builds a coupled dictionary, Paired-OP solves two orthogonal maps, and Full A--B trains heads with the same architecture. All compared methods keep the backbone frozen. Methods without target-pair supervision exclude $A$--$B$ identities and target-informed model selection. For classification, every method scores the same $A$-side class prototypes against $B$-side test queries. ASIF and Paired-OP take zero gradient steps; Full A--B follows the fixed training protocol below.

\noindent\textbf{Metrics and Reporting.}
Bidirectional Recall@$k$ averages both retrieval directions for $k\in\{1,5,10\}$; Recall@10 is primary. Recovery@10 divides each method's backbone-mean Recall@10 by Full A--B. Classification uses $B\!\to\!A$ prototype prediction, and its primary metric is macro Top-1, the mean of class-wise Top-1 accuracies. Retrieval aggregates weight the 19 relations equally, while the classification aggregate weights the eleven reported relations equally. ERM, IRM, VREx, DANN, CORAL, and Full A--B average seeds 40--42. Full A--B uses a single 200-step duration selected globally from train-only pilot splits and fixed across relations and seeds. Complete retrieval and classification protocols, including datasets, class definitions, and classification eligibility, are provided in the appendix. Further
retrieval results, seed variability, aggregation checks, diagnostics, and the excluded degenerate unit are documented there.
\par}

\Needspace{8\baselineskip}
\noindent\textbf{Across-Dataset Statistical Analysis.}
We assess cross-dataset stability by grouping the 19 retrieval
relations into ten datasets and averaging seeds within each
relation. Relation-equal mean differences and 95\% percentile
intervals use 100,000 dataset-cluster bootstrap draws (seed 42),
retaining all relations of each sampled dataset. One-sided exact
sign-flip tests act on entire dataset groups, with Holm correction
within predefined comparison families. These tests assume
independent dataset groups and joint sign symmetry. The appendix
reports dataset-equal estimates, leave-one-dataset-out ranges,
complete families, and the interpretation of within-relation
intervals.

\Needspace{6\baselineskip}
\subsection{\textcolor{black}{Does HSA Improve Held-Out Performance? (Q1)}}
\label{sec:exp_main}

{\color{black}
Q1 evaluates retrieval and prototype classification across
relations and backbones, including both retrieval directions.
Full A--B is the supervised reference, excluded from boldface
and underlining.}
\hyperref[tab:imagebind_main]{Tables~\ref*{tab:imagebind_main} }and~\hyperref[tab:languagebind_main]{\ref*{tab:languagebind_main}} compare all
methods on the nineteen evaluation relations. ReAlign raises cross-backbone
mean Recall@10 from 18.27\% with frozen cosine to 18.48\%. HSA reaches 31.15\%,
exceeds ReAlign on all nineteen relations by 12.67 points on average, improves
every relation, and attains the highest or tied-highest Recall@10 among all methods
except Full A--B on fifteen. Across the ten dataset groups, HSA exceeds frozen cosine by
12.88 points (95\% cluster interval [5.29, 18.68];
exact $p=0.00098$).
All nine comparisons with the other source-only methods remain
positive after family-wise Holm correction
($p_{\mathrm{Holm}}\leq0.00879$). The same closed-form readout
therefore operates across held-out relations formed through both image and
language hubs.
The gain remains positive under dataset-balanced weighting, every single-dataset
exclusion, and chance normalization; complete definitions and results are provided in
the appendix.

\begin{table}[t]
\caption{\textbf{ImageBind Prototype Classification.} Macro Top-1 (\%) on six relations. Boldface and underlining mark the best and second-best values among all methods except Full A--B. $^\dagger$ marks methods that use target-pair identities; blue parentheses show HSA gains over frozen cosine.}
\label{tab:imagebind_classification}
\centering
\scriptsize
\setlength{\tabcolsep}{1.5pt}
\renewcommand{\arraystretch}{1.10}
\begin{adjustbox}{Clip=0pt 0pt 0pt 0pt}
\begin{tabularx}{\linewidth}{@{}l*{6}{Y}@{}}
\toprule
\tworowhead{Method} &
\multicolumn{6}{c}{Prototype--Query Relations: Macro Top-1 (\%) $\uparrow$} \\
\cmidrule(lr){2-7}
&
\shortstack{Tx--Au\\VGS} &
\shortstack{Tx--Th\\TR} &
\shortstack{Tx--IMU\\E4D} &
\shortstack{Au--D\\BV} &
\shortstack{Au--IMU\\E4D} &
\shortstack{D--IMU\\UTD} \\
\midrule
\rowcolor{gray!12}
Full A--B$^\dagger$ & 33.57 & 78.13 & 19.57 & 53.64 & 19.21 & 27.00 \\
\midrule
Frozen Cosine & 27.41 & 28.10 & 7.16 & 25.77 & 6.56 & 2.87 \\
Hub-Relative~\cite{relrep} & 11.07 & 27.81 & 5.73 & 18.41 & 5.88 & 2.78 \\
Bi. Ridge~\cite{hoerl1970ridge} & 31.25 & \underline{84.43} & 19.86 & \underline{58.78} & 20.48 & 25.63 \\
Bi. Procrustes~\cite{schonemann1966} & 30.22 & 69.11 & 16.73 & 50.36 & 13.95 & 10.52 \\
\midrule
ReAlign~\cite{realign} & 27.08 & 30.45 & 10.70 & 34.18 & 10.23 & 4.91 \\
ERM~\cite{domainbed} & 29.65 & 68.89 & 19.73 & 53.78 & 20.29 & 24.72 \\
IRM~\cite{irm} & 31.00 & 72.55 & 15.52 & 52.31 & 10.23 & 5.36 \\
VREx~\cite{vrex} & 29.66 & 75.23 & 18.37 & 52.52 & 15.94 & 12.16 \\
DANN~\cite{dann} & 29.33 & 62.80 & \underline{20.90} & 49.51 & \underline{20.50} & 23.94 \\
CORAL~\cite{deepcoral} & 29.65 & 69.13 & 19.83 & 53.75 & 20.42 & \underline{27.29} \\
ASIF$^\dagger$~\cite{asif} & \underline{32.54} & 73.52 & 14.71 & 32.03 & 11.71 & 9.57 \\
Paired-OP$^\dagger$~\cite{schonemann1966} & 31.81 & 56.02 & 14.12 & 47.97 & 9.76 & 13.12 \\
\midrule
\rowcolor{hsarow}
\gainrowcell{HSA (Ours)} & \gainmaincell{\textbf{32.60}}{+5.19} & \gainmaincell{\textbf{90.92}}{+62.82} & \gainmaincell{\textbf{21.58}}{+14.42} & \gainmaincell{\textbf{61.40}}{+35.63} & \gainmaincell{\textbf{22.81}}{+16.25} & \gainmaincell{\textbf{34.75}}{+31.88} \\
\bottomrule
\end{tabularx}
\end{adjustbox}
\end{table}

\noindent\textbf{ImageBind.}
HSA reaches 24.90\% mean Recall@10, improving frozen cosine by 14.10 points and
bidirectional ridge by 1.53 points. Its smaller margin over ridge than on
LanguageBind is consistent with Q4, where candidate resolution and residual
orthogonalization mainly refine LanguageBind residual geometry.
Under the image hub, HSA improves all nine held-out relations, spanning the
natively visible text--audio relation and the near-chance text--thermal
relation, and achieves the highest mean Recall@1, Recall@5, and
Recall@10 among all methods except Full A--B. The largest increase is 67.20 points for text--thermal.
For ImageBind prototype classification, HSA reaches 44.01\% mean macro Top-1 accuracy across the six relations in \hsaTableRef{tab:imagebind_classification}. This is 27.70 points above frozen cosine and 3.94 points above bidirectional ridge. The gains cover text-, audio-, depth-, thermal-, and inertial-modality readouts across VGGSound, TartanRGBT, Ego4D, BatVision, and UTD-MHAD.

\begin{table*}[t]
\caption{\textbf{Held-Out Retrieval on Ten LanguageBind Relations.} Boldface and underlining mark the best and second-best values among all methods except Full A--B. $^\dagger$ marks methods that use target-pair identities; blue parentheses show HSA gains over frozen cosine.}
\label{tab:languagebind_main}
\centering
\scriptsize
\setlength{\tabcolsep}{1.0pt}
\renewcommand{\arraystretch}{1.10}
\begin{adjustbox}{Clip=0pt 0pt 0pt 0pt}
\begin{tabularx}{\textwidth}{@{}l*{14}{Y}@{}}
\toprule
\tworowhead{Method} &
\multicolumn{10}{c}{Evaluation Relations: Bidirectional Recall@10 (\%) $\uparrow$} &
\tworowhead{\shortstack{Mean\\R@1}} &
\tworowhead{\shortstack{Mean\\R@5}} &
\tworowhead{\shortstack{Mean\\R@10}} &
\tworowhead{\shortstack{R@10\\Recovery}} \\
\cmidrule(lr){2-11}
&
\shortstack{Im--Vi\\UCF} &
\shortstack{Im--Th\\TR} &
\shortstack{Im--D\\NYU} &
\shortstack{Im--Au\\VGS} &
\shortstack{Vi--Th\\TR} &
\shortstack{Vi--D\\TR} &
\shortstack{Vi--Au\\MSR} &
\shortstack{Th--D\\TR} &
\shortstack{Th--Au\\MAVD} &
\shortstack{D--Au\\BV} & & & & \\
\midrule
\rowcolor{gray!12}
Full A--B$^\dagger$ & 99.76 & 60.36 & 44.19 & 41.02 & 54.43 & 25.84 & 26.34 & 31.31 & 2.73 & 5.82 & 15.28 & 30.85 & 39.18 & 100.0 \\
\midrule
Frozen Cosine & 96.49 & 36.93 & 11.62 & 43.40 & 36.01 & 7.11 & 6.96 & 7.57 & 2.08 & 1.88 & 10.28 & 19.78 & 25.00 & 63.8 \\
Hub-Relative~\cite{relrep} & 35.66 & 14.45 & 5.66 & 18.15 & 18.23 & 3.44 & 6.28 & 3.56 & 2.42 & 1.63 & 1.93 & 6.93 & 10.95 & 27.9 \\
Bi. Ridge~\cite{hoerl1970ridge} & 75.68 & 25.92 & 10.09 & 55.80 & 23.17 & 11.24 & 20.31 & 14.91 & 3.46 & 4.79 & 5.51 & 16.27 & 24.54 & 62.6 \\
Bi. Procrustes~\cite{schonemann1966} & 86.55 & 33.14 & 7.95 & 56.95 & 28.56 & 13.42 & \underline{20.53} & 14.45 & 3.12 & 3.85 & 7.25 & 18.99 & 26.85 & 68.5 \\
\midrule
ReAlign~\cite{realign} & 97.84 & 43.23 & 7.87 & 43.60 & 38.88 & 4.24 & 5.20 & 6.08 & 2.89 & 1.28 & 11.12 & 20.13 & 25.11 & 64.1 \\
ERM~\cite{domainbed} & 81.11 & 23.36 & 7.21 & 51.22 & 18.85 & 9.98 & 15.03 & 10.47 & \textbf{4.12} & 4.99 & 5.00 & 15.33 & 22.63 & 57.8 \\
IRM~\cite{irm} & 81.31 & 21.41 & 8.15 & 51.93 & 18.23 & 9.17 & 18.25 & 10.70 & \underline{4.08} & 4.05 & 5.11 & 15.60 & 22.73 & 58.0 \\
VREx~\cite{vrex} & 81.15 & 22.67 & 7.49 & 51.18 & 19.76 & 9.79 & 16.84 & 11.70 & \underline{4.08} & \underline{5.22} & 5.02 & 15.53 & 22.99 & 58.7 \\
DANN~\cite{dann} & 81.17 & 22.40 & 6.22 & 49.98 & 16.55 & 9.59 & 16.16 & 11.96 & 2.89 & 4.14 & 4.91 & 15.00 & 22.11 & 56.4 \\
CORAL~\cite{deepcoral} & 81.11 & 22.44 & 7.21 & 51.22 & 19.57 & 10.02 & 15.03 & 11.28 & 4.00 & 4.79 & 5.02 & 15.39 & 22.67 & 57.9 \\
ASIF$^\dagger$~\cite{asif} & 92.52 & 44.61 & \textbf{18.35} & \underline{64.20} & 36.93 & 16.63 & 18.27 & 18.69 & 2.31 & 4.28 & 10.07 & 23.86 & 31.68 & 80.9 \\
Paired-OP$^\dagger$~\cite{schonemann1966} & \underline{98.45} & \underline{49.31} & \underline{15.37} & 54.85 & \underline{44.27} & \underline{17.78} & 19.29 & \textbf{19.95} & 2.31 & 3.68 & \underline{12.66} & \underline{25.32} & \underline{32.53} & \underline{83.0} \\
\midrule
\rowcolor{hsarow}
\gainrowcell{HSA (Ours)} & \gainmaincell{\textbf{99.72}}{+3.23} & \gainmaincell{\textbf{60.89}}{+23.97} & \gainmaincell{13.53}{+1.91} & \gainmaincell{\textbf{67.15}}{+23.75} & \gainmaincell{\textbf{53.33}}{+17.32} & \gainmaincell{\textbf{18.00}}{+10.89} & \gainmaincell{\textbf{26.64}}{+19.68} & \gainmaincell{\underline{19.27}}{+11.70} & \gainmaincell{3.93}{+1.85} & \gainmaincell{\textbf{5.39}}{+3.51} & \gainmaincell{\textbf{14.65}}{+4.38} & \gainmaincell{\textbf{28.27}}{+8.48} & \gainmaincell{\textbf{36.79}}{+11.78} & \gainrowcell{\textbf{93.9}} \\
\bottomrule
\end{tabularx}
\end{adjustbox}
\end{table*}
\begin{table}[t]
\caption{\textbf{LanguageBind Prototype Classification.} Macro Top-1 (\%) on five relations. Boldface and underlining mark the best and second-best values among all methods except Full A--B. $^\dagger$ marks methods that use target-pair identities; blue parentheses show HSA gains over frozen cosine.}
\label{tab:languagebind_classification}
\centering
\scriptsize
\setlength{\tabcolsep}{2.0pt}
\renewcommand{\arraystretch}{1.10}
\begin{adjustbox}{Clip=0pt 0pt 0pt 0pt}
\begin{tabularx}{\linewidth}{@{}l*{5}{Y}@{}}
\toprule
\tworowhead{Method} &
\multicolumn{5}{c}{Prototype--Query Relations: Macro Top-1 (\%) $\uparrow$} \\
\cmidrule(lr){2-6}
&
\shortstack{Im--D\\NYU} &
\shortstack{Vi--Th\\TR} &
\shortstack{Vi--D\\TR} &
\shortstack{Vi--Au\\MSR} &
\shortstack{Th--D\\TR} \\
\midrule
\rowcolor{gray!12}
Full A--B$^\dagger$ & 61.82 & 51.85 & 65.35 & 24.39 & 58.06 \\
\midrule
Frozen Cosine & 51.96 & 74.03 & 43.68 & 13.77 & 37.80 \\
Hub-Relative~\cite{relrep} & 43.00 & 68.76 & 35.02 & 7.32 & 31.35 \\
Bi. Ridge~\cite{hoerl1970ridge} & 60.24 & \underline{82.58} & \underline{67.10} & 22.12 & 65.02 \\
Bi. Procrustes~\cite{schonemann1966} & \underline{60.85} & 79.19 & 61.69 & 20.44 & 62.38 \\
\midrule
ReAlign~\cite{realign} & 46.63 & 79.89 & 37.96 & 13.02 & 35.85 \\
ERM~\cite{domainbed} & 55.69 & 77.52 & 66.76 & 22.92 & 65.38 \\
IRM~\cite{irm} & 53.70 & 75.99 & 61.24 & \underline{24.95} & 64.94 \\
VREx~\cite{vrex} & 53.58 & 75.39 & 64.60 & 22.55 & 65.89 \\
DANN~\cite{dann} & 50.58 & 81.27 & 63.06 & 23.12 & \underline{68.17} \\
CORAL~\cite{deepcoral} & 52.96 & 76.39 & 64.46 & 22.92 & 65.32 \\
ASIF$^\dagger$~\cite{asif} & 51.20 & 80.17 & 55.60 & 17.01 & 52.12 \\
Paired-OP$^\dagger$~\cite{schonemann1966} & 60.00 & 81.97 & 60.00 & 20.56 & 55.76 \\
\midrule
\rowcolor{hsarow}
\gainrowcell{HSA (Ours)} & \gainmaincell{\textbf{62.89}}{+10.92} & \gainmaincell{\textbf{84.37}}{+10.33} & \gainmaincell{\textbf{69.99}}{+26.32} & \gainmaincell{\textbf{26.37}}{+12.60} & \gainmaincell{\textbf{69.01}}{+31.21} \\
\bottomrule
\end{tabularx}
\end{adjustbox}
\end{table}
\noindent\textbf{LanguageBind.}
HSA reaches 36.79\% mean Recall@10, 11.78 points above frozen cosine and 4.26
points above Paired-OP, the strongest other method included in the ranking.
HSA improves all ten held-out relations and leads on seven among all methods except Full A--B. The gains
cover the nearly saturated image--video relation and the weaker video--audio
and image--thermal relations, spanning a broad range of native
retrieval strengths.
For LanguageBind prototype classification, HSA reaches 62.53\% mean macro Top-1 accuracy across the five relations in \hsaTableRef{tab:languagebind_classification}. This is 18.28 points above frozen cosine and 3.11 points above bidirectional ridge. The gains span image, video, audio, depth, and thermal observations from NYUv2, TartanRGBT, and MSR-VTT.

\noindent\textbf{Classification Summary and Parameter Reuse.} Across the eleven
relations in \hsaTableRef{tab:imagebind_classification} and
\hsaTableRef{tab:languagebind_classification}, HSA raises mean macro Top-1 from
29.01\% to 52.43\%. These comparisons fit the relation using classification source
features and evaluate a common prototype bank. We also test cross-task reuse by fixing
every HSA parameter estimated from the source hub edges and scoring class prototypes
as candidates. On eight relations with compatible coordinates and no overlap between
fitting samples and classification queries, reuse reaches 48.01\% macro Top-1. Frozen
cosine and separately fitted HSA reach 26.46\% and 48.79\%, respectively, on the same
relations. The reuse-minus-cosine gain is 21.56 points, with a dataset-cluster
interval of [15.16, 29.52]. Overlap and preprocessing boundaries are detailed in the
appendix.

\noindent\textbf{Target-Paired Reference.}
\mbox{\hyperref[tab:imagebind_main]{Tables~\ref*{tab:imagebind_main}}}
and~\hyperref[tab:languagebind_main]{\ref*{tab:languagebind_main}}
show that HSA reaches 31.15\% mean Recall@10, compared with 26.78\%
for ASIF and 26.24\% for Paired-OP, and outperforms each on
17 of 19 relations.
Full A--B reaches $31.49\pm0.42$\% overall, with HSA achieving higher
Recall@10 on 10 of 19 relations. The comparison varies across backbones:
HSA reaches 24.90\% on ImageBind, compared with $22.95\pm0.36$\%
for Full A--B, and 36.79\% on LanguageBind, compared with
$39.18\pm0.50$\% for Full A--B. The $\pm$ values denote sample
standard deviations across the three runs. The overall
HSA-minus-Full A--B difference is $-0.34$ points, with a descriptive
cluster interval of $[-5.43,5.57]$. This comparison has no predefined
equivalence or non-inferiority margin.

\noindent\textbf{Across-Rank Gains.}
In \hyperref[tab:imagebind_main]{Tables~\ref*{tab:imagebind_main}}
and~\hyperref[tab:languagebind_main]{\ref*{tab:languagebind_main}}, the HSA gain over
frozen cosine increases from Recall@1 to Recall@10: from 8.73 to 14.10 points on
ImageBind and from 4.38 to 11.78 points on LanguageBind. These increases show that HSA
recovers correct matches beyond rank 1.

\noindent\textbf{Directional Consistency.}
HSA improves 37 of 38 directed relations (details in the appendix).
Mean Recall@10 rises from 19.54\% to 32.88\% for $A\!\to\!B$
and from 17.01\% to 29.43\% for $B\!\to\!A$.
Forward and reverse gains are 15.43 and 12.77 points on ImageBind
and 11.45 and 12.11 points on LanguageBind.
Improvements thus extend across both query directions on both
backbones.
\par

\hsaTakeaway{Across both backbones, HSA improves retrieval across
relations and query directions, and prototype classification
across relations. These gains show that task performance can improve through a hub-based readout while the underlying representations remain frozen.}

\Needspace*{4\baselineskip}
\subsection{\textcolor{black}{Does the Signal Come from the Two Hub Edges? (Q2)}}
\label{sec:exp_interface}
\label{sec:exp_analysis}

{\color{black}
After Q1 establishes the effect, Q2 traces its information source by separating valid within-edge correspondence from the convenience of shared source rows. We test within-edge correspondence, disjoint-source estimation, and source-sample scaling in that order.
\par}

\begin{figure}[t]
\centering
\includegraphics[width=0.97\linewidth]{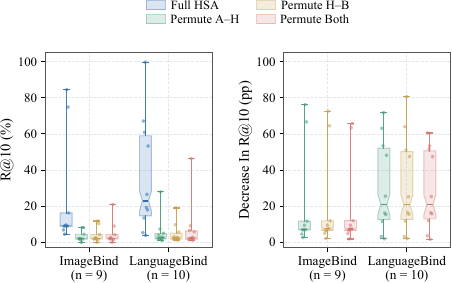}
\caption{\textbf{Hub-Edge Correspondence in Complete HSA.} Left: Recall@10 under intact correspondence and three permutation controls. Right: paired decreases in percentage points. Each dot represents a relation; permutation results average five refits. Boxes span the interquartile range, schematic notches mark medians, and whiskers span observed extrema.}
\label{fig:hsa_hub_correspondence}
\end{figure}

\noindent\textbf{Within-Edge Source Correspondence.}
We test the sample correspondence along both observed hub edges by permuting the hub-side rows of the $A$--$H$ edge, the $H$--$B$ edge, or both edges independently. Target-side row order, embeddings, marginals, estimator, hyperparameters, and scoring formula remain fixed. Every condition refits the relation directions, selected ranks, source gate, and mismatch scales under the same source-only rules, and evaluates the complete HSA score on the full gallery. Each intervention averages five seeds; the intact fit reproduces the main HSA retrieval results.
As shown in \hsaFigRef{fig:hsa_hub_correspondence}, permuting the $A$--$H$ edge, the $H$--$B$ edge, or both edges reduces mean Recall@10 from 31.15\% to 4.64\%, 4.58\%, and 6.56\%, respectively. The corresponding reductions are 26.52, 26.58, and 24.59 points, and all nineteen relation-level differences are positive under each intervention. Their dataset-cluster intervals are [9.83, 40.64], [10.03, 41.01], and [9.22, 36.53] points; all three Holm-adjusted $p$ values equal 0.00293. These interventions identify valid sample correspondence within both hub edges as a key source of HSA's recovery of target-modality associations. Shuffling either edge substantially reduces retrieval performance even after the complete readout is re-estimated.

\noindent\textbf{Disjoint Source Instances.}
HSA constructs its readout from two hub-edge moment systems, which predicts
that its principal retrieval gain should persist when the edges are estimated
from disjoint source instances. To test this prediction, we divide source rows
into equal halves for each of ten split seeds and estimate the two edges either
from the same half or from opposite halves. Per-edge sample budgets, source-only
calibration, gate construction, scoring, and evaluation remain matched. While removing cross-edge row sharing, the disjoint condition preserves valid correspondence within each edge.
\hsaTableRef{tab:hsa_disjoint_summary} shows that, across the nineteen relations, the sample-size-matched shared-row condition
gains 11.94 points over frozen cosine, of which strictly disjoint estimation
retains 10.77 points, or 90.2\%; every relation remains above frozen cosine.
The 95\% dataset-cluster interval for the mean disjoint gain is [3.83, 16.57].
Under equal per-edge budgets, shared-row estimation is 1.18 points above
strictly disjoint estimation on average, with a 95\% confidence interval of
[0.53, 2.30]. Thus, HSA retains most of its retrieval gain when the two moment systems are estimated from mutually disjoint source-instance sets. Its principal signal comes from valid correspondence within each hub edge and the resulting composition of the two moment systems.

\begin{table}[t]
\caption{\textbf{Retrieval with Shared and Disjoint Source Instances.} Mean bidirectional Recall@10 (\%) under matched per-edge budgets; parentheses show gains over frozen cosine.}
\label{tab:hsa_disjoint_summary}
\centering
\footnotesize
\setlength{\tabcolsep}{3.6pt}
\renewcommand{\arraystretch}{1.13}
\begin{tabularx}{\linewidth}{
  @{}
  >{\columncolor{white}[0pt][\tabcolsep]}l
  *{2}{Y}
  >{\columncolor{white}[\tabcolsep][0pt]}Y
  @{}
}
\toprule
Method & ImageBind & LanguageBind & Overall \\
\midrule
Frozen Cosine
& 10.80 & 25.00 & 18.27 \\
\midrule
\rowcolor{hsarow}
HSA, Shared Rows
& \gaincell{\textbf{24.22}}{+13.42}
& \gaincell{\textbf{35.62}}{+10.61}
& \gaincell{\textbf{30.22}}{+11.94} \\
\rowcolor{hsarow}
HSA, Disjoint Sources
& \gaincell{22.63}{+11.83}
& \gaincell{34.82}{+9.81}
& \gaincell{29.04}{+10.77} \\
\bottomrule
\end{tabularx}
\end{table}

\begin{figure}[t]
\centering
\includegraphics[width=\linewidth]{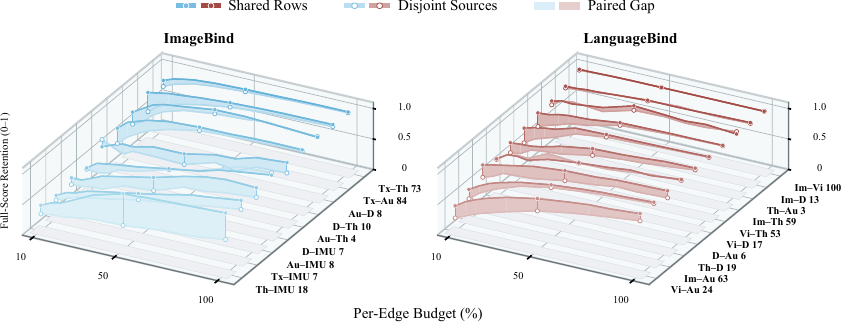}
\caption{\textbf{Source-Sample Scaling under Shared and Disjoint Estimation.} Bidirectional Recall@10 for each relation across ten nested per-edge budgets, normalized by the shared-row result at the full budget. Filled and hollow markers denote shared-row and disjoint-source estimation, respectively; shaded ribbons show their paired gaps. Numbers beside the relation labels report the shared-row Recall@10 (\%) at the full budget.}
\label{fig:hsa_sample_scaling}
\end{figure}

\noindent\textbf{Source-Sample Scaling.}
To determine how the remaining shared--disjoint difference changes with source
sample size, we estimate both conditions at ten nested per-edge budgets,
$\{10,15,20,25,35,50,65,75,85,100\}\%$, over split seeds 42--46.
For each relation, \hsaFigRef{fig:hsa_sample_scaling} normalizes both trajectories
by the shared-row result at the full budget; the endpoint label gives that
relation's absolute full-budget Recall@10.
As the per-edge source budget increases from 10\% to 100\%, the mean Recall@10
over all nineteen relations rises from 17.94\% to 29.05\% under disjoint
estimation and from 22.68\% to 30.21\% under shared-row estimation. The
difference between the two conditions contracts from 4.73 points at the 10\%
budget to 1.16 points at the full budget. At 50\%, the disjoint estimator
already reaches 26.59\%, or 91.5\% of its full-budget value. The gap narrows on both ImageBind and LanguageBind, showing a consistent sample-budget trend across the two backbones. Source-row sharing therefore primarily
improves moment estimation at low sample budgets; as source data increase, HSA
under disjoint estimation approaches the shared-row readout.

\hsaTakeaway{The main retrieval gain depends on valid correspondence
within both hub edges and persists with disjoint source instances.
The shrinking shared--disjoint gap with more data links the benefit
of source sharing to estimation precision.}

\Needspace{4\baselineskip}
\subsection{\textcolor{black}{Where Is Hub-Readable Knowledge Located? (Q3)}}
\label{sec:exp_spectral}

{\color{black}
Having identified the source information in Q2, Q3 follows it from relation construction to spectral localization and capacity: we test the required hub covariance, the informativeness of leading carriers, and the rate at which their gain accumulates.
\par}

\noindent\textbf{Hub-Readable Relation Construction.}
We compare full hub covariance construction, diagonal hub covariance construction, and trace-matched identity construction, fixing target standardization, candidate-resolution coordinates, gate construction, gallery, and retrieval protocol.
Full hub covariance construction preserves variances and cross-dimensional correlations, while diagonal hub covariance construction preserves variances alone.
Trace-matched identity construction weights dimensions equally and matches the trace of the full regularized inverse covariance matrix.
\mbox{\hsaTableRef{tab:relation_construction}} reports mean Recall@10 of 31.15\%, 27.02\%, and 26.95\%, respectively.
Full hub covariance construction improves 18 of 19 relations against each alternative, with mean gains of 4.13 and 4.21 points.
The respective 95\% intervals are [1.92, 5.76] and [2.50, 5.42], with Holm-adjusted $p=0.00391$ for both comparisons.
Preserving cross-dimensional hub correlations thus improves relation recovery and retrieval.

\begin{table}[t]
\caption{\textbf{Construction of the Hub-Readable Relation.} Mean bidirectional Recall@10 (\%); parentheses show changes from full HSA.}
\label{tab:relation_construction}
\centering
\footnotesize
\setlength{\tabcolsep}{3.5pt}
\renewcommand{\arraystretch}{1.13}
\begin{tabularx}{\linewidth}{
  @{}
  >{\columncolor{white}[0pt][\tabcolsep]}l
  *{2}{Y}
  >{\columncolor{white}[\tabcolsep][0pt]}Y
  @{}
}
\toprule
Construction & ImageBind & LanguageBind & Overall \\
\midrule
Diagonal Hub Covariance
& \lossinline{20.88}{-4.02}
& \lossinline{32.55}{-4.23}
& \lossinline{27.02}{-4.13} \\
Trace-Matched Identity
& \lossinline{20.76}{-4.13}
& \lossinline{32.51}{-4.27}
& \lossinline{26.95}{-4.21} \\
\midrule
\rowcolor{hsarow}
HSA (Ours)
& \textbf{24.90}
& \textbf{36.79}
& \textbf{31.15} \\
\bottomrule
\end{tabularx}
\end{table}

\noindent\textbf{Spectral Carrier Localization.}
We compare leading, subsequent, and random paired directions within the
complete HSA score. Carrier count, candidate-resolution coordinates,
the source gate, and the resolution scale remain fixed at the intact fit.
All direction sets receive the same element-wise leading reliability
spectrum, with carrier scales estimated separately using the same
source-only mismatch rule. The subsequent condition uses the next
singular directions, while the random condition averages five paired
draws from the complement of the leading block. Leading carriers
reproduce full HSA. \mbox{\hsaTableRef{tab:hsa_carrier_localization}}
reports 31.15\% mean Recall@10 for leading carriers, compared with
14.72\% for subsequent directions and 11.77\% for random directions.
Their mean advantages are 16.44 points [6.31, 25.41] and
19.38 points [7.06, 29.21], respectively. Both comparisons have
Holm-adjusted dataset-group $p=0.00195$.
\mbox{\hsaFigRef{fig:hsa_carrier_localization}} shows that leading
carriers win all nineteen relations. This result localizes usable
relation information to the leading paired directions under matched
carrier counts and assigned reliability spectra.

\begin{figure}[t]
\centering
\includegraphics[width=\linewidth]{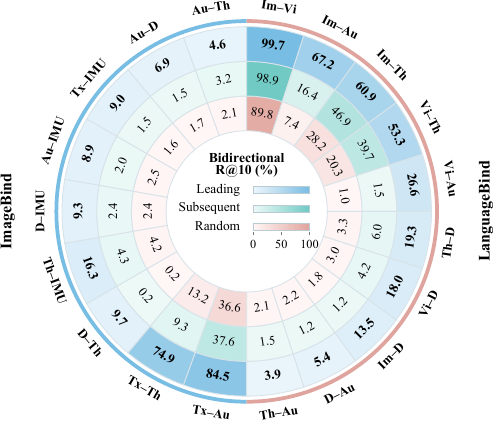}
\caption{\textbf{Spectral Carrier Localization in Complete HSA.} From outer to inner, the three heatmap rings report bidirectional Recall@10 (\%) for leading, subsequent, and random paired directions. Carrier counts and assigned reliability spectra are matched within each relation. Results for random directions are averaged over five seeds.}
\label{fig:hsa_carrier_localization}
\end{figure}

\begin{table}[t]
\caption{\textbf{Spectral Carrier Localization in the Complete Score.} All conditions receive the same leading reliability spectrum and use the same source-only rule for direction-specific carrier calibration. Values are mean bidirectional Recall@10 (\%); parentheses show changes from the leading condition.}
\label{tab:hsa_carrier_localization}
\centering\footnotesize
\setlength{\tabcolsep}{3pt}
\renewcommand{\arraystretch}{1.13}
\begin{tabularx}{\linewidth}{
  @{}
  >{\columncolor{white}[0pt][\tabcolsep]}l
  *{2}{Y}
  >{\columncolor{white}[\tabcolsep][0pt]}Y
  @{}
}
\toprule
Carrier Block & ImageBind & LanguageBind & Overall \\
\midrule
Subsequent & \lossinline{6.89}{-18.01} & \lossinline{21.76}{-15.03} & \lossinline{14.72}{-16.44} \\
Random & \lossinline{7.16}{-17.74} & \lossinline{15.92}{-20.86} & \lossinline{11.77}{-19.38} \\
\midrule\rowcolor{hsarow}
Leading (HSA) & \textbf{24.90} & \textbf{36.79} & \textbf{31.15} \\
\bottomrule\end{tabularx}
\end{table}

\noindent\textbf{Relation-Wise Carrier Capacity.}
To quantify carrier requirements, we retain nested leading prefixes
from 0\% to 100\% in 5\% steps, fixing the HSA fit,
candidate-resolution branch, gate, and gallery. Each prefix receives
source-only carrier-scale calibration; five equal-count random subsets
(seeds 42--46) provide capacity-matched controls. Let $R(c)$ denote
bidirectional Recall@10 at retained fraction $c$, and $R(0)$ the result
with zero carriers and all remaining score components retained.
For positive full gain $R(1)-R(0)$, C90 is the smallest evaluated fraction
at which the score reaches $R(0)+0.9[R(1)-R(0)]$ and remains at or above
this threshold at all subsequent checkpoints. Fractions are relative
to the full carrier set selected for each relation.
\mbox{\hsaFigRef{fig:hsa_relationwise_carrier_capacity}} shows median
C90 values of 10\% for ImageBind and 40\% for LanguageBind.
Eight of nine ImageBind relations and four of the nine defined
LanguageBind relations reach C90 by 25\% capacity. For LanguageBind
Im--D, Recall@10 falls from 14.53\% at zero carriers to 13.53\%
at full capacity, leaving C90 undefined. ImageBind typically concentrates
gains in shorter leading prefixes, whereas LanguageBind requires
broader carrier coverage.

\begin{figure}[t]
\centering
\includegraphics[width=\linewidth]{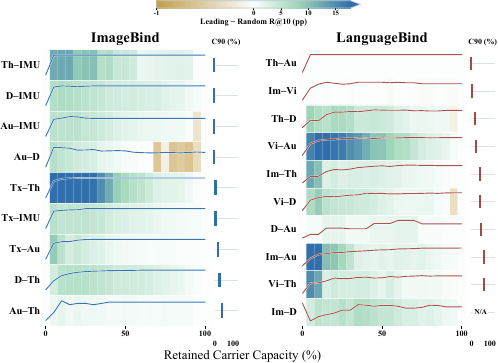}
\caption{\textbf{Relation-Wise Carrier Capacity in HSA.} Curves show retrieval performance as progressively more leading carriers are retained. Background colors indicate Recall@10 differences from equal-count random carrier subsets, in percentage points. C90 denotes the minimum retained capacity that sustains at least 90\% of the full-carrier gain over the zero-carrier baseline; N/A indicates a nonpositive full-carrier gain.}
\label{fig:hsa_relationwise_carrier_capacity}
\end{figure}

\begin{figure}[t]
\centering
\includegraphics[width=0.98\linewidth]{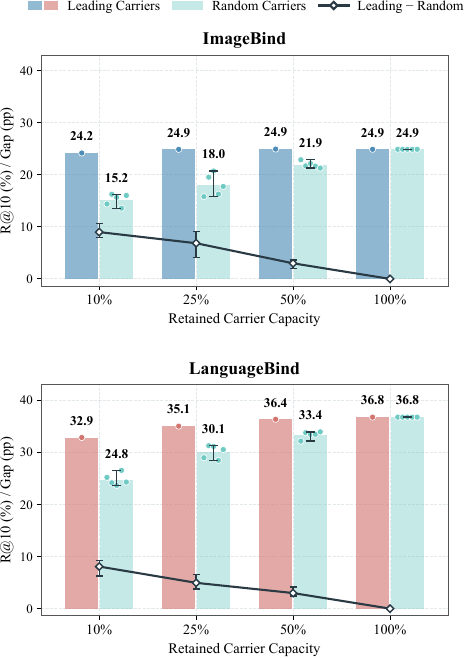}
\vspace{-0.3cm}
\caption{\textbf{Backbone-Level Carrier Accumulation in HSA.} Bars show mean bidirectional Recall@10 for leading and random carriers; circles mark the leading result and five seed-specific random means. Diamonds show leading-minus-random gaps in percentage points; error bars span the seed range.}
\label{fig:hsa_backbone_carrier_accumulation}
\end{figure}

\begin{table}[t]
\caption{\textbf{Carrier-Prefix Accumulation.} Mean bidirectional Recall@10 (\%) and median C90; score parentheses show changes from full HSA.}
\label{tab:hsa_carrier_capacity_summary}
\centering
\footnotesize
\setlength{\tabcolsep}{4.0pt}
\renewcommand{\arraystretch}{1.13}
\begin{tabularx}{\linewidth}{
  @{}
  >{\columncolor{white}[0pt][\tabcolsep]}l
  *{2}{Y}
  >{\columncolor{white}[\tabcolsep][0pt]}Y
  @{}
}
\toprule
Retained Carriers & ImageBind & LanguageBind & Overall \\
\midrule
Leading 25\%
& \lossinline{24.89}{-0.01}
& \lossinline{35.06}{-1.73}
& \lossinline{30.24}{-0.91} \\
Leading 50\%
& \gaininline{\textbf{24.94}}{+0.05}
& \lossinline{36.38}{-0.41}
& \lossinline{30.96}{-0.19} \\
Random 50\%
& \lossinline{21.95}{-2.95}
& \lossinline{33.38}{-3.40}
& \lossinline{27.97}{-3.19} \\
\midrule
\rowcolor{hsarow}
Full HSA (Ours)
& 24.90
& \textbf{36.79}
& \textbf{31.15} \\
\midrule
Median C90
& 10\% & 40\% & 20\% \\
\bottomrule
\end{tabularx}
\end{table}

\noindent\textbf{Backbone-Level Carrier Accumulation.}
\mbox{\hsaFigRef{fig:hsa_backbone_carrier_accumulation}} compares
leading prefixes with the mean of five equal-count random subsets
at 10\%, 25\%, 50\%, and 100\% retained capacity.
Under the complete HSA score,
\mbox{\hsaTableRef{tab:hsa_carrier_capacity_summary}} reports
mean bidirectional Recall@10 of 30.24\% and 30.96\% for the
leading 25\% and 50\% prefixes, respectively, compared with
31.15\% for full HSA. Increasing capacity from 25\% to 50\%
adds 0.72 percentage points, while including the remaining half
adds a further 0.19 points. Thus, the aggregate performance
approaches the full-capacity result with progressively smaller
additional gains over these intervals.
At each partial capacity shown, leading prefixes achieve higher
mean Recall@10 than random subsets for both backbones.
At 50\% capacity, random subsets reach 27.97\% overall,
approximately 3.0 percentage points below the leading prefix.
This comparison supports the value of spectral ordering when
the number of retained carriers is constrained.
The gap narrows as capacity increases and vanishes at 100\%,
where both conditions use the same full carrier set.
The accumulation patterns differ between the backbones on their
respective relation sets. ImageBind is already close to its
full-capacity mean at 25\% capacity (24.89\% versus 24.90\%)
and changes little at the larger reported capacities.
LanguageBind shows a more gradual increase, from 35.06\% at
25\% capacity to 36.38\% at 50\% and 36.79\% at full capacity.
Together with the localization controls, these results support
spectral ordering as a useful criterion for retaining retrieval
performance with fewer carriers. The extent to which a short
leading prefix preserves performance depends on the evaluated
backbone and relation set.

\hsaTakeaway{Cross-dimensional hub covariance improves recovery,
and usable evidence concentrates in leading paired carriers.
Their advantage under matched reliability spectra shows that
spectral direction matters; differing accumulation rates reveal
that this concentration varies across backbones.}

\Needspace*{4\baselineskip}
\subsection{\textcolor{black}{How Does HSA Turn Knowledge into Rankings? (Q4)}}
\label{sec:exp_components}

{\color{black}
Q4 tests how carrier reliability and readout components convert
the localized relation into ranking gains.
\par}

\noindent\textbf{Reliability-Controlled Activation.}
To test how the channel reliabilities $c_j$ control HSA's use of the localized
knowledge, we fix directions, retained rank, candidate-resolution coordinates,
the gate, and both mismatch scales, then replace every retained reliability by
$c_j(\alpha)=\alpha c_j$ for
$\alpha\in\{0,0.05,\ldots,1\}$. Each displayed trajectory is centered at its
zero-dose Recall@10 and divided by its largest observed increase; all reported
statistics use the unnormalized values.
As shown in \hsaFigRef{fig:hsa_activation_dose}, mean Recall@10 rises from 19.15\% at $\alpha=0$ to 31.15\% at $\alpha=1$.
Eighteen relations improve at full reliability, and the median within-relation
Spearman correlation between $\alpha$ and Recall@10 is 0.913. As the reliability parameters are restored, carrier evidence progressively contributes to the score and improves retrieval on most relations.

\noindent\textbf{Component Responsibilities.}
We assess carrier evidence with full-gallery Recall@10 and each remaining
component with its function-specific metric. Reliability uses
paired-versus-mismatched discrimination (P-AUC); candidate resolution uses
Top-1 accuracy in the fixed carrier top-ten candidate pool (C@1).
The source gate uses relation-level benefit discrimination (G-AUC),
and orthogonalization uses absolute correlation between the carrier and
resolution scores (Red.). These additional metrics were specified before
measurement and analyzed exploratorily.
\mbox{\hsaTableRef{tab:hsa_component_contribution}} reports
backbone-specific Recall@10 ablations and relation-equal function-metric
changes; G-AUC uses all nineteen relations jointly. Removing the carrier
score reduces mean Recall@10 by 12.00 points, the largest drop.
Uniform channel reliability and removal of candidate resolution,
the source gate, and orthogonalization reduce it by 2.48, 3.43, 3.86,
and 0.99 points, respectively. The corresponding metrics show a 5.20-point improvement in paired-versus-mismatched discrimination AUC, a 4.43-point improvement in Top-1 accuracy within the fixed carrier top-ten candidate pool, and lower cross-branch correlation. The source gate discriminates whether candidate resolution
improves retrieval with an AUC of 0.714. Candidate-resolution accuracy is conditional on the carrier top-ten pool containing a
valid match. Complete metric definitions,
backbone breakdowns, counts, and intervals are provided in the appendix.

\begin{figure}[t]
\centering
\includegraphics[width=0.98\linewidth]{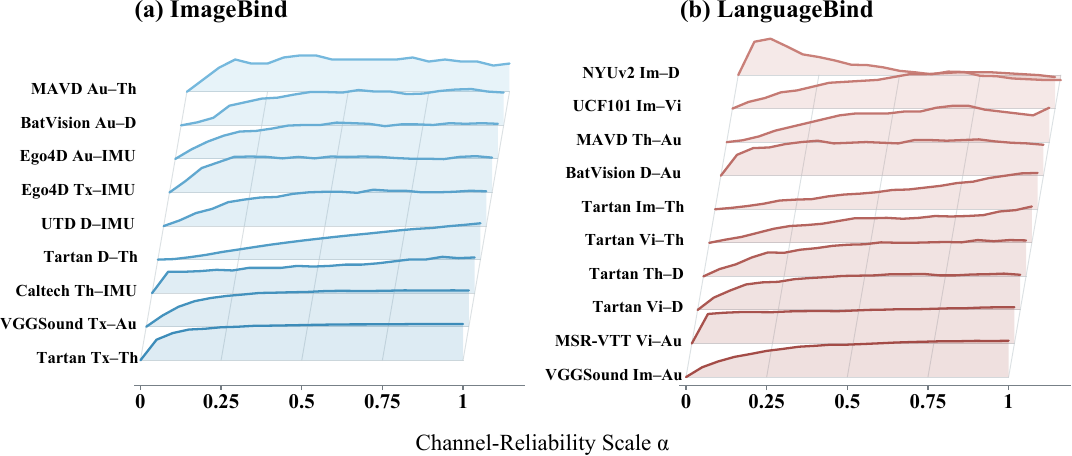}
\vspace{-0.3cm}
\caption{\textbf{Reliability-Controlled Spectral Activation in HSA.} Each curve shows the change in bidirectional Recall@10 relative to $\alpha=0$, normalized by the maximum observed increase for that relation across 21 channel-reliability scales in $[0,1]$. Panels (a) and (b) show results for ImageBind and LanguageBind, respectively.}
\label{fig:hsa_activation_dose}
\end{figure}

\begin{table}[t]
\caption{\textbf{Component Ablations.} Backbone-specific Recall@10 (\%)
and overall function-metric changes relative to full HSA.
Changes are in percentage points except Red., which uses absolute
correlation. Arrows indicate preferred directions; blue/red mark
improvements/degradations.}
\label{tab:hsa_component_contribution}
\centering\footnotesize
\setlength{\tabcolsep}{1.5pt}
\renewcommand{\arraystretch}{1.13}
\renewcommand{\tabularxcolumn}[1]{m{#1}}
\begin{adjustbox}{Clip=0pt 0pt 0pt 0pt}
\begin{tabularx}{\linewidth}{@{}>{\raggedright\arraybackslash}m{0.29\linewidth}*{2}{Y}M{0.13\linewidth}M{0.115\linewidth}@{}}
\toprule
Ablation & \shortstack{ImageBind\\R@10 ($\Delta$)} & \shortstack{LanguageBind\\R@10 ($\Delta$)} & Metric & $\Delta$ Metric \\
\midrule
w/o Carrier Score & 10.47\,{\scriptsize\color{hsaloss}(-14.43)} & 26.96\,{\scriptsize\color{hsaloss}(-9.82)} & R@10 $\uparrow$ & {\color{hsaloss}-12.00} \\
w/o Channel Reliability & 23.60\,{\scriptsize\color{hsaloss}(-1.30)} & 33.25\,{\scriptsize\color{hsaloss}(-3.53)} & P-AUC $\uparrow$ & {\color{hsaloss}-5.20} \\
w/o Candidate Resolution & \textbf{25.33}\,{\scriptsize\color{hsamain}(+0.44)} & 29.87\,{\scriptsize\color{hsaloss}(-6.92)} & C@1 $\uparrow$ & {\color{hsaloss}-4.43} \\
w/o Source Gate & 19.29\,{\scriptsize\color{hsaloss}(-5.61)} & 34.50\,{\scriptsize\color{hsaloss}(-2.29)} & G-AUC $\uparrow$ & {\color{hsaloss}-21.43} \\
w/o Orthogonalization & 25.15\,{\scriptsize\color{hsamain}(+0.26)} & 34.67\,{\scriptsize\color{hsaloss}(-2.12)} & Red. $\downarrow$ & {\color{hsaloss}+0.273} \\
\midrule
\rowcolor{hsarow} \textbf{Full HSA (Ours)} & 24.90 & \textbf{36.79} & -- & -- \\
\bottomrule
\end{tabularx}
\end{adjustbox}
\end{table}

\hsaTakeaway{Reliability weights carrier evidence, while the source
gate controls candidate resolution, making spectral activation
sensitive to both relation strength and residual geometry.}
\Needspace*{4\baselineskip}

\begin{table}[t]
\caption{\textbf{Relation Fitting and Retrieval Cost.} Median cost over 19 relations. Fit/Train is in seconds. Scoring is seconds per million directional pairs; Steps is the median final-fit count. $^\dagger$ marks methods that use target-pair identities.}
\label{tab:hsa_efficiency}
\centering
\scriptsize
\setlength{\tabcolsep}{1.8pt}
\renewcommand{\arraystretch}{1.02}
\begin{adjustbox}{Clip=0pt 0pt 0pt 0pt}
\begin{tabularx}{\linewidth}{@{}>{\raggedright\arraybackslash}p{0.31\linewidth}*{4}{Y}@{}}
\toprule
Method & Fit/Train & s/$10^6$ Pairs & Params. & Steps \\
\midrule
\rowcolor{gray!12}
Full A--B$^\dagger$ & 0.532 & 0.015 & 2.10--2.62M & 200 \\
\midrule
Frozen Cosine & -- & 0.015 & 0 & 0 \\
Hub-Relative~\cite{relrep} & 0.014 & 0.035 & 0 & 0 \\
Bi. Ridge~\cite{hoerl1970ridge} & 1.076 & 0.032 & 0 & 0 \\
Bi. Procrustes~\cite{schonemann1966} & 1.506 & 0.032 & 0 & 0 \\
ReAlign~\cite{realign} & 0.042 & 0.029 & 0 & 0 \\
\midrule
ERM~\cite{domainbed} & 5.376 & 0.019 & 3.15--3.94M & 400 \\
IRM~\cite{irm} & 9.024 & 0.020 & 3.15--3.94M & 400 \\
VREx~\cite{vrex} & 5.533 & 0.020 & 3.15--3.94M & 400 \\
DANN~\cite{dann} & 6.354 & 0.019 & 3.22--4.00M & 400 \\
CORAL~\cite{deepcoral} & 7.296 & 0.019 & 3.15--3.94M & 400 \\
\midrule
ASIF$^\dagger$~\cite{asif} & 0.015 & 0.730 & 0 & 0 \\
Paired-OP$^\dagger$~\cite{schonemann1966} & 1.732 & 0.041 & 0 & 0 \\
\midrule
\rowcolor{hsarow}
\textbf{HSA (Ours)} & 6.503 & 0.031 & 0 & 0 \\
\bottomrule
\end{tabularx}
\end{adjustbox}
\end{table}

\subsection{\textcolor{black}{What Does the Readout Cost? (Q5)}}
\label{sec:exp_efficiency}
We profile 14 methods across 19 retrieval and 11 classification relations, reporting fit/train time, scoring cost, trainable parameters, and median final-fit steps. For trainable baselines, training durations are selected using pilot validation splits drawn only from the training partition. Full A--B uses a single 200-step budget, selected globally by minimizing aggregate validation loss on these pilot splits and then fixed across relations and seeds. Fitting uses method-specific software environments, limiting direct timing comparisons; classification readouts share a common CPU configuration. Timing protocols and reproduction checks are detailed in the appendix.
\mbox{\hsaTableRef{tab:hsa_efficiency}} shows that HSA scores retrieval
pairs at a cost comparable to the other zero-target-pair analytic readouts. HSA fits a retrieval relation in a median of 6.503 seconds and scores one million
directional pairs in 0.031 seconds. Its
higher one-time fitting cost is amortized over subsequent queries. When encoder coordinates are compatible, retrieval and prototype classification can share the relation readout estimated from the source hub edges, without re-estimating its parameters separately for the two tasks. The classification timing table measures fitting under the separately specified classification protocol.
\mbox{\hsaTableRef{tab:hsa_classification_efficiency}} reports that, under the pooled workload across eleven classification relations, HSA has a CPU readout latency of 25.614 milliseconds per 1,000 queries and a throughput of 9.172 million query--prototype scores per second. Timing includes method-specific transforms, computation of the complete query--prototype score matrix, and Top-1 selection, quantifying the classification readout cost after fitting. Retrieval cost is normalized by directional pairs, while
classification latency includes scoring the class-prototype
bank and selecting a label. The two measurements therefore
describe the work required by their respective task readouts.
\par

\hsaTakeaway{HSA incurs fitting overhead once, then supports
low-cost retrieval and classification readouts, amortizing
relation recovery over repeated queries.}

\begin{table}[t]
\caption{\textbf{Prototype-Classification Cost.} Cost over eleven relations. Fit/Train is in seconds. Readout reports CPU milliseconds per 1,000 queries and millions of query--prototype scores per second. Steps is the median final-fit count; $^\dagger$ marks methods that use target-pair identities.}
\label{tab:hsa_classification_efficiency}
\centering
\scriptsize
\setlength{\tabcolsep}{1.0pt}
\renewcommand{\arraystretch}{1.02}
\begin{adjustbox}{Clip=0pt 0pt 0pt 0pt}
\begin{tabularx}{\linewidth}{@{}>{\raggedright\arraybackslash}p{0.27\linewidth}M{0.11\linewidth}M{0.14\linewidth}M{0.13\linewidth}M{0.18\linewidth}Y@{}}
\toprule
Method & Fit/Train & ms/$10^3$ q. & M Scores/s & Params. & Steps \\
\midrule
\rowcolor{gray!12}
Full A--B$^\dagger$ & 1.028 & 7.535 & 31.181 & 2.10--2.62M & 200 \\
\midrule
Frozen Cosine & -- & 1.139 & 206.181 & 0 & 0 \\
Hub-Relative~\cite{relrep} & 0.005 & 18.838 & 12.472 & 0 & 0 \\
Bi. Ridge~\cite{hoerl1970ridge} & 0.734 & 18.971 & 12.384 & 0 & 0 \\
Bi. Procrustes~\cite{schonemann1966} & 0.684 & 20.070 & 11.706 & 0 & 0 \\
ReAlign~\cite{realign} & 0.022 & 9.388 & 25.024 & 0 & 0 \\
\midrule
ERM~\cite{domainbed} & 4.958 & 10.894 & 21.566 & 3.15--3.94M & 400 \\
IRM~\cite{irm} & 8.638 & 10.813 & 21.727 & 3.15--3.94M & 400 \\
VREx~\cite{vrex} & 5.272 & 10.590 & 22.184 & 3.15--3.94M & 400 \\
DANN~\cite{dann} & 5.934 & 10.726 & 21.903 & 3.22--4.00M & 400 \\
CORAL~\cite{deepcoral} & 7.102 & 10.754 & 21.847 & 3.15--3.94M & 400 \\
\midrule
ASIF$^\dagger$~\cite{asif} & 0.013 & 151.247 & 1.553 & 0 & 0 \\
Paired-OP$^\dagger$~\cite{schonemann1966} & 0.698 & 21.868 & 10.743 & 0 & 0 \\
\midrule
\rowcolor{hsarow}
\textbf{HSA (Ours)} & 4.617 & 25.614 & 9.172 & 0 & 0 \\
\bottomrule
\end{tabularx}
\end{adjustbox}
\end{table}

\Needspace*{6\baselineskip}
\section{Conclusion}
This work characterizes the hub-readable component of latent multimodal
knowledge in frozen binding models and shows how it can support cross-modal
comparison. Our analysis establishes what can be recovered from the
second-order statistics of two trained hub connections and clarifies the
limits imposed by the hub. Building on this characterization, HSA expresses
the recovered relation through paired spectral carriers and constructs a
closed-form readout for retrieval and prototype classification, without
target-pair fitting or backbone updates. Experiments across two frozen
backbones, covering nineteen retrieval and eleven classification relations,
show improved average performance over native cosine scores.
Correspondence interventions and spectral controls further support the
roles of valid within-edge correspondence and leading paired directions
in the observed retrieval gains. Together, these findings connect the
recoverability of latent multimodal structure with its practical utility,
positioning HSA as both a relation readout and a diagnostic of the knowledge
accessible through a frozen hub.
Future work could extend relation recovery to nonlinear dependencies and
integrate complementary information from multiple hubs. 
Tracking the recovered
relations and their task utility during training could also help clarify how
latent multimodal knowledge emerges, evolves, and becomes usable.

\phantomsection
\label{sec:references}

\makeatletter
\begingroup
\edef\@currentlabel{\arabic{equation}}\label{hsa:main_last_equation}
\edef\@currentlabel{\arabic{figure}}\label{hsa:main_last_figure}
\edef\@currentlabel{\arabic{table}}\label{hsa:main_last_table}
\endgroup
\makeatother

\clearpage
\renewcommand{\hsaTakeaway}[1]{%
  \par\nobreak
  \ifdim\prevdepth>-1000pt \vskip-\prevdepth \fi
  \vskip8pt\relax\nointerlineskip
  \begingroup
  \setlength{\fboxsep}{4pt}%
  \vbox{\hbox{\colorbox{gray!12}{\parbox{\dimexpr\linewidth-2\fboxsep\relax}{\small\textcolor{black}{\textbf{Takeaway.} #1}}}}\kern0pt}%
  \endgroup
  \nobreak\vskip6pt\relax}

\appendices
\renewcommand*{\theHtable}{app.\arabic{table}}
\renewcommand{\thesubsection}{\thesection.\arabic{subsection}}
\renewcommand{\thesubsectiondis}{\thesection.\arabic{subsection}}

\definecolor{hsaappendixblue}{RGB}{78,112,136}
\definecolor{hsaappendixdots}{RGB}{164,178,188}
\newcommand{\hsaappendixdotfill}{%
  \leavevmode\leaders\hbox{\textcolor{hsaappendixdots}{.}\kern0.35em}\hfill\kern0pt}
\newlength{\hsaappendixlabelwidth}
\setlength{\hsaappendixlabelwidth}{17.5pt}
\newlength{\hsasubappendixlabelwidth}
\setlength{\hsasubappendixlabelwidth}{20pt}

\newcommand{\hsaappendixentry}[3]{%
  \par\noindent\normalsize
  \begin{tabularx}{\linewidth}{@{}>{\raggedright\arraybackslash}p{\hsaappendixlabelwidth}@{}>{\raggedright\arraybackslash}p{0.70\linewidth}@{\hspace{0.45em}}X@{\hspace{0.45em}}r@{}}
  \hyperref[#3]{\textcolor{hsaappendixblue}{\textbf{#1}}} &
  \hyperref[#3]{\textcolor{hsaappendixblue}{\textbf{#2}}} &
  \hsaappendixdotfill &
  \hyperref[#3]{\textcolor{black}{\pageref*{#3}}}
  \end{tabularx}\par
  \vspace{0.58em}}

\newcommand{\hsasubappendixentry}[3]{%
  \par\noindent\small
  \begin{tabularx}{\linewidth}{@{\hspace{\hsaappendixlabelwidth}}>{\raggedright\arraybackslash}p{\hsasubappendixlabelwidth}@{}>{\raggedright\arraybackslash}p{0.60\linewidth}@{\hspace{0.45em}}X@{\hspace{0.45em}}r@{}}
  \hyperref[#3]{\textcolor{hsaappendixblue!88!black}{#1}} &
  \hyperref[#3]{\textcolor{hsaappendixblue!88!black}{#2}} &
  \hsaappendixdotfill &
  \hyperref[#3]{\textcolor{black}{\pageref*{#3}}}
  \end{tabularx}\par
  \vspace{0.58em}}

\twocolumn[{%
\vspace*{1.2\baselineskip}
\pdfbookmark[1]{Appendix Contents}{appendix.contents}
\noindent{\fontsize{24}{28}\selectfont\bfseries Appendix\par}
\vspace{1.9em}
}]
\raggedbottom
\begingroup
\hsaappendixentry{A}{Source Model, Knowledge Definition, and Information Interface}{app:source_model}
\hsasubappendixentry{A.1}{Latent-State Standardization and Factor Model}{app:latent_state_factor_model}
\hsasubappendixentry{A.2}{Source-Induced Knowledge and Target Residuals}{app:knowledge_residuals}
\hsasubappendixentry{A.3}{Invariance and the Observable Information Interface}{app:information_interface}
\hsaappendixentry{B}{Hub Visibility and the Identifiable Relation}{app:hub_visibility}
\hsasubappendixentry{B.1}{Hub-Visible State and Relation}{app:hub_visible_derivation}
\hsasubappendixentry{B.2}{Exact Recovery and Non-Identifiability}{app:identifiability_boundary}
\hsasubappendixentry{B.3}{Hub-Capacity Bound}{app:hub_capacity_proof}
\hsaappendixentry{C}{Finite-Sample Estimation, Regularization, and Rank Roles}{app:finite_sample}
\hsasubappendixentry{C.1}{Edge-Wise Moment Estimation}{app:edge_moments}
\hsasubappendixentry{C.2}{Regularized Relation and Standardized Carrier}{app:regularized_relation}
\hsasubappendixentry{C.3}{Rank Selection and Hub-Capacity Safeguards}{app:rank_rules}
\hsaappendixentry{D}{Carrier Readout and Gaussian Evidence}{app:carrier_llr}
\hsasubappendixentry{D.1}{Optimal Carrier Directions}{app:optimal_carrier_directions}
\hsasubappendixentry{D.2}{Gaussian Evidence Expansion}{app:gaussian_evidence}
\hsaappendixentry{E}{Candidate Resolution, Calibration, and Score Formation}{app:fusion}
\hsasubappendixentry{E.1}{Orthogonal Candidate-Resolution Score}{app:candidate_resolution_score}
\hsasubappendixentry{E.2}{Source-Derived Reliability Gate}{app:reliability_gate}
\hsasubappendixentry{E.3}{Mismatch Calibration and Final Score}{app:score_calibration}
\hsaappendixentry{F}{Score Blocks, Rankings, Algorithm, and Complexity}{app:retrieval}
\hsasubappendixentry{F.1}{Blockwise Scoring and Bidirectional Rankings}{app:blockwise_scoring}
\hsasubappendixentry{F.2}{Execution Phases and Complexity}{app:execution_complexity}
\hsaappendixentry{G}{Classification Readout, Datasets, and Results}{app:classification}
\hsasubappendixentry{G.1}{Classification Protocol and Metrics}{app:classification_protocol}
\hsasubappendixentry{G.2}{Datasets, Splits, and Class Definitions}{app:classification_data}
\hsasubappendixentry{G.3}{Classification Eligibility}{app:classification_boundaries}
\hsasubappendixentry{G.4}{Retrieval and Prototype Classification Under a Single Fitted State}{app:frozen_fit_reuse}
\hsaappendixentry{H}{Retrieval Protocol and Reproducibility}{app:experimental_protocol}
\hsasubappendixentry{H.1}{Detailed Retrieval Protocol}{app:exp_protocol}
\hsasubappendixentry{H.2}{Efficiency Measurement and Outcome Checks}{app:efficiency_protocol}
\hsasubappendixentry{H.3}{Seed Variability of Trainable Baselines}{app:seed_variability}
\hsasubappendixentry{H.4}{Excluded Degenerate Unit}{app:excluded_unit}
\hsaappendixentry{I}{Additional Retrieval Results}{app:supp_experiments}
\hsasubappendixentry{I.1}{Direction-Wise Retrieval Consistency}{app:directional_consistency}
\hsasubappendixentry{I.2}{Recovery Relative to Target-Paired Training}{app:recovery_relative}
\hsasubappendixentry{I.3}{Robustness to Aggregation and Chance Level}{app:aggregate_robustness}
\hsaappendixentry{J}{Mechanism and Diagnostic Analyses}{app:diagnostics}
\hsasubappendixentry{J.1}{Complete Aggregate Intervention Statistics}{app:intervention_statistics}
\hsasubappendixentry{J.2}{Complete Component-Ablation Visualization}{app:component_ablation_details}
\hsasubappendixentry{J.3}{Matched Shared-Row and Disjoint-Source Control}{app:disjoint_source}
\hsasubappendixentry{J.4}{Correspondence Resolution under Group-Preserving Permutations}{app:granularity}
\hsaappendixentry{K}{Dataset-Cluster Statistics and Robustness}{app:cluster_statistics}
\hsasubappendixentry{K.1}{Estimands, Resampling, and Complete Families}{app:cluster_protocol}
\hsasubappendixentry{K.2}{Component Responsibilities: Definitions and Complete Results}{app:component_functions}
\hsasubappendixentry{K.3}{NYUv2 Evaluation Protocol and Cross-Dataset Robustness}{app:nyuv2_boundary}
\endgroup
\clearpage
\flushbottom

\setlength{\textfloatsep}{12pt plus 2pt minus 2pt}
\setlength{\dbltextfloatsep}{12pt plus 2pt minus 2pt}
\setlength{\floatsep}{8pt plus 2pt minus 2pt}
\setlength{\dblfloatsep}{8pt plus 2pt minus 2pt}

\renewcommand{\tabularxcolumn}[1]{m{#1}}

\section{Source Model, Knowledge Definition, and Information Interface}
\label{app:source_model}

\noindent\textbf{Recall.}
\hsaDefinitionRef{def:latent_knowledge} introduces
$\mathcal K_{AB}$ through source-conditioned means.
\hsaAssumptionRef{ass:hsa_source_model} specifies the second-order model used to read this relation through the hub. This appendix derives the factor form, separates source-induced knowledge from target-residual dependence, and states the observable information interface.

\subsection{Latent-State Standardization and Factor Model}
\label{app:latent_state_factor_model}

Let $\mathbf L^{(0)}$ have mean $\boldsymbol\mu_L$ and covariance $C_L$. After restricting $C_L$ to its support when necessary, define the standardized state
\begin{equation}
\mathbf L=C_L^{-1/2}(\mathbf L^{(0)}-\boldsymbol\mu_L),
\qquad
G_m=G_m^{(0)}C_L^{1/2}
\label{eq:app_latent_standardization}
\end{equation}
The standardized state satisfies $\mathbb E[\mathbf L]=0$ and
$\operatorname{Cov}(\mathbf L)=I_q$.

Under the conditional-mean model
\begin{equation}
\begin{aligned}
\mathbf A&=\boldsymbol\mu_A+G_A\mathbf L+\boldsymbol\varepsilon_A,\\
\mathbf H&=\boldsymbol\mu_H+G_H\mathbf L+\boldsymbol\varepsilon_H,\\
\mathbf B&=\boldsymbol\mu_B+G_B\mathbf L+\boldsymbol\varepsilon_B,
\end{aligned}
\label{eq:app_factor_model}
\end{equation}
the residual conditions used by HSA are
\begin{equation}
\begin{aligned}
\mathbb E[\boldsymbol\varepsilon_m\mid\mathbf L]&=0,\\
\operatorname{Cov}(\boldsymbol\varepsilon_A,\boldsymbol\varepsilon_H)&=0,
\qquad
\operatorname{Cov}(\boldsymbol\varepsilon_H,\boldsymbol\varepsilon_B)=0.
\end{aligned}
\label{eq:app_residual_assumptions}
\end{equation}

\subsection{Source-Induced Knowledge and Target Residuals}
\label{app:knowledge_residuals}

The conditional means are
$\mathbb E[\mathbf A\mid\mathbf L]=\boldsymbol\mu_A+G_A\mathbf L$ and
$\mathbb E[\mathbf B\mid\mathbf L]=\boldsymbol\mu_B+G_B\mathbf L$. Hence \hsaDefinitionRef{def:latent_knowledge} gives
\begin{align}
\mathcal K_{AB}
&=\operatorname{Cov}(G_A\mathbf L,G_B\mathbf L)
=G_A\operatorname{Cov}(\mathbf L)G_B^\top
=G_AG_B^\top.
\label{eq:app_factor_knowledge}
\end{align}
The observed target cross-covariance expands separately as
\begin{align}
\Sigma_{AB}
&=\mathbb E\!\left[
(G_A\mathbf L+\boldsymbol\varepsilon_A)
(G_B\mathbf L+\boldsymbol\varepsilon_B)^\top
\right]\nonumber\\
&=G_A\mathbb E[\mathbf L\mathbf L^\top]G_B^\top
+G_A\mathbb E[\mathbf L\boldsymbol\varepsilon_B^\top]\nonumber\\
&\quad+\mathbb E[\boldsymbol\varepsilon_A\mathbf L^\top]G_B^\top
+\mathbb E[\boldsymbol\varepsilon_A\boldsymbol\varepsilon_B^\top]\nonumber\\
&=\mathcal K_{AB}+\Omega_{AB},
\qquad
\Omega_{AB}:=\operatorname{Cov}
(\boldsymbol\varepsilon_A,\boldsymbol\varepsilon_B).
\label{eq:app_target_covariance_expansion}
\end{align}
Conditional residual centering eliminates the two latent--residual terms. \hsaAssumptionRef{ass:hsa_source_model} leaves $\Omega_{AB}$ unrestricted because the target edge is unavailable to HSA. The two observed-edge residual conditions then yield
$\Sigma_{AH}=G_AG_H^\top$ and $\Sigma_{HB}=G_HG_B^\top$.

\subsection{Invariance and the Observable Information Interface}
\label{app:information_interface}

For any orthogonal matrix $R\in\mathbb R^{q\times q}$, define
$\mathbf L'=R\mathbf L$ and $G_m'=G_mR^\top$. Then
\begin{equation}
G_A'G_B'^\top
=G_AR^\top(G_BR^\top)^\top
=G_AR^\top RG_B^\top
=G_AG_B^\top.
\label{eq:app_rotation_invariance}
\end{equation}
Hence $\mathcal K_{AB}$ depends on the relation expressed in the frozen target spaces and is invariant under orthogonal changes of basis in the latent state.

Fitting uses two paired hub-edge datasets,
$\mathcal D_{AH}=\{(\mathbf a_i,\mathbf h_i^{AH})\}_{i=1}^{n_{AH}}$
and
$\mathcal D_{HB}=\{(\mathbf h_j^{HB},\mathbf b_j)\}_{j=1}^{n_{HB}}$.
The datasets may use disjoint source instances and different sample sizes. With synchronized observations, one source row may instead contribute one pair to each dataset. Both sampling designs induce the same population information interface:
\begin{equation}
\mathfrak S_H=\{\boldsymbol\mu_A,\boldsymbol\mu_H,\boldsymbol\mu_B,
\Sigma_{AA},\Sigma_{AH},\Sigma_{HH},\Sigma_{HB},\Sigma_{BB}\}.
\label{eq:app_information_interface}
\end{equation}
Its empirical counterpart contains the two edge-moment systems and target marginals. Moment estimation does not use cross-dataset row identities. The interface excludes
$n^{-1}\sum_i\widetilde{\mathbf a}_i\widetilde{\mathbf b}_i^\top$
as well as any $A$--$B$ training loss or target-informed rank selection.
The population construction permits $n_{AH}$ and $n_{HB}$ to differ. The reported finite-sample protocol enforces equal per-edge counts so that the target marginals support the derangement calibration in \hsaAppendixRef{app:fusion}.

\section{Hub Visibility and the Identifiable Relation}
\label{app:hub_visibility}

\noindent\textbf{Recall.}
\hsaTheoremRef{thm:hub_readable} characterizes the latent multimodal knowledge identified by the two-hub-edge interface and states its exact-recovery boundary.

\subsection{Hub-Visible State and Relation}
\label{app:hub_visible_derivation}

\noindent\textit{Proof of \hsaTheoremRef{thm:hub_readable}.}
From~\eqref{eq:app_factor_model}--\eqref{eq:app_residual_assumptions},

\begin{equation}
\begin{aligned}
\Sigma_{AH}&=G_AG_H^\top,
&\Sigma_{HB}&=G_HG_B^\top,\\
\Sigma_{HH}&=G_HG_H^\top+\Psi_H.
\end{aligned}
\label{eq:app_observable_edges}
\end{equation}
Because $\operatorname{Cov}(\mathbf L,\mathbf H)=G_H^\top$, the best linear predictor of the centered state from the centered hub is
\begin{equation}
\mathbf L^H
=G_H^\top\Sigma_{HH}^{\dagger}
(\mathbf H-\boldsymbol\mu_H).
\label{eq:app_visible_state}
\end{equation}
Symmetry of $\Sigma_{HH}^{\dagger}$ and the identity
$\Sigma_{HH}^{\dagger}\Sigma_{HH}\Sigma_{HH}^{\dagger}
=\Sigma_{HH}^{\dagger}$ imply
\begin{align}
\operatorname{Cov}(\mathbf L^H)
&=G_H^\top\Sigma_{HH}^{\dagger}
\Sigma_{HH}\Sigma_{HH}^{\dagger}G_H\nonumber\\
&=G_H^\top\Sigma_{HH}^{\dagger}G_H
=:\mathcal P_H,
\label{eq:app_visible_state_covariance}\\
\operatorname{Cov}(\mathbf L,\mathbf L^H)
&=\operatorname{Cov}(\mathbf L,\mathbf H)
\Sigma_{HH}^{\dagger}G_H
=\mathcal P_H.
\label{eq:app_state_visible_cross_covariance}
\end{align}
For $\mathbf L^{\perp H}=\mathbf L-\mathbf L^H$,
\begin{align}
\operatorname{Cov}(\mathbf L^{\perp H})
&=I_q-\mathcal P_H-\mathcal P_H+\mathcal P_H
=I_q-\mathcal P_H.
\label{eq:app_unexplained_state_covariance}
\end{align}
Therefore $\mathcal P_H\succeq0$ and $I_q-\mathcal P_H\succeq0$, yielding
$0\preceq\mathcal P_H\preceq I_q$.

The best linear target predictions are
\begin{equation}
\begin{aligned}
\mathbf A^H
&=\Sigma_{AH}\Sigma_{HH}^{\dagger}
(\mathbf H-\boldsymbol\mu_H)=G_A\mathbf L^H,\\
\mathbf B^H
&=\Sigma_{BH}\Sigma_{HH}^{\dagger}
(\mathbf H-\boldsymbol\mu_H)=G_B\mathbf L^H.
\end{aligned}
\label{eq:app_target_predictions}
\end{equation}
Their cross-covariance has equivalent target-space and observable-moment forms:
\begin{align}
\operatorname{Cov}(\mathbf A^H,\mathbf B^H)
&=G_A\operatorname{Cov}(\mathbf L^H)G_B^\top
=G_A\mathcal P_HG_B^\top\nonumber\\
&=\Sigma_{AH}\Sigma_{HH}^{\dagger}
\Sigma_{HH}\Sigma_{HH}^{\dagger}\Sigma_{HB}\nonumber\\
&=\Sigma_{AH}\Sigma_{HH}^{\dagger}\Sigma_{HB}
=\mathcal K_{AB}^{H}.
\label{eq:app_hub_composition}
\end{align}

\subsection{Exact Recovery and Non-Identifiability}
\label{app:identifiability_boundary}

The population knowledge relation decomposes as
\begin{equation}
\mathcal K_{AB}
=\underbrace{G_A\mathcal P_HG_B^\top}_{\mathcal K_{AB}^{H}}
+\underbrace{G_A(I_q-\mathcal P_H)G_B^\top}_{\mathcal K_{AB}^{\mathrm{rem}}}.
\label{eq:app_knowledge_decomposition}
\end{equation}
The second term equals
$\operatorname{Cov}(G_A\mathbf L^{\perp H},G_B\mathbf L^{\perp H})$ and represents the shared-source relation beyond the hub's linear resolution. Consequently,
$\mathcal K_{AB}^{H}=\mathcal K_{AB}$ holds exactly when
$G_A(I_q-\mathcal P_H)G_B^\top=0$.

The formula
$\mathcal K_{AB}^{H}=\Sigma_{AH}\Sigma_{HH}^{\dagger}\Sigma_{HB}$
is a deterministic functional of $\mathfrak S_H$, which establishes its identifiability. To show the boundary for $\mathcal K_{AB}$, fix
$\alpha,\beta\in(-1,1)$ and consider
\begin{equation}
\Gamma(t)=
\begin{bmatrix}
1&\alpha&t\\
\alpha&1&\beta\\
t&\beta&1
\end{bmatrix},
\qquad
|t-\alpha\beta|
\leq\sqrt{(1-\alpha^2)(1-\beta^2)}.
\label{eq:app_nonidentifiable_family}
\end{equation}
The stated interval is exactly the positive-semidefinite condition obtained from the Schur complement because
$\det\Gamma(t)=(1-\alpha^2)(1-\beta^2)-(t-\alpha\beta)^2$.
For each admissible $t$, choose a square root $C_tC_t^\top=\Gamma(t)$, draw
$\mathbf L_t\sim\mathcal N(0,I_3)$, and set
$[A,H,B]^\top=C_t\mathbf L_t$ with zero residuals. Every resulting model satisfies \hsaAssumptionRef{ass:hsa_source_model} and has the same means and unit marginals. Each also has
$\Sigma_{AH}=\alpha$, $\Sigma_{HH}=1$, and $\Sigma_{HB}=\beta$, so all models expose the same $\mathfrak S_H$. Yet \hsaDefinitionRef{def:latent_knowledge} gives
$\mathcal K_{AB}=t$, which varies across the interval, while the identified component remains
$\mathcal K_{AB}^{H}=\alpha\beta$. Therefore the interface generally underidentifies the complete latent knowledge relation.
\unskip\nobreak\hfil\penalty50
\hskip1em\null\nobreak\hfil$\square$\par

When $\Psi_H=0$,
$\mathcal P_H=G_H^\top(G_HG_H^\top)^{\dagger}G_H$ is the orthogonal projector onto the row space of $G_H$. If $G_H$ has full column rank, this row space is $\mathbb R^q$. Consequently, $\mathcal P_H=I_q$ and
$\mathcal K_{AB}^{H}=\mathcal K_{AB}$. With $\Psi_H\succeq0$, the eigenvalues of $\mathcal P_H$ lie in $[0,1]$ and quantify soft linear visibility.

\subsection{Hub-Capacity Bound}
\label{app:hub_capacity_proof}

\noindent\textit{Proof of \hsaCorollaryRef{cor:hub_capacity}.}
The rank inequality for a matrix product gives
\begin{equation}
\operatorname{rank}(\mathcal K_{AB}^{H})
\leq\operatorname{rank}(\Sigma_{HH}^{\dagger})
=\operatorname{rank}(\Sigma_{HH}).
\label{eq:app_hub_rank_proof}
\end{equation}
If $\mathbf H$ has support on at most $N_H$ distinct vectors, its centered support spans at most $N_H-1$ dimensions. The range of $\Sigma_{HH}$ lies within this span. Therefore,
$\operatorname{rank}(\Sigma_{HH})\leq N_H-1$ and the result follows.
\unskip\nobreak\hfil\penalty50
\hskip1em\null\nobreak\hfil$\square$\par

\section{Finite-Sample Estimation, Regularization, and Rank Roles}
\label{app:finite_sample}

\noindent\textbf{Recall.}
\hsaEqRef{eq:hsa_spectral_carrier} estimates the hub-readable relation and decomposes its standardized form. This appendix specifies the edge statistics, regularization, null rule, and empirical capacity rule used by HSA.

\subsection{Edge-Wise Moment Estimation}
\label{app:edge_moments}

For one hub edge represented by matrices $X,H\in\mathbb R^{n\times d}$, let
\begin{equation}
\widehat\mu_X=\frac1n\sum_i\mathbf x_i,
\qquad
\widehat\mu_H=\frac1n\sum_i\mathbf h_i,
\label{eq:app_leg_means}
\end{equation}
and define
\begin{align}
\widehat\Sigma_{XX}
&=\frac1n\sum_i
(\mathbf x_i-\widehat\mu_X)
(\mathbf x_i-\widehat\mu_X)^\top,
\nonumber\\
\widehat\Sigma_{XH}
&=\frac1n\sum_i
(\mathbf x_i-\widehat\mu_X)
(\mathbf h_i-\widehat\mu_H)^\top,
\nonumber\\
\widehat\Sigma_{HH}
&=\frac1n\sum_i
(\mathbf h_i-\widehat\mu_H)
(\mathbf h_i-\widehat\mu_H)^\top.
\label{eq:app_leg_covariances}
\end{align}
The covariance marginals are symmetrized numerically. HSA estimates one such system for each observed hub edge and pools the two hub covariances as
\begin{equation}
\overline\Sigma_{HH}
=\tfrac12(\widehat\Sigma_{HH}^{A}
+\widehat\Sigma_{HH}^{B}).
\label{eq:app_pooled_hub}
\end{equation}
When both edges reuse identical hub rows, the two empirical hub covariances coincide. With disjoint source rows, they are estimated separately and may differ at finite sample size. \hsaAssumptionRef{ass:hsa_source_model} requires both to estimate compatible population hub statistics.

\subsection{Regularized Relation and Standardized Carrier}
\label{app:regularized_relation}

For any $d$-dimensional covariance $C$, the trace-scaled ridge is
\begin{equation}
\rho(C)=\lambda\frac{\operatorname{tr}(C)}{d},
\qquad \lambda=0.5.
\label{eq:app_ridge}
\end{equation}
HSA evaluates
\begin{equation}
\widehat{\mathcal K}_{AB}^{H}
=\widehat\Sigma_{AH}
(\overline\Sigma_{HH}+\rho(\overline\Sigma_{HH})I)^{-1}
\widehat\Sigma_{HB}
\label{eq:app_empirical_knowledge}
\end{equation}
using a linear solve. Its raw singular-value decomposition is
\begin{equation}
\widehat{\mathcal K}_{AB}^{H}
=U_R\operatorname{diag}(\eta_1,\ldots,\eta_r)V_R^\top.
\label{eq:app_raw_svd}
\end{equation}

Shared-row and disjoint-source estimation target the same regularized population composition under \hsaAssumptionRef{ass:hsa_source_model}. Their finite-sample difference arises from cross-edge diagonal terms when both edges use the same source instances. These terms can transmit target-residual dependence through the hub quadratic form; disjoint estimation removes their co-occurrence. The matched-budget control in \hsaAppendixRef{app:disjoint_source} quantifies this effect on retrieval.

For $X\in\{A,B\}$, eigendecompose
$\widehat\Sigma_{XX}+\rho_XI
=E_X\operatorname{diag}(\nu_{X,j})E_X^\top$ and set
\begin{equation}
W_X
=E_X\operatorname{diag}
\!\left(\max(\nu_{X,j},10^{-8})^{-1/2}\right)E_X^\top.
\label{eq:app_inverse_sqrt}
\end{equation}
Then
\begin{equation}
T_H=W_A\widehat{\mathcal K}_{AB}^{H}W_B^\top
=U_C\operatorname{diag}(\sigma_1,\ldots,\sigma_r)V_C^\top.
\label{eq:app_standardized_svd}
\end{equation}

\subsection{Rank Selection and Hub-Capacity Safeguards}
\label{app:rank_rules}

To obtain $k_R$, seed 142 generates one random permutation $\pi_R$ that reorders the left hub rows relative to $A$. Fixed points are allowed. The resulting null cross-covariance is
\begin{equation}
\widehat\Sigma_{AH}^{\pi_R}
=\frac1n\sum_i
(\mathbf a_i-\widehat\mu_A)
(\mathbf h_{\pi_R(i)}-\widehat\mu_H)^\top.
\label{eq:app_null_cross_covariance}
\end{equation}
The observed target whiteners are retained to form
\begin{equation}
T_H^{\pi_R}
=W_A\widehat\Sigma_{AH}^{\pi_R}
(\overline\Sigma_{HH}+\rho(\overline\Sigma_{HH})I)^{-1}
\widehat\Sigma_{HB}W_B^\top.
\label{eq:app_null_operator}
\end{equation}
With $\sigma_{1,\mathrm{null}}=\sigma_1(T_H^{\pi_R})$,
\begin{equation}
\begin{aligned}
k_R&=\max\!\left\{1,
\min\!\left(k_{\max},r,
\sum_{j=1}^{r}
\mathbf1[\sigma_j^2>\kappa\sigma_{1,\mathrm{null}}^2]
\right)\right\},\\
\kappa&=2,
\qquad k_{\max}=256.
\end{aligned}
\label{eq:app_relation_rank}
\end{equation}
This rank determines how many columns of $U_R$ and $V_R$ define the raw subspace removed during candidate resolution.

The carrier coordinate count follows a separate rule motivated by \hsaCorollaryRef{cor:hub_capacity}. Quantize hub rows by
$q(\mathbf h)=\operatorname{round}(10^3\mathbf h)$, count distinct states $N_H^A,N_H^B$, and define
\begin{equation}
\begin{aligned}
g(N)&=\max\{1,\min(k_{\max},N-1,d)\},\\
k_H&=\min\{g(N_H^A),g(N_H^B),r\}.
\end{aligned}
\label{eq:app_hub_capacity}
\end{equation}
For genuinely distinct hub states, \hsaCorollaryRef{cor:hub_capacity} gives the exact $N-1$ bound. \hsaEquationRef{eq:app_hub_capacity} counts rounded states, whereas the covariance and singular directions use unrounded embeddings. Its state count is therefore an empirical proxy for structural capacity. The outer maximum retains one coordinate when $N=1$, and the remaining minima enforce the limits $k_{\max}$, $d$, and $r$. Thus $k_R$ sets the null-resolved dimension of the raw projector, whereas $k_H$ is a separately motivated heuristic cap on carrier coordinates.

\section{Carrier Readout and Gaussian Evidence}
\label{app:carrier_llr}

\noindent\textbf{Recall.}
\hsaPropositionRef{prop:optimal_carrier} selects the paired directions that retain the greatest standardized hub-readable relation strength. \hsaEqRef{eq:hsa_knowledge_readout} defines the resulting sample coordinates. \hsaEqRef{eq:hsa_carrier_score} converts these coordinates into carrier evidence. This appendix proves the variational result and derives the likelihood-ratio expansion.

\subsection{Optimal Carrier Directions}
\label{app:optimal_carrier_directions}

\noindent\textit{Proof of \hsaPropositionRef{prop:optimal_carrier}.}
Let $P\in\mathbb R^{d_A\times k}$ and $Q\in\mathbb R^{d_B\times k}$ satisfy
$P^\top P=Q^\top Q=I_k$. The matrix $PQ^\top$ has exactly $k$ nonzero singular values, all equal to one. Applying the von Neumann trace inequality~\cite{golub2013matrix} gives
\begin{align}
\operatorname{tr}(P^\top T_HQ)
&=\langle T_H,PQ^\top\rangle_F
\leq\sum_{j=1}^{k}\sigma_j(T_H)
=\sum_{j=1}^{k}\sigma_j.
\label{eq:app_carrier_variational_bound}
\end{align}
Choosing $P=U_{C,k}$ and $Q=V_{C,k}$ attains equality and proves~\textup{(9)}. Mapping the maximizing directions back through the target whiteners gives
\begin{align}
(\Phi_A^{(k)})^\top\widehat{\mathcal K}_{AB}^{H}\Phi_B^{(k)}
&=U_{C,k}^\top
W_A\widehat{\mathcal K}_{AB}^{H}W_B^\top
V_{C,k}\nonumber\\
&=U_{C,k}^\top T_HV_{C,k}
=\operatorname{diag}(\sigma_1,\ldots,\sigma_k),
\label{eq:app_carrier_diagonalization}
\end{align}
which proves~\textup{(10)}.

HSA takes $k=k_H$ and defines
\begin{equation}
\begin{aligned}
\Phi_A&=W_AU_{C,k_H},
&\Phi_B&=W_BV_{C,k_H},\\
\mathbf z^A&=\Phi_A^\top(\mathbf a-\widehat\mu_A),
&\mathbf z^B&=\Phi_B^\top(\mathbf b-\widehat\mu_B).
\end{aligned}
\label{eq:app_carrier_coordinates}
\end{equation}
At population level,
\begin{equation}
\mathbf Z^A=\Phi_A^\top G_A\mathbf L
+\Phi_A^\top\boldsymbol\varepsilon_A,
\qquad
\mathbf Z^B=\Phi_B^\top G_B\mathbf L
+\Phi_B^\top\boldsymbol\varepsilon_B.
\label{eq:app_carrier_latent_readout}
\end{equation}
The Eckart--Young--Mirsky theorem~\cite{golub2013matrix} also gives
\begin{equation}
U_{C,k_H}\operatorname{diag}(\sigma_1,\ldots,\sigma_{k_H})
V_{C,k_H}^\top
\in\underset{\operatorname{rank}(R)\leq k_H}{\arg\min}
\lVert T_H-R\rVert_F.
\label{eq:app_best_rank_approximation}
\end{equation}
\unskip\nobreak\hfil\penalty50
\hskip1em\null\nobreak\hfil$\square$\par

\subsection{Gaussian Evidence Expansion}
\label{app:gaussian_evidence}

Set $c_j=\operatorname{clip}(\sigma_j,0,1-10^{-6})$. The working covariance matrices for matched and mismatched values of coordinate $j$ are
\begin{equation}
C_j^+=\begin{bmatrix}1&c_j\\c_j&1\end{bmatrix},
\qquad
C_j^-=I_2.
\label{eq:app_coordinate_covariances}
\end{equation}
The determinant and inverse required by the Gaussian density are
\begin{equation}
|C_j^+|=1-c_j^2,
\qquad
(C_j^+)^{-1}
=\frac1{1-c_j^2}
\begin{bmatrix}1&-c_j\\-c_j&1\end{bmatrix}.
\label{eq:app_covariance_inverse}
\end{equation}
For $\mathbf u=[x,y]^\top$, the shared normalizing constants of the two zero-mean Gaussian densities cancel, leaving the determinant term:
\begin{align}
\ell_j(x,y;c_j)
&=\log\frac{p(\mathbf u\mid C_j^+)}{p(\mathbf u\mid I_2)}\nonumber\\
&=-\frac12\log(1-c_j^2)
-\frac12\mathbf u^\top[(C_j^+)^{-1}-I_2]\mathbf u.
\label{eq:app_llr_matrix_form}
\end{align}
Furthermore,
\begin{equation}
(C_j^+)^{-1}-I_2
=\frac1{1-c_j^2}
\begin{bmatrix}c_j^2&-c_j\\-c_j&c_j^2\end{bmatrix},
\label{eq:app_inverse_difference}
\end{equation}
so expansion of the quadratic form yields
\begin{equation}
\ell_j(x,y;c_j)
=\frac{c_jxy}{1-c_j^2}
-\frac{c_j^2(x^2+y^2)}{2(1-c_j^2)}
-\frac12\log(1-c_j^2).
\label{eq:app_llr_expansion}
\end{equation}
Summing~\eqref{eq:app_llr_expansion} over $j=1,\ldots,k_H$ gives the carrier score in \hsaEqRef{eq:hsa_carrier_score}. The Gaussian working model provides a closed-form evidence score in the standardized readout coordinates, leaving the full embedding distribution unrestricted.
\section{Candidate Resolution, Calibration, and Score Formation}
\label{app:fusion}

\noindent\textbf{Recall.}
\hsaEqRef{eq:hsa_complementary_score} constructs candidate-resolution coordinates orthogonal to the leading raw relation subspace. \hsaEqRef{eq:complete_hsa} combines their score with carrier evidence after mismatch calibration.

\subsection{Orthogonal Candidate-Resolution Score}
\label{app:candidate_resolution_score}

From the raw decomposition~\eqref{eq:app_raw_svd}, define
\begin{equation}
\Pi_A^R=U_{R,k_R}U_{R,k_R}^\top,
\qquad
\Pi_B^R=V_{R,k_R}V_{R,k_R}^\top.
\label{eq:app_raw_projectors}
\end{equation}
The candidate-resolution coordinates and score are
\begin{align}
\mathbf r^A(\mathbf a)
&=\frac{(I-\Pi_A^R)(\mathbf a-\widehat\mu_A)}
{\max(\lVert(I-\Pi_A^R)(\mathbf a-\widehat\mu_A)\rVert_2,10^{-12})},
\nonumber\\
\mathbf r^B(\mathbf b)
&=\frac{(I-\Pi_B^R)(\mathbf b-\widehat\mu_B)}
{\max(\lVert(I-\Pi_B^R)(\mathbf b-\widehat\mu_B)\rVert_2,10^{-12})},
\nonumber\\
s_R(\mathbf a,\mathbf b)
&=\mathbf r^A(\mathbf a)^\top\mathbf r^B(\mathbf b).
\label{eq:app_complementary_score}
\end{align}

\subsection{Source-Derived Reliability Gate}
\label{app:reliability_gate}

For each observed hub edge, define the ridge-regularized covariance matrices as
\begin{align}
\widehat\Omega_A
&=\widehat\Sigma_{AA}
-\widehat\Sigma_{AH}
(\widehat\Sigma_{HH}^{A}+\rho_H^AI)^{-1}
\widehat\Sigma_{HA},
\nonumber\\
\widehat\Omega_B
&=\widehat\Sigma_{BB}
-\widehat\Sigma_{BH}
(\widehat\Sigma_{HH}^{B}+\rho_H^BI)^{-1}
\widehat\Sigma_{HB},
\label{eq:app_residual_covariances}
\end{align}
where
$\rho_H^A=\rho(\widehat\Sigma_{HH}^{A})$ and
$\rho_H^B=\rho(\widehat\Sigma_{HH}^{B})$.
Let $d_R=\lVert\widehat\Omega_A\rVert_F
\lVert\widehat\Omega_B\rVert_F$. The specified rule is
\begin{equation}
\alpha_R=
\begin{cases}
\operatorname{clip}\!\left(
\dfrac{\langle\widehat\Omega_A,\widehat\Omega_B\rangle_F}{d_R},-1,1
\right),&d_R>10^{-20},\\[6pt]
0,&d_R\leq10^{-20},
\end{cases}.
\label{eq:app_residual_affinity}
\end{equation}
For each of the 256 coordinate permutations generated with seed 42, let $P_\pi$ be the corresponding permutation matrix and compute
\begin{align}
\alpha_R^{\pi}
&=
\begin{cases}
\operatorname{clip}\!\left(
\dfrac{\langle\widehat\Omega_A,
P_\pi\widehat\Omega_BP_\pi^\top\rangle_F}{d_R},-1,1
\right),&d_R>10^{-20},\\[6pt]
0,&d_R\leq10^{-20},
\end{cases}
\nonumber\\
q_R&=\operatorname{Quantile}_{0.95}
\left(\{\alpha_R^{\pi}:\pi\in\Pi_G\}\right),
\nonumber\\
g_R&=\operatorname{clip}\!\left(
\frac{\alpha_R-q_R}{\max(1-q_R,10^{-12})},0,1\right).
\label{eq:app_visibility_gate}
\end{align}
Each coordinate permutation preserves the spectrum of the regularized covariance matrix while disrupting coordinate-wise agreement between the target spaces. The resolution-reliability gate is zero when the observed affinity does not exceed the 95th percentile of this null distribution. Otherwise, it rescales the excess over $q_R$ from $[q_R,1]$ to $[0,1]$. Candidate-resolution evidence therefore enters the ranking according to cross-modal agreement estimated entirely from the hub edges.

\subsection{Mismatch Calibration and Final Score}
\label{app:score_calibration}

The reported protocol supplies equal-size target-side training marginals, denoted by $\{\mathbf a_i\}_{i=1}^{n}$ and $\{\mathbf b_i\}_{i=1}^{n}$. Synchronized runs inherit these counts from common source rows, while the disjoint-source control uses matched splits. The common index supplies rows for mismatch construction, while the two hub-edge moment systems remain separately estimated.

For calibration, each seed in $\{42,43,44,45,46\}$ generates a derangement $\pi$, satisfying $\pi(i)\neq i$ for every row. For $Q\in\{C,R\}$,
\begin{equation}
\begin{aligned}
\tau_Q^{\pi}
&=\max\!\left\{
\operatorname{Std}_{i=1}^{n}
[s_Q(\mathbf a_i,\mathbf b_{\pi(i)})],10^{-6}
\right\},\\
\tau_Q&=\frac15\sum_{\pi}\tau_Q^{\pi}.
\end{aligned}
\label{eq:app_branch_scales}
\end{equation}
The final rule is
\begin{equation}
s_{\mathrm{HSA}}
=\frac{s_C}{\max(\tau_C,10^{-6})}
+g_R\frac{s_R}{\max(\tau_R,10^{-6})}.
\label{eq:app_complete_score}
\end{equation}
\hsaEquationRef{eq:app_complete_score} combines likelihood-ratio carrier evidence $s_C$ with candidate-resolution evidence $s_R$. The source-only gate $g_R$ weights the latter, and $\tau_C$ and $\tau_R$ calibrate the two score scales. This construction supplies the same fitted scoring rule for retrieval and prototype classification.
\section{Score Blocks, Rankings, Algorithm, and Complexity}
\label{app:retrieval}

\noindent\textbf{Recall.}
\hyperref[eq:hsa_score_matrices]{Eqs.~\hsaEqNumber{eq:hsa_score_matrices}}--\hyperref[eq:hsa_bidirectional_rankings]{\hsaEqNumber{eq:hsa_bidirectional_rankings}} turn the fitted readout into two retrieval directions. This appendix gives the block form used for efficient evaluation and summarizes the execution phases and computational complexity.

\subsection{Blockwise Scoring and Bidirectional Rankings}
\label{app:blockwise_scoring}

Let $Z_A\in\mathbb R^{N_A\times k_H}$ and
$Z_B\in\mathbb R^{N_B\times k_H}$ collect carrier coordinates. Define
\begin{equation}
\begin{aligned}
d_j&=\frac{c_j}{1-c_j^2},
&e_j&=\frac{c_j^2}{2(1-c_j^2)},\\
\beta&=-\frac12\sum_{j=1}^{k_H}\log(1-c_j^2).
\end{aligned}
\label{eq:app_block_weights}
\end{equation}
Let
$\mathbf q_A=(Z_A\odot Z_A)\mathbf e$ and
$\mathbf q_B=(Z_B\odot Z_B)\mathbf e$, where
$\mathbf e=[e_1,\ldots,e_{k_H}]^\top$. The full carrier block is
\begin{equation}
S_C
=Z_A\operatorname{diag}(\mathbf d)Z_B^\top
-\mathbf q_A\mathbf1_{N_B}^\top
-\mathbf1_{N_A}\mathbf q_B^\top
+\beta\mathbf1_{N_A}\mathbf1_{N_B}^\top.
\label{eq:app_carrier_score_block}
\end{equation}
If $R_A$ and $R_B$ collect candidate-resolution coordinates, then
\begin{equation}
\begin{aligned}
S_R&=R_AR_B^\top,\\
S^{A\rightarrow B}
&=\frac{S_C}{\tau_C}+g_R\frac{S_R}{\tau_R},\\
S^{B\rightarrow A}&=(S^{A\rightarrow B})^\top.
\end{aligned}
\label{eq:app_full_score_blocks}
\end{equation}
The rankings are
\begin{equation}
\begin{aligned}
\pi_i^{A\rightarrow B}
&=\operatorname{argsort}_{j}^{\downarrow}[S^{A\rightarrow B}]_{ij},\\
\pi_j^{B\rightarrow A}
&=\operatorname{argsort}_{i}^{\downarrow}[S^{A\rightarrow B}]_{ij}.
\end{aligned}
\label{eq:app_bidirectional_ranking}
\end{equation}

\subsection{Execution Phases and Complexity}
\label{app:execution_complexity}

Algorithm~1 contains three phases. During fitting, HSA estimates edge and marginal statistics and composes $\widehat{\mathcal K}_{AB}^{H}$. It then performs both decompositions, selects $k_R$ and $k_H$, and fixes the carrier directions, raw projectors, regularized covariance matrices, and resolution-reliability gate. During calibration, HSA evaluates five deranged target-marginal pairings and fixes $\tau_C$ and $\tau_R$. During task readout, it projects both task sets and evaluates their pair scores. Retrieval forms both score blocks and sorts both axes; prototype classification scores each query against the fixed class-prototype bank. Target test identities or labels never enter HSA fitting, calibration, or score construction. Retrieval labels are consulted only after scoring to determine hits, while classification labels are used to form prototypes and evaluate predictions under the protocol in \hsaAppendixRef{app:classification_protocol}.

For $n$ anchor rows and common dimension $d$, dense covariance estimation costs $O(nd^2)$ time and $O(d^2)$ memory. The dense solve, eigendecompositions, and singular-value decompositions cost $O(d^3)$ time. Query projection costs
$O((N_A+N_B)d(k_H+k_R))$. The carrier and candidate-resolution score blocks cost
$O(N_AN_Bk_H)$ and $O(N_AN_Bd)$, respectively. Query rows are evaluated in batches, so intermediate storage scales with batch size and avoids allocating the complete $N_AN_B$ score matrix.

\section{Classification Readout, Datasets, and Results}
\label{app:classification}

This section documents the prototype-classification protocol underlying Tables~II and~IV, including the readout direction, class definitions, splits, eligibility criteria, and fixed-state reuse.

\subsection{Classification Protocol and Metrics}
\label{app:classification_protocol}

\noindent\textbf{Prototype Direction.}
Each relation follows the modality order used in the main tables. The first modality is $A$ and supplies one prototype $\mathbf p_c^A$ for each class $c$; the second modality is $B$ and supplies the test queries. Classification therefore evaluates the $B\!\to\!A$ direction in \hsaEqRef{eq:hsa_prototype_classification}. For every relation except ImageBind text--audio on VGGSound, each prototype is formed by averaging the normalized labeled training features in modality $A$ and then normalizing the mean. The VGGSound ImageBind relation uses a fixed 80-template text-prompt bank to form its $A$-side class prototypes. Every compared method receives the same prototype bank and the same test queries within a relation.

\noindent\textbf{Information Boundary.}
The main classification comparisons use the classification source features to estimate the target-modality relation from the two hub-edge moment systems and construct the HSA readout, under the same zero-target-pair-identity boundary as retrieval. Both tasks use the same relation-estimation and readout method; Appendix~\ref{app:frozen_fit_reuse} further validates task readout with all parameters fixed. Class labels define the $A$-side prototype bank; target test outcomes do not enter carrier selection, rank selection, reliability gating, or mismatch calibration. Frozen cosine, hub-relative similarity, bidirectional ridge, bidirectional Procrustes, ReAlign, ERM, IRM, VREx, DANN, CORAL, and HSA use no $A$--$B$ pair identities. ASIF, Paired-OP, and Full A--B use labeled target-training pairs and retain the $^\dagger$ marker. ASIF and Paired-OP participate in the boldface and underlining; only Full A--B is excluded. ERM, IRM, VREx, DANN, CORAL, and Full A--B report the mean over seeds 40--42; analytic methods use their fixed registered run.

\noindent\textbf{Metric and Aggregation.}
Let $\mathcal I_c^{\mathrm{te}}$ be the test queries from class $c$. Macro Top-1 averages class-wise accuracy,
\begin{equation}
\operatorname{mT1}
=\frac{1}{|\mathcal C|}
\sum_{c\in\mathcal C}
\frac{1}{|\mathcal I_c^{\mathrm{te}}|}
\sum_{i\in\mathcal I_c^{\mathrm{te}}}
\mathbf 1[\widehat y_i=c].
\label{eq:supp_macro_top1}
\end{equation}
Macro Top-1 is primary because several datasets are class-imbalanced. The classification means weight the six ImageBind relations or the five LanguageBind relations equally. Their combined mean weights all eleven relations equally. Diagnostic results do not enter these means.

\subsection{Datasets, Splits, and Class Definitions}
\label{app:classification_data}

\begin{table*}[t]
\caption{\textbf{Classification Dataset Coverage.} Pair abbreviations follow the main tables. Counts give prototype-training and test queries. The two Ego4D counts follow the displayed relation order and reflect the evaluated modality intersection.}
\label{tab:supp_classification_datasets}
\centering
\footnotesize
\setlength{\tabcolsep}{3pt}
\renewcommand{\arraystretch}{1.24}
\begin{tabularx}{\textwidth}{@{}>{\raggedright\arraybackslash}X >{\raggedright\arraybackslash}m{0.15\linewidth} >{\raggedright\arraybackslash}m{0.20\linewidth} M{0.065\linewidth} M{0.105\linewidth} M{0.105\linewidth}@{}}
\toprule
\multirow{2}{*}{Dataset} & \multicolumn{2}{c}{Target Relations} & \multirow{2}{*}{Classes} & \multirow{2}{*}{Train} & \multirow{2}{*}{Test} \\
\cmidrule(lr){2-3}
& ImageBind & LanguageBind & & & \\
\midrule
VGGSound-309 & Tx--Au & -- & 309 & 9,258 & 15,421 \\
\addlinespace[3pt]
NYUv2 & -- & Im--D & 10 & 795 & 654 \\
\addlinespace[3pt]
TartanRGBT & Tx--Th & \shortstack[l]{Vi--Th, Vi--D\\Th--D} & 4 & 1,184 & 436 \\
\addlinespace[3pt]
Ego4D & \shortstack[l]{Tx--IMU\\Au--IMU} & -- & 17 & \shortstack{1,084\\875} & \shortstack{434\\391} \\
\addlinespace[3pt]
BatVision & Au--D & -- & 7 & 2,536 & 584 \\
\addlinespace[3pt]
UTD-MHAD & D--IMU & -- & 27 & 431 & 430 \\
\addlinespace[3pt]
MSR-VTT & -- & Vi--Au & 20 & 6,176 & 884 \\
\midrule
\multicolumn{6}{@{}l@{}}{Total: 11 Classification Relations (6 ImageBind, 5 LanguageBind).} \\
\bottomrule
\end{tabularx}
\end{table*}

\noindent\textbf{VGGSound-309.}
Classification uses all 309 classes in the official VGGSound vocabulary. Its 9,258 training clips come from one cached source shard; all classes are present, with 2--232 clips per class and a median of 27. The 15,421 test clips are the intersection of the registered official test list and the available media snapshot, covering all 309 classes; 25 registered clips are absent from that snapshot. ImageBind forms fixed text-prompt prototypes. Retrieval uses the registered 100-class, 1,500/1,000 split described in \hsaAppendixRef{app:exp_protocol}.

\noindent\textbf{NYUv2.}
NYUv2 uses the official 795/654 scene split and ten scene classes: bathroom, bedroom, bookstore, classroom, dining room, home office, kitchen, living room, office, and others. Prototypes are formed from training images, and test queries are depth observations.

\noindent\textbf{TartanRGBT.}
The four environment classes are indoor, offroad, outdoor, and urban. They are derived from the first component of each registered trajectory identifier. Training and test trajectories are disjoint. The 1,184 training samples contain 322, 458, 303, and 101 examples in these classes; the 436 test samples contain 252, 135, 21, and 28. The label \emph{park} has no registered test trajectory and is excluded by the locked class rule. The same identities and split are used for all four ImageBind and LanguageBind relations.

\noindent\textbf{Ego4D.}
Scenario metadata define seventeen fixed activity classes after requiring at least ten training and five test samples per relation and taking the class intersection across the text--inertial and audio--inertial relations: Bike mechanic; Carpenter; Cleaning/laundry; Cooking; Crafting/knitting/sewing/drawing/painting; Eating; Farmer; Gardening; Household management--caring for kids; Indoor navigation (walking); Playing board games; Playing games/video games; Playing with pets; Reading books; Watching TV; Working out at home; and jobs related to a construction or renovation company. Registered video-identity splits are applied before this fixed-class filtering.

\noindent\textbf{BatVision.}
The seven classes are the collection locations 2nd Floor Luxembourg, 3rd Floor Luxembourg, Attic, Outdoor Cobblestone Path, Salle Chevalier, Salle des Colonnes, and V119 Cake Corridors. The registered training-plus-validation split provides 2,536 labeled samples for constructing seven class prototypes; the test split provides 584 queries. These labels support location recognition and may reflect location-specific visual and acoustic structure.

\noindent\textbf{UTD-MHAD.}
The 27 official actions are: right-arm swipe left; right-arm swipe right; right-hand wave; two-hand front clap; right-arm throw; cross arms at the chest; basketball shoot; draw X; draw a clockwise circle; draw a counter-clockwise circle; draw a triangle; bowling; front boxing; baseball swing; tennis forehand swing; two-arm curl; tennis serve; two-hand push; knock on a door; catch an object; pick up and throw; jog in place; walk in place; sit to stand; stand to sit; forward lunge; and squat. Subjects 1, 3, 5, and 7 form the 431-sample prototype split, while subjects 2, 4, 6, and 8 form the 430-sample test split.

\noindent\textbf{MSR-VTT.}
The twenty official video categories are music, people, gaming, sports/actions, news/events/politics, education, TV shows, movie, animation, vehicles, how-to, travel, science/technology, animal, kids/family, documentary, food, cooking, beauty/fashion, and advertisement. All twenty categories occur in the registered 6,176/884 training/test split. Video prototypes classify audio queries.

\subsection{Classification Eligibility}
\label{app:classification_boundaries}

\noindent\textbf{MAVD Drive-Level Diagnostic.}
MAVD provides day-drive and night-drive labels, with one recording drive per class. These labels jointly encode time of day and drive identity. HSA reaches 100.00\% on ImageBind audio--thermal and 76.65\% on LanguageBind thermal--audio, compared with 50.00\% for frozen cosine on both relations. These drive-level diagnostics are reported separately from the eleven-relation classification aggregate.

\noindent\textbf{Caltech Label Availability.}
The Caltech Aerial RGBT cache provides timestamps and visual, thermal, and inertial features from one recording bag. Its annotations support paired retrieval but supply no semantic scene, terrain, or activity classes. Classification uses dataset-defined labels, so the thermal--inertial relation participates in retrieval evaluation only.

\subsection{Retrieval and Prototype Classification Under a Single Fitted State}

\label{app:frozen_fit_reuse}

The main classification tables estimate the HSA relation readout using the source features and splits specified for classification, with common prototypes and queries across methods. To test whether the same estimated relation can support both retrieval and prototype classification, we fix the complete HSA state estimated from the source hub edges: centers, whitening matrices, paired directions, ranks, reliability spectrum, source gate, and both mismatch scales. This state is identical to that used in the corresponding retrieval evaluation; classification performs only projection and scoring on the class prototypes and queries, without refitting or recalibration. Input checks cover encoder provenance, feature coordinates, and actual sample identities.

Eight of the eleven classification relations meet both fixed-state reuse criteria: compatible encoder coordinates and query identities disjoint from retrieval fitting samples. \hsaTableRef{tab:supp_frozen_fit_reuse} reports the individual results and their relation-equal mean across these eight eligible relations and five datasets. NYUv2 is evaluated under its task-specific preprocessing protocols (\hsaAppendixRef{app:nyuv2_boundary}). ImageBind Tx--Th and Tx--Au provide interface-reuse diagnostics with overlapping queries; the two MAVD drive diagnostics are also reported separately.

\begin{table*}[t]

\caption{\textbf{Prototype Classification with a Fixed Relation Readout.} Values are macro Top-1 (\%). The aggregate uses eight eligible relations across five datasets. Overlap and drive diagnostics are grouped separately and excluded from the mean. Overlap counts use actual sample identities; boldface marks the highest accuracy within each row.}

\label{tab:supp_frozen_fit_reuse}

\centering
\footnotesize
\setlength{\tabcolsep}{3pt}
\renewcommand{\arraystretch}{1.20}
\begin{adjustbox}{Clip=0pt 0pt 0pt 0pt}
\begin{tabularx}{\textwidth}{@{}M{0.11\linewidth} >{\raggedright\arraybackslash}m{0.20\linewidth} M{0.08\linewidth} *{4}{Y}@{}}
\toprule
Pair & Dataset & Scope & \shortstack{Reused\\HSA} & \shortstack{Separate\\HSA} & \shortstack{Frozen\\Cosine} & Overlap \\
\midrule
\multicolumn{7}{@{}l}{\textbf{Eligible Relations: Included in the Mean}} \\
\midrule
\rowcolor{black!5}
\multicolumn{7}{@{}l}{\textbf{ImageBind}} \\
\addlinespace[2pt]
Au--D & BatVision & Eligible & \textbf{61.40} & \textbf{61.40} & 25.77 & 0 \\
Au--IMU & Ego4D & Eligible & 16.46 & \textbf{22.81} & 6.56 & 0 \\
D--IMU & UTD-MHAD & Eligible & \textbf{34.75} & \textbf{34.75} & 2.87 & 0 \\
Tx--IMU & Ego4D & Eligible & \textbf{21.79} & 21.58 & 7.16 & 0 \\
\midrule
\rowcolor{black!5}
\multicolumn{7}{@{}l}{\textbf{LanguageBind}} \\
\addlinespace[2pt]
Th--D & TartanRGBT & Eligible & \textbf{71.98} & 69.01 & 37.80 & 0 \\
Vi--Au & MSR-VTT & Eligible & \textbf{26.37} & \textbf{26.37} & 13.77 & 0 \\
Vi--D & TartanRGBT & Eligible & \textbf{73.76} & 69.99 & 43.68 & 0 \\
Vi--Th & TartanRGBT & Eligible & 77.61 & \textbf{84.37} & 74.03 & 0 \\
\midrule
\rowcolor{hsarow}
\multicolumn{3}{l}{\textbf{Mean over 8 Eligible Relations}} & 48.01 & \textbf{48.79} & 26.46 & -- \\
\midrule
\multicolumn{7}{@{}l}{\textbf{Diagnostics: Excluded from the Mean}} \\
\midrule
\rowcolor{black!5}
\multicolumn{7}{@{}l}{\textbf{ImageBind}} \\
\addlinespace[2pt]
Au--Th & MAVD & Drive & \textbf{100.00} & \textbf{100.00} & 50.00 & 0 \\
Tx--Au & VGGSound & Overlap & 27.99 & \textbf{32.60} & 27.41 & 89 \\
Tx--Th & TartanRGBT & Overlap & \textbf{91.90} & 90.92 & 28.10 & 80 \\
\midrule
\rowcolor{black!5}
\multicolumn{7}{@{}l}{\textbf{LanguageBind}} \\
\addlinespace[2pt]
Th--Au & MAVD & Drive & \textbf{76.65} & \textbf{76.65} & 50.00 & 0 \\
\bottomrule\end{tabularx}
\end{adjustbox}
\par\vspace{2pt}
\begin{minipage}{\textwidth}
\scriptsize
The aggregate covers five datasets. Reuse--cosine: 21.56 points [95\% CI: 15.16, 29.52]; reuse--separate: $-0.77$ points [95\% CI: $-2.30$, 0.00]. Intervals use dataset-cluster resampling; differences are computed before rounding.
\end{minipage}

\end{table*}

Across the eight eligible relations, the unchanged retrieval state achieves 48.01\% macro Top-1, compared with 26.46\% for frozen cosine and 48.79\% for HSA fitted separately for classification. The 21.56-point gain over cosine shows that the fitted relation supports class-prototype prediction as well as instance retrieval. All classification queries in this aggregate are disjoint from the retrieval fitting samples under the dataset splits in \hsaAppendixRef{app:classification_data}; identity checks use the underlying samples across feature-extraction protocols. The separate overlap diagnostics contain 80/436 queries for ImageBind text--thermal on TartanRGBT and 89/15,421 for ImageBind text--audio on VGGSound.

\section{Retrieval Protocol and Reproducibility}
\label{app:experimental_protocol}

This section documents the datasets, information boundaries, training rules, implementation settings, seed variability, and excluded degenerate unit used by the main-paper retrieval comparisons.

\subsection{Detailed Retrieval Protocol}
\label{app:exp_protocol}

\noindent\textbf{Datasets and Evaluation Relations.}
Each evaluation relation combines one backbone, one dataset, and one held-out target pair. ImageBind~\cite{imagebind} uses image as its hub and contributes nine nondegenerate relations among text, audio, depth, thermal, and inertial modalities. LanguageBind~\cite{languagebind} uses language as its hub and contributes all ten relations among image, video, audio, depth, and thermal. Across both backbones, the nineteen relations span seven modalities and ten datasets. The datasets are VGGSound~\cite{vggsound}, NYUv2~\cite{nyuv2}, TartanRGBT~\cite{tartanrgbt}, Ego4D~\cite{ego4d}, and BatVision~\cite{batvision}. They also include MAVD~\cite{mavd}, UTD-MHAD~\cite{utdmhad}, Caltech Aerial RGBT~\cite{caltechaerialrgbt}, UCF101~\cite{ucf101}, and MSR-VTT~\cite{msrvtt}. \hsaTableRef{tab:supp_dataset_statistics} reports modality coverage, positive definitions, sample counts, and relation counts.

\begin{table*}[t]
\caption{\textbf{Retrieval Dataset Coverage.} Tx, Im, Vi, Au, D, Th, and IMU denote text, image, video, audio, depth, thermal, and inertial modalities.}
\label{tab:supp_dataset_statistics}
\centering
\footnotesize
\setlength{\tabcolsep}{3pt}
\renewcommand{\arraystretch}{1.15}
\begin{tabularx}{\textwidth}{@{}>{\raggedright\arraybackslash}m{0.15\linewidth}X >{\raggedright\arraybackslash}m{0.205\linewidth} M{0.10\linewidth} M{0.10\linewidth} M{0.065\linewidth}@{}}
\toprule
Dataset & Modalities & Positive Definition & Train & Test & Relations \\
\midrule
VGGSound (VGS) & Tx, Im, Au & Class (100), Instance & 1,500 & 1,000 & 2 \\
NYUv2 (NYU) & Tx, Im, D & Instance & 47,584 & 654 & 1 \\
TartanRGBT (TR) & Tx, Im, Vi, D, Th & Trajectory (46), Instance & 1,733--8,602 & 436--8,601 & 6 \\
Ego4D (E4D) & Tx, Im, Au, IMU & Instance & 1,138--1,600 & 487--686 & 2 \\
BatVision (BV) & Tx, Im, Au, D & Instance & 2,536 & 584 & 2 \\
MAVD & Tx, Im, Au, Th & Instance & 647--648 & 432--433 & 2 \\
UTD-MHAD (UTD) & Im, D, IMU & Instance & 431 & 430 & 1 \\
Caltech Aerial RGBT (CA) & Im, Th, IMU & Instance & 245 & 245 & 1 \\
UCF101 (UCF) & Tx, Im, Vi & Instance & 7,070 & 3,030 & 1 \\
MSR-VTT (MSR) & Tx, Vi, Au & Instance & 6,176 & 884 & 1 \\
\midrule
Total & -- & -- & -- & -- & 19 \\
\bottomrule
\end{tabularx}
\end{table*}

VGGSound contributes one class-level and one paired-instance relation. TartanRGBT contributes one trajectory-level and five paired-instance relations. Seventeen of the nineteen relations use paired-instance positives. Every query ranks the complete test gallery. The ImageBind text--depth configuration on NYUv2 contains one positive group and is excluded because every method obtains 100\% Recall@10.

\noindent\textbf{Data Splits.}
VGGSound uses class-stratified sampling, and Ego4D and Caltech Aerial RGB--Thermal use random sample-level splits. TartanRGBT uses sample-level splits with shared trajectories for ImageBind and trajectory-disjoint splits for LanguageBind. UTD-MHAD uses subjects 1, 3, 5, and 7 for training and subjects 2, 4, 6, and 8 for testing. MAVD uses chronological splits within each drive. BatVision follows the dataset-provided splits within each location, combining the training and validation subsets for fitting. MSR-VTT follows the 7K/1K training/test annotations, retaining samples with available video and successfully decoded audio. UCF101 and NYUv2 use fixed training and test sample lists. These partitions remain unchanged across all compared methods for each relation.

\noindent\textbf{Compared Methods and Information Boundary.}
All compared methods use all available training rows and keep the backbone frozen. For methods without target-pair supervision, $A$--$B$ pair identities are masked throughout fitting, rank selection, gating, calibration, stopping, and validation. Frozen cosine compares target features directly. ReAlign~\cite{realign} collects the $A$ and $B$ training marginals from the two hub edges, discards cross-modal row identities, and applies fixed $A\!\rightarrow\!B$ Anchor, Trace, and Centroid Alignment. Hub-relative similarity~\cite{relrep} compares target samples through their similarity vectors to hub samples. Bidirectional ridge~\cite{hoerl1970ridge} and Procrustes~\cite{schonemann1966} independently map both targets into the hub space. ERM~\cite{domainbed}, IRM~\cite{irm}, VREx~\cite{vrex}, DANN~\cite{dann}, and CORAL~\cite{deepcoral} use the same two-layer projection-head architecture. Each modality has a separate projection head with independent weights, and all heads share a learnable temperature parameter. These methods treat the two hub edges as training environments. HSA is the complete closed-form readout.

ASIF, Paired-OP, and Full A--B use $A$--$B$ training-row identities. ASIF~\cite{asif} builds a coupled dictionary from these paired rows. Paired-OP~\cite{schonemann1966} uses every paired row to solve one orthogonal map per retrieval direction. Both methods take zero gradient steps. Full A--B trains the same projection architecture on target-paired data and averages seeds 40--42 under the globally selected 200-step duration. In Tables~I--IV, $^\dagger$ marks ASIF, Paired-OP, and Full A--B because all three use target-pair identities. Boldface and underlining identify the best and second-best values among all methods except Full A--B. ASIF and Paired-OP participate in these markings.

\noindent\textbf{Baseline Training and Selection.}
For each trainable hub-edge baseline and seed, 90\% of source rows fit a pilot model. The remaining 10\% select the training duration using mean $A$--$H$ and $H$--$B$ validation InfoNCE~\cite{oord2018cpc}. The loss is checked every 100 steps. After 400 steps, training stops following five checks without a 0.2\% relative improvement, subject to a 4,000-step cap. The selected duration is the best validation step, with a minimum of 400 steps. Pilot weights are discarded before retraining on all source rows for the selected duration. Reported results average seeds 40, 41, and 42.

Full A--B uses one training duration across all relations and seeds. For each relation and seed in $\{40,41,42\}$, a deterministic 90/10 split of target-pair training rows fits a pilot. Each pilot records validation InfoNCE every 100 steps from 100 through 4,000. For candidate duration $t$, the selection score is the equal-relation mean of the seed-averaged log loss ratio,
\begin{equation}
\mathcal J(t)=\frac{1}{19}\sum_{e=1}^{19}\frac{1}{3}\sum_{s\in\{40,41,42\}}\log\frac{L_{e,s}(t)}{L_{e,s}(100)}.
\label{eq:supp_full_ab_duration}
\end{equation}
Minimizing $\mathcal J(t)$ selects 200 steps, with the shorter duration breaking an exact tie. Pilot weights are then discarded. All $19\times3=57$ final models are retrained on every target-pair training row for exactly 200 steps. No target test sample enters duration selection. The target test set is accessed only after the protocol is frozen.

\noindent\textbf{Metrics and Implementation.}
For relation $e$ and $k\in\{1,5,10\}$, bidirectional Recall@$k$ averages the two retrieval directions over the complete gallery. Recall@10 is the primary metric. Recovery@10 is defined in \hsaEqRef{eq:supp_recovery_at_10}. Blue parentheses in the main tables show point gains over frozen cosine, while the final columns report ratios of backbone means to Full A--B. Following the source-to-anchor construction, the first listed modality on each evaluation edge is $A$, and the second is $B$. ReAlign uses a $10^{-10}$ stabilizer and fixes the $A\!\rightarrow\!B$ orientation without performance-based selection. ASIF uses $k=\min(800,n_{\mathrm{train}})$ nonzero anchors and a similarity exponent of 8. Its anchor selection and sparse retrieval follow a fixed CPU path that determines the order of equal-similarity anchors. Paired-OP row-normalizes inputs, applies no centering, and solves both retrieval directions independently by singular-value decomposition. Neither ASIF nor Paired-OP selects hyperparameters on the test set. HSA uses $\lambda=0.5$, $\kappa=2$, $k_{\max}=256$, and rank-permutation seed 142. Score calibration uses mismatch seeds 42--46. The reliability gate uses 256 simultaneous row-and-column permutations of the regularized covariance matrix, the 0.95 quantile, and seed 42. Trainable baselines use a $\operatorname{Linear}(d,1024)$--GELU--$\operatorname{Linear}(1024,256)$ head. They use AdamW~\cite{loshchilov2019adamw} with learning rate $10^{-3}$ and weight decay $10^{-4}$. All settings remain fixed across relations.
\par

\subsection{Efficiency Measurement and Outcome Checks}
\label{app:efficiency_protocol}

Table~X measures all fourteen methods over 19
nondegenerate relations and 494 independent runs. Analytic fits use one warm-up
and five timed repeats, while trainable methods use seeds 40--42. Each timed
score call reproduces the registered retrieval hits, yielding 2,964 passing
checks. All GPU paths use the same NVIDIA RTX 4070 SUPER. ASIF retains its fixed
CPU path, and each method retains its original software environment. Fit and training times quantify the
realized workload in those environments. \emph{Steps} is the median final-fit
gradient-step count over relation--seed runs. The archived records retain the
complete selection-search and total-gradient-step distributions.

Table~XI repeats the measurement over
the eleven classification relations. Analytic fits run on the CPU, while
trainable methods replay model selection and final fitting for seeds 40--42 on
the same GPU. Relation values are seed medians. All post-fit readouts use a
12-thread BLAS and 6-thread PyTorch CPU configuration after one warm-up and
five timed repeats. The timed boundary includes method-specific transforms,
the complete query--prototype score matrix, and Top-1 selection; shared
prototype construction is excluded. We report latency per 1,000 queries and
score throughput because class counts vary. The 286 method-level runs reproduce
every registered macro Top-1 value. All 198 trainable runs match the prescribed
architectures, seeds, workloads, inputs, and final steps.

Classification fitting times are relation medians; readout latency and throughput pool time, queries, and scores across all eleven relations. On the shared CPU configuration, HSA processes 1,000 queries in 25.614\,ms at 9.172 million query--prototype scores per second. Its latency is approximately 2.4 times that of the source-trained neural baselines (Table~XI), measured after fitting on the same pooled workload.

\subsection{Seed Variability of Trainable Baselines}
\label{app:seed_variability}

\hsaTableRef{tab:supp_seed_variability} reports sample standard deviations across seeds 40--42. The first three columns give the standard deviation of each seed-level aggregate. The final two summarize relation-wise standard deviations. All values were recomputed from the archived seed-level results using denominator $n-1$.

\begin{table}[t]
\caption{\textbf{Recall@10 Variability across Three Training Seeds.} Entries are sample standard deviations in percentage points. ``Mean relation SD'' averages the nineteen relation-wise standard deviations; ``Max relation SD'' gives their maximum.}
\label{tab:supp_seed_variability}
\centering
\scriptsize
\setlength{\tabcolsep}{1.4pt}
\renewcommand{\arraystretch}{1.15}
\begin{tabularx}{\columnwidth}{@{}l*{5}{Y}@{}}
\toprule
Method & \shortstack{ImageBind\\Mean SD} & \shortstack{LanguageBind\\Mean SD} & \shortstack{All-19\\Mean SD} & \shortstack{Mean\\Relation SD} & \shortstack{Max\\Relation SD} \\
\midrule
ERM   & 0.25 & 0.17 & 0.12 & 0.58 & 1.33 \\
IRM   & 0.03 & 0.18 & 0.09 & 0.52 & 1.30 \\
VREx  & 0.28 & 0.17 & 0.22 & 0.71 & 1.55 \\
DANN  & 0.31 & 0.37 & 0.25 & 0.71 & 1.69 \\
CORAL & 0.10 & 0.23 & 0.08 & 0.62 & 1.41 \\
Full A--B & 0.36 & 0.50 & 0.42 & 0.64 & 1.49 \\
\bottomrule
\end{tabularx}
\end{table}

\subsection{Excluded Degenerate Unit}
\label{app:excluded_unit}

The archived ImageBind text--depth evaluation on NYUv2 contains one positive group. Every evaluated readout obtains 100\% Recall@10 because all candidates belong to the same positive set. This unit remains in the protocol record but is excluded from the nineteen-relation aggregate, method ranking, and recovery analysis.

\section{Additional Retrieval Results}
\label{app:supp_experiments}

This section reports direction-wise consistency, recovery relative to target-paired training, and robustness under alternative aggregate definitions. Unless stated otherwise, all aggregates use the same nineteen nondegenerate relations and the bidirectional full-gallery retrieval protocol.

\subsection{Direction-Wise Retrieval Consistency}
\label{app:directional_consistency}

\hsaTableRef{tab:hsa_directional_consistency} reports the direction-wise aggregates summarized in Q1. HSA improves both directions on each backbone, with all-relation gains of 13.33 and 12.43 points.

\begingroup
\begin{table}[t]
\caption{\textbf{Direction-Wise Retrieval Consistency.} Mean Recall@10 (\%) in both retrieval directions; parentheses show HSA gains over frozen cosine.}
\label{tab:hsa_directional_consistency}
\centering
\scriptsize
\setlength{\tabcolsep}{1.4pt}
\renewcommand{\arraystretch}{1.15}
\begin{adjustbox}{Clip=0pt 0pt 0pt 0pt}
\begin{tabularx}{\columnwidth}{@{}>{\raggedright\arraybackslash}m{0.17\linewidth}*{6}{Y}@{}}
\toprule
& \multicolumn{2}{c}{ImageBind} &
  \multicolumn{2}{c}{LanguageBind} &
  \multicolumn{2}{c}{Overall} \\
\cmidrule(lr){2-3}\cmidrule(lr){4-5}\cmidrule(l){6-7}
Method &
$A\!\to\!B$ & $B\!\to\!A$ &
$A\!\to\!B$ & $B\!\to\!A$ &
$A\!\to\!B$ & $B\!\to\!A$ \\
\midrule
Frozen Cosine
& 13.74 & 7.86
& 24.77 & 25.24
& 19.54 & 17.01 \\
\midrule
\rowcolor{hsarow}
\raisebox{-2.25pt}[0pt][0pt]{\textbf{HSA (Ours)}}
& \gaincell{\textbf{29.16}}{+15.43}
& \gaincell{\textbf{20.63}}{+12.77}
& \gaincell{\textbf{36.22}}{+11.45}
& \gaincell{\textbf{37.35}}{+12.11}
& \gaincell{\textbf{32.88}}{+13.33}
& \gaincell{\textbf{29.43}}{+12.43} \\
\bottomrule
\end{tabularx}
\end{adjustbox}
\end{table}
\endgroup

\subsection{Recovery Relative to Target-Paired Training}
\label{app:recovery_relative}

For each relation $e$, the reported ratio is
\begin{equation}
\mathrm{Recovery@10}(e)
=100\times
\frac{\mathrm{R@10}_{\mathrm{HSA}}(e)}
{\mathrm{R@10}_{\mathrm{Full}\ A\text{-}B}(e)}.
\label{eq:supp_recovery_at_10}
\end{equation}
\hsaTableRef{tab:supp_recovery_at_10} gives the relation-wise values underlying the recovery columns in Tables~I and~III. The aggregate rows report ratios of backbone-mean Recall@10 values rather than averages of the relation-wise percentages.

\begin{table*}[t]
\caption{\textbf{Relation-Wise Recall@10 Relative to Full A--B.} Pair abbreviations follow the main tables. HSA and Full A--B are bidirectional percentages. Recovery uses the unrounded values in \hsaEqRef{eq:supp_recovery_at_10}; displayed recalls are independently rounded to two decimals.}
\label{tab:supp_recovery_at_10}
\centering
\footnotesize
\setlength{\tabcolsep}{3pt}
\renewcommand{\arraystretch}{1.15}
\begin{tabularx}{\textwidth}{@{}>{\raggedright\arraybackslash}m{0.13\linewidth} M{0.10\linewidth} X *{3}{Y}@{}}
\toprule
Backbone & Held-Out Pair & Dataset & HSA & Full A--B & Recovery (\%) \\
\midrule
\multirow{10}{*}{ImageBind}
& Tx--Au & VGGSound & 84.45 & 84.57 & 99.9 \\
& Tx--Th & TartanRGBT & 74.87 & 74.71 & 100.2 \\
& Tx--IMU & Ego4D & 8.97 & 4.01 & 223.6 \\
& Au--D & BatVision & 6.93 & 7.13 & 97.2 \\
& Au--Th & MAVD & 4.63 & 3.09 & 150.0 \\
& Au--IMU & Ego4D & 8.93 & 7.15 & 124.9 \\
& D--Th & TartanRGBT & 9.66 & 15.76 & 61.3 \\
& D--IMU & UTD-MHAD & 9.30 & 6.01 & 154.8 \\
& Th--IMU & Caltech Aerial RGBT & 16.33 & 4.15 & 393.4 \\
\cmidrule(lr){2-6}
& \multicolumn{2}{l}{Ratio of Backbone Means} & 24.90 & 22.95 & 108.5 \\
\midrule
\multirow{11}{*}{LanguageBind}
& Im--Vi & UCF101 & 99.72 & 99.76 & 100.0 \\
& Im--Th & TartanRGBT & 60.89 & 60.36 & 100.9 \\
& Im--D & NYUv2 & 13.53 & 44.19 & 30.6 \\
& Im--Au & VGGSound & 67.15 & 41.02 & 163.7 \\
& Vi--Th & TartanRGBT & 53.33 & 54.43 & 98.0 \\
& Vi--D & TartanRGBT & 18.00 & 25.84 & 69.7 \\
& Vi--Au & MSR-VTT & 26.64 & 26.34 & 101.1 \\
& Th--D & TartanRGBT & 19.27 & 31.31 & 61.5 \\
& Th--Au & MAVD & 3.93 & 2.73 & 143.7 \\
& D--Au & BatVision & 5.39 & 5.82 & 92.6 \\
\cmidrule(lr){2-6}
& \multicolumn{2}{l}{Ratio of Backbone Means} & 36.79 & 39.18 & 93.9 \\
\bottomrule
\end{tabularx}
\end{table*}

HSA reaches 31.15\% mean Recall@10 without target-pair fitting and exceeds Full A--B on 10 of 19 relation-level point estimates. With its specified projection head and objective, Full A--B uses a globally selected 200-step duration on the same data splits. It reaches $22.95\pm0.36$\% on ImageBind and $39.18\pm0.50$\% on LanguageBind, with an overall mean of $31.49\pm0.42$\%. Here $\pm$ denotes the sample standard deviation across seeds 40--42. Recovery quantifies HSA's performance relative to this fitted target-paired reference; values above 100\% indicate higher Recall@10 on the corresponding relation.

\subsection{Robustness to Aggregation and Chance Level}
\label{app:aggregate_robustness}

The main paper reports an unweighted mean over the nineteen configured relations. \hsaTableRef{tab:supp_aggregate_robustness} evaluates the HSA gain over frozen cosine under alternative summaries. The dataset-balanced value first averages relations within each dataset and then weights the ten datasets equally. The leave-one-dataset-out range recomputes the relation mean after excluding each dataset in turn. For relation $e$, the chance-normalized gain is
\begin{equation}
G_e^{\mathrm{chance}}
=100\times\frac{R_{\mathrm{HSA},e}-R_{C0,e}}{100-C_e},
\label{eq:supp_chance_normalized_gain}
\end{equation}
where $C_e$ is the Recall@10 chance level implied by the complete gallery and the registered positive definition.

\begin{table}[t]
\caption{\textbf{Robustness of the HSA--Frozen-Cosine Recall@10 Gain.} Point gains are in percentage points; the final row is the percentage of chance-adjusted headroom recovered.}
\label{tab:supp_aggregate_robustness}
\centering
\footnotesize
\setlength{\tabcolsep}{2pt}
\renewcommand{\arraystretch}{1.15}
\begin{tabularx}{\columnwidth}{@{}>{\raggedright\arraybackslash}m{0.35\linewidth}X M{0.18\linewidth}@{}}
\toprule
Summary & Weighting Rule & Gain \\
\midrule
Relation Mean & Equal weight over 19 relations & 12.88 \\
Relation Median & Median over 19 relations & 9.25 \\
Dataset-Balanced Macro & Equal weight over 10 datasets & 9.72 \\
Leave One Dataset Out & Relation Mean after each exclusion & 8.02--14.11 \\
Chance Normalized & Mean of \hsaEqRef{eq:supp_chance_normalized_gain} & 13.98\% \\
\bottomrule
\end{tabularx}
\end{table}

Relation-specific chance levels range from 0.12\% to 19.96\%. HSA exceeds chance on all nineteen relations, whereas frozen cosine does so on fourteen. Every alternative summary remains positive, showing that the aggregate gain persists across the observed mixture of datasets and relation difficulties. The 9.72-point dataset-balanced macro and the 8.02-point leave-one-dataset-out lower bound further demonstrate stable gains under equal dataset weighting and every single-dataset exclusion.

\section{Mechanism and Diagnostic Analyses}
\label{app:diagnostics}

This section collects the complete carrier interventions, component ablations, disjoint-source control, and correspondence-resolution diagnostics supporting the main-paper mechanism analysis.

\subsection{Complete Aggregate Intervention Statistics}

\label{app:intervention_statistics}

Source permutations evaluate complete HSA: each condition refits directions, ranks, the gate, and scales while preserving the algorithm, hyperparameters, and input marginals. Carrier localization holds the intact candidate-resolution branch, gate, and resolution scale fixed, assigns the same reliability spectrum to each direction set, and recalibrates its carrier scale by the same source-only rule. The reliability-dose endpoint comparison changes only $\alpha$ between 0 and 1 with all fitted quantities and scales fixed. Table~\ref{tab:supp_mechanism_interventions} summarizes these three interventions; Appendix~\ref{app:cluster_statistics} lists the complete test families.

\begin{table*}[t]

\caption{\textbf{Aggregate Complete-Score Interventions.} Differences are intact-minus-permuted, leading-minus-control, or $\alpha=1$ minus $\alpha=0$ bidirectional Recall@10 in percentage points. Intervals and sign flips use dataset groups; Holm correction uses each complete comparison family.}

\label{tab:supp_mechanism_interventions}

\centering
\footnotesize
\setlength{\tabcolsep}{3pt}
\renewcommand{\arraystretch}{1.15}
\begin{tabularx}{\textwidth}{@{}X M{0.14\linewidth} M{0.14\linewidth} M{0.22\linewidth} M{0.15\linewidth}@{}}
\toprule
Comparison & \shortstack{Positive\\Relations} & \shortstack{Mean\\Difference} & 95\% Cluster Interval & Holm $p$ \\
\midrule
Permute $A$--$H$ & 19/19 & 26.52 & [9.83, 40.64] & $2.93\times10^{-3}$ \\
Permute $H$--$B$ & 19/19 & 26.58 & [10.03, 41.01] & $2.93\times10^{-3}$ \\
Permute Both Edges & 19/19 & 24.59 & [9.22, 36.53] & $2.93\times10^{-3}$ \\
\midrule
Subsequent Carriers & 19/19 & 16.44 & [6.31, 25.41] & $1.95\times10^{-3}$ \\
Random Carriers & 19/19 & 19.38 & [7.06, 29.21] & $1.95\times10^{-3}$ \\
\midrule
Reliability $\alpha=0$ & 18/19 & 12.00 & [4.75, 16.96] & 0.01172 \\
\bottomrule\end{tabularx}

\end{table*}

All relations from one dataset receive the same sign flip, so the smallest one-sided probability is determined by ten groups: $2^{-10}=1/1024$. Seeds are averaged within relations and do not add independent observations.

\subsection{Complete Component-Ablation Visualization}
\label{app:component_ablation_details}

\hsaFigureRef{fig:hsa_component_ablation} visualizes the complete Recall@10 ablations, with dataset-cluster comparisons in Table~\ref{tab:supp_cluster_comparisons}. Table~IX in the main paper reports backbone-specific Recall@10 results together with the corresponding function-metric changes defined in Appendix~\ref{app:component_functions}. Each comparison changes one scoring component while retaining the fitted directions, evaluation relations, and remaining scoring pipeline.
Removing the carrier score produces the largest all-relation reduction, at 12.00 points. Removing the source-derived gate and using uniform channel reliabilities reduce the mean by 3.86 and 2.48 points. Candidate resolution contributes 3.43 points overall, concentrated on LanguageBind. Residual orthogonalization has the smallest overall effect and shows the same backbone dependence.

\begin{figure}[t]
\centering
\includegraphics[width=\columnwidth]{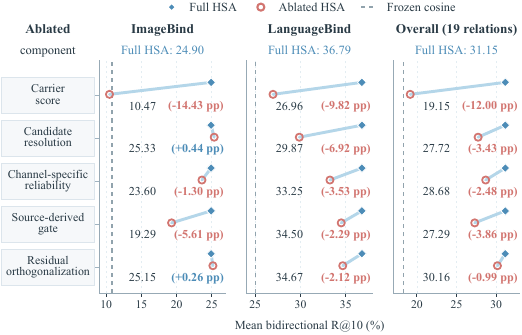}
\caption{\textbf{Complete Component Ablation of the HSA Readout.} Mean bidirectional Recall@10 for full HSA and five matched single-component variants on ImageBind, LanguageBind, and all nineteen relations.}
\label{fig:hsa_component_ablation}
\end{figure}

\subsection{Matched Shared-Row and Disjoint-Source Control}
\label{app:disjoint_source}

\hsaTheoremRef{thm:hub_readable} expresses the population relation as a function of two hub-edge moment systems. We therefore test whether HSA retains its retrieval gain when the hub edges use mutually disjoint source instances. For each split seed in $\{42,\ldots,51\}$, the source rows are randomly partitioned into equal halves $I_1$ and $I_2$. When the row count is odd, one predetermined row is omitted. Shared-row conditions estimate both edges from the same half. Disjoint conditions estimate $A$--$H$ and $H$--$B$ from opposite halves. All conditions use equal per-edge sample counts and matched cross-half calibration rows. The gate construction, scoring rule, and evaluation set remain unchanged.

For every metric, the two shared orientations and two disjoint orientations are first averaged within each split seed. Within-relation confidence intervals use 10,000 bootstrap resamples with seed 42. The cross-relation mean row instead uses the 100,000-draw dataset-cluster analysis in Appendix~\ref{app:cluster_statistics}. The individual relation intervals resample each relation's natural independent unit: retrieval group, paired instance, scene, trajectory, subject, drive, or video. \hsaTableRef{tab:supp_disjoint_source} reports the independent-unit count and the primary Recall@10 comparison. Here $C0$ denotes frozen cosine, $S$ the sample-size-matched shared-row condition, and $D$ the strictly disjoint condition.

\begin{table*}[t]
\caption{\textbf{Recall@10 under Matched Shared-Row and Disjoint-Source Estimation.} Values are bidirectional percentages. $n_{\mathrm{unit}}$ gives the number of clustered bootstrap units; the final columns report $D-C0$ and $S-D$ with natural-unit 95\% intervals for each relation; the final aggregate row uses a dataset-cluster interval.}
\label{tab:supp_disjoint_source}
\centering
\scriptsize
\setlength{\tabcolsep}{2pt}
\renewcommand{\arraystretch}{1.20}
\begin{adjustbox}{Clip=0pt 0pt 0pt 0pt}
\begin{tabularx}{\textwidth}{@{}>{\raggedright\arraybackslash}m{0.115\linewidth} M{0.08\linewidth} >{\raggedright\arraybackslash}m{0.14\linewidth} M{0.045\linewidth} *{3}{Y} M{0.18\linewidth} M{0.18\linewidth}@{}}
\toprule
Backbone & Held-Out Pair & Dataset & $n_{\mathrm{unit}}$ & $C0$ & $S$ & $D$ & \shortstack{$D-C0$\\{[95\% CI]}} & \shortstack{$S-D$\\{[95\% CI]}} \\
\midrule
\multirow{9}{*}{ImageBind}
& Tx--Au & VGGSound & 100 & 75.20 & 83.80 & 81.06 & \shortstack{5.86\\{[2.75,9.37]}} & \shortstack{2.74\\{[1.94,3.59]}} \\
& Tx--Th & TartanRGBT & 46 & 7.68 & 73.40 & 72.42 & \shortstack{64.74\\{[58.02,71.09]}} & \shortstack{0.99\\{[0.48,1.52]}} \\
& Tx--IMU & Ego4D & 686 & 1.53 & 7.03 & 6.01 & \shortstack{4.48\\{[3.35,5.67]}} & \shortstack{1.03\\{[0.44,1.63]}} \\
& Au--D & BatVision & 7 & 1.97 & 7.71 & 7.82 & \shortstack{5.85\\{[4.17,7.42]}} & \shortstack{-0.11\\{[-0.54,0.42]}} \\
& Au--Th & MAVD & 2 & 1.62 & 4.11 & 3.18 & \shortstack{1.56\\{[0.44,25.79]}} & \shortstack{0.93\\{[0.63,7.50]}} \\
& Au--IMU & Ego4D & 487 & 1.75 & 7.84 & 6.42 & \shortstack{4.67\\{[3.27,6.08]}} & \shortstack{1.42\\{[0.60,2.24]}} \\
& D--Th & TartanRGBT & 46 & 0.34 & 9.71 & 9.50 & \shortstack{9.16\\{[7.98,10.40]}} & \shortstack{0.21\\{[0.01,0.43]}} \\
& D--IMU & UTD-MHAD & 4 & 3.02 & 6.87 & 6.64 & \shortstack{3.62\\{[1.23,6.00]}} & \shortstack{0.23\\{[-0.19,0.69]}} \\
& Th--IMU & Caltech Aerial RGBT & 245 & 4.08 & 17.53 & 10.64 & \shortstack{6.56\\{[4.63,8.43]}} & \shortstack{6.89\\{[4.52,9.34]}} \\
\midrule
\multirow{10}{*}{LanguageBind}
& Im--Vi & UCF101 & 3030 & 96.49 & 99.61 & 99.60 & \shortstack{3.11\\{[2.65,3.61]}} & \shortstack{0.02\\{[-0.00,0.04]}} \\
& Im--Th & TartanRGBT & 9 & 36.93 & 58.54 & 58.15 & \shortstack{21.22\\{[16.77,25.79]}} & \shortstack{0.40\\{[-0.14,0.81]}} \\
& Im--D & NYUv2 & 654 & 11.62 & 13.47 & 13.30 & \shortstack{1.68\\{[-0.80,4.14]}} & \shortstack{0.17\\{[0.01,0.34]}} \\
& Im--Au & VGGSound & 1000 & 43.40 & 62.92 & 61.43 & \shortstack{18.04\\{[15.87,20.19]}} & \shortstack{1.48\\{[1.12,1.85]}} \\
& Vi--Th & TartanRGBT & 9 & 36.01 & 52.99 & 52.57 & \shortstack{16.56\\{[12.86,22.58]}} & \shortstack{0.42\\{[0.10,0.78]}} \\
& Vi--D & TartanRGBT & 9 & 7.11 & 16.99 & 16.25 & \shortstack{9.14\\{[5.09,18.06]}} & \shortstack{0.74\\{[0.07,1.80]}} \\
& Vi--Au & MSR-VTT & 884 & 6.96 & 24.13 & 20.91 & \shortstack{13.95\\{[11.83,16.06]}} & \shortstack{3.22\\{[2.64,3.81]}} \\
& Th--D & TartanRGBT & 9 & 7.57 & 18.57 & 16.97 & \shortstack{9.40\\{[3.50,17.44]}} & \shortstack{1.59\\{[0.93,2.11]}} \\
& Th--Au & MAVD & 2 & 2.08 & 3.45 & 3.53 & \shortstack{1.45\\{[1.35,3.68]}} & \shortstack{-0.08\\{[-1.71,-0.00]}} \\
& D--Au & BatVision & 7 & 1.88 & 5.48 & 5.44 & \shortstack{3.56\\{[1.07,6.60]}} & \shortstack{0.04\\{[-0.25,0.38]}} \\
\midrule
\rowcolor{hsarow}
\multicolumn{3}{l}{\textbf{Mean over 19 Relations}} & -- & 18.27 & 30.22 & 29.04 & \shortstack{10.77\\{[3.83,16.57]}} & \shortstack{1.18\\{[0.53,2.30]}} \\
\bottomrule
\end{tabularx}
\end{adjustbox}
\end{table*}

The clustered intervals reflect variation across the dataset's sampling units: two drives for MAVD, four subjects for UTD-MHAD, and seven scenes for BatVision. For ImageBind audio--thermal, the drive-specific $D-C0$ gains are 25.79 points for the 19-row drive and 0.44 points for the 413-row drive. These two support points determine the $[0.44,25.79]$ interval and describe the observed between-drive heterogeneity.

Disjoint estimation retains a 10.77-point mean gain over frozen cosine, with dataset-cluster interval $[3.83,16.57]$, supporting relation recovery from independently sampled hub edges. Recall@10 gains are positive on all nineteen relations, with eighteen relation-wise intervals entirely above zero. LanguageBind image--depth gains 1.68 points, with interval $[-0.80,4.14]$. Shared-row estimation adds 1.18 points on average, with interval $[0.53,2.30]$, extending beyond the prespecified $\pm1$-point equivalence range. At Recall@1 and Recall@5, disjoint estimation improves eighteen relations; LanguageBind image--depth changes by $-0.74$ and $-1.15$ points, respectively.

All $19\times10\times4=760$ relation--seed--condition fits satisfy the prescribed hub-edge and calibration-row separation in the disjoint conditions. Numerical decomposition uses NumPy for 759 fits. For LanguageBind image--video, seed 50 and orientation $D21$, double-precision PyTorch CPU SVD supplies a converged solution after NumPy non-convergence. Its relative reconstruction error is $2.9\times10^{-15}$.

\subsection{Correspondence Resolution under Group-Preserving Permutations}
\label{app:granularity}

This section measures correspondence resolution with group-preserving permutations. The experiments use disjoint $A$--$H$ and $H$--$B$ training rows. Target-side permutations preserve the relevant frame, trajectory, or semantic-group structure. \hsaTableRef{tab:supp_granularity} records the finest resolution supported by the dataset and hub observations in each configuration.

\begin{table}[t]
\caption{\textbf{Correspondence-Resolution Diagnostics.} Effects are bidirectional Recall@10 differences in percentage points with 95\% intervals when available. An interval containing zero leaves the corresponding resolution unresolved.}
\label{tab:supp_granularity}
\centering
\scriptsize
\setlength{\tabcolsep}{1.5pt}
\renewcommand{\arraystretch}{1.15}
\begin{tabularx}{\columnwidth}{@{}>{\raggedright\arraybackslash}m{0.31\linewidth}YY >{\raggedright\arraybackslash}m{0.17\linewidth}@{}}
\toprule
Dataset, Pair, and Backbone & Target-Level Evidence & Group-Level Evidence & Supported Resolution \\
\midrule
TartanRGBT D--Th (ImageBind) & \shortstack{3.13\\{[2.26,4.34]}} & -- & Frame \\
MSR-VTT Vi--Au (LanguageBind) & \shortstack{interval contains\\zero} & \shortstack{28.54\\{[26.47,30.58]}} & Semantic Group \\
BatVision Au--D (ImageBind) & \shortstack{0.24\\{[-0.16,0.79]}} & -- & Unresolved \\
\bottomrule
\end{tabularx}
\end{table}

\section{Dataset-Cluster Statistics and Robustness}

\label{app:cluster_statistics}

\subsection{Estimands, Resampling, and Complete Families}

\label{app:cluster_protocol}

This post hoc statistical supplement averages seeds within each relation, then weights the nineteen relations equally. Its ten dataset groups are BatVision (2), Caltech (1), Ego4D (2), MAVD (2), MSR-VTT (1), NYUv2 (1), Tartan (6), UCF101 (1), UTD-MHAD (1), and VGGSound (2), with relation counts in parentheses. Relations sharing a dataset use one group identifier across backbones and modalities.

For relation differences $d_e$, each cluster-bootstrap draw samples $G$ datasets with replacement and carries every relation in each sampled group. The replicate estimate divides the sum of sampled differences by the actual number of sampled relations. We use seed 42, 100,000 draws, and the 2.5th and 97.5th percentiles. This preserves the relation-equal estimand. We additionally report dataset-equal means and leave-one-dataset-out ranges. One-sided exact sign flips enumerate $2^G$ whole-group signs, using the relation mean as the statistic and including ties in the tail. The assumptions are between-group independence and joint sign symmetry of group differences.

The seven families are F1, HSA versus frozen cosine; F2, the other nine source-only comparators; F3, the three source permutations; F4, the two carrier-position controls; F5, all five original Recall@10 component ablations; F6, the two relation-construction controls; and F7, disjoint-minus-cosine and shared-minus-disjoint estimation. Holm correction is applied within each complete family. ASIF, Paired-OP, and Full A--B use target-pair information and are reported separately as descriptive references. The responsibility metrics, fixed-state classification, and NYUv2 exclusion analysis receive descriptive intervals only.

\begin{table*}[t]

\caption{\textbf{Complete Dataset-Cluster Comparisons.} All differences and intervals are in percentage points. Relation-equal means, dataset-equal means, and leave-one-dataset-out ranges have distinct weighting interpretations. Every comparison covers nineteen relations and ten datasets.}

\label{tab:supp_cluster_comparisons}

\centering
\scriptsize
\setlength{\tabcolsep}{2pt}
\renewcommand{\arraystretch}{1.20}
\begin{tabularx}{\textwidth}{@{}M{0.05\linewidth} >{\raggedright\arraybackslash}m{0.215\linewidth} M{0.07\linewidth} M{0.07\linewidth} M{0.16\linewidth} M{0.15\linewidth} YY@{}}
\toprule
Family & Control/Comparison & \shortstack{Relation\\Mean} & \shortstack{Dataset\\Mean} & 95\% Interval & \shortstack{Leave-One-Out\\Range} & Raw $p$ & Holm $p$ \\
\midrule
F1 & Frozen Cosine & 12.88 & 9.72 & [5.29, 18.68] & [8.02, 14.11] & $9.77\times10^{-4}$ & $9.77\times10^{-4}$ \\
\midrule
F2 & Hub-Relative & 22.01 & 20.36 & [8.74, 32.28] & [18.06, 24.37] & $9.77\times10^{-4}$ & $8.79\times10^{-3}$ \\
F2 & Bi. Ridge & 7.17 & 6.11 & [2.14, 11.07] & [4.44, 7.99] & $9.77\times10^{-4}$ & $8.79\times10^{-3}$ \\
F2 & Bi. Procrustes & 7.56 & 6.44 & [3.49, 10.95] & [4.82, 8.28] & $9.77\times10^{-4}$ & $8.79\times10^{-3}$ \\
F2 & ReAlign & 12.67 & 9.99 & [5.41, 18.16] & [8.20, 13.96] & $9.77\times10^{-4}$ & $8.79\times10^{-3}$ \\
F2 & ERM & 9.02 & 8.20 & [3.92, 12.77] & [6.33, 10.04] & $9.77\times10^{-4}$ & $8.79\times10^{-3}$ \\
F2 & IRM & 9.32 & 8.48 & [4.60, 12.76] & [6.81, 10.34] & $9.77\times10^{-4}$ & $8.79\times10^{-3}$ \\
F2 & VREx & 8.90 & 7.86 & [3.79, 12.73] & [6.04, 9.88] & $9.77\times10^{-4}$ & $8.79\times10^{-3}$ \\
F2 & DANN & 9.51 & 8.61 & [4.40, 13.27] & [6.81, 10.50] & $9.77\times10^{-4}$ & $8.79\times10^{-3}$ \\
F2 & CORAL & 9.03 & 8.21 & [4.03, 12.70] & [6.40, 10.05] & $9.77\times10^{-4}$ & $8.79\times10^{-3}$ \\
\midrule
F3 & Permuted $A$--$H$ & 26.52 & 24.60 & [9.83, 40.64] & [21.44, 29.35] & $9.77\times10^{-4}$ & $2.93\times10^{-3}$ \\
F3 & Permuted $H$--$B$ & 26.58 & 25.25 & [10.03, 41.01] & [21.67, 29.44] & $9.77\times10^{-4}$ & $2.93\times10^{-3}$ \\
F3 & Both Edges Permuted & 24.59 & 21.98 & [9.22, 36.53] & [20.04, 27.26] & $9.77\times10^{-4}$ & $2.93\times10^{-3}$ \\
\midrule
F4 & Subsequent Carriers & 16.44 & 14.15 & [6.31, 25.41] & [12.63, 18.15] & $9.77\times10^{-4}$ & $1.95\times10^{-3}$ \\
F4 & Random Carriers & 19.38 & 16.13 & [7.06, 29.21] & [15.33, 21.41] & $9.77\times10^{-4}$ & $1.95\times10^{-3}$ \\
\midrule
F5 & w/o Carrier Score & 12.00 & 9.72 & [4.75, 16.96] & [9.00, 13.20] & $2.93\times10^{-3}$ & 0.01172 \\
F5 & w/o Channel Reliability & 2.48 & 2.69 & [1.49, 3.55] & [2.13, 2.69] & $9.77\times10^{-4}$ & $4.88\times10^{-3}$ \\
F5 & w/o Candidate Resolution & 3.43 & 2.53 & [0.26, 5.97] & [1.52, 3.93] & 0.03906 & 0.08789 \\
F5 & w/o Source Gate & 3.86 & 2.61 & [0.55, 5.89] & [2.29, 4.58] & 0.0293 & 0.08789 \\
F5 & w/o Orthogonalization & 0.99 & 0.59 & [-0.02, 1.84] & [0.28, 1.18] & 0.07422 & 0.08789 \\
\midrule
F6 & Diagonal Hub Covariance & 4.13 & 3.66 & [1.92, 5.76] & [3.03, 4.57] & $1.95\times10^{-3}$ & $3.91\times10^{-3}$ \\
F6 & Trace-Matched Identity & 4.21 & 4.01 & [2.50, 5.42] & [3.55, 4.57] & $1.95\times10^{-3}$ & $3.91\times10^{-3}$ \\
\midrule
F7 & Disjoint Minus Cosine & 10.77 & 7.34 & [3.83, 16.57] & [5.72, 11.86] & $9.77\times10^{-4}$ & $1.95\times10^{-3}$ \\
F7 & Shared Minus Disjoint & 1.18 & 1.50 & [0.53, 2.30] & [0.86, 1.38] & $2.93\times10^{-3}$ & $2.93\times10^{-3}$ \\
\bottomrule\end{tabularx}

\end{table*}

\begin{table*}[t]

\caption{\textbf{Descriptive Target-Paired References.} Differences are HSA minus reference in percentage points, across nineteen relations and ten datasets.}

\label{tab:supp_target_paired_cluster}

\centering
\footnotesize
\setlength{\tabcolsep}{3pt}
\renewcommand{\arraystretch}{1.15}
\begin{tabularx}{\textwidth}{@{}>{\raggedright\arraybackslash}m{0.20\linewidth}YY M{0.22\linewidth} M{0.22\linewidth}@{}}
\toprule
Reference & \shortstack{Relation Mean\\Difference} & \shortstack{Dataset Mean\\Difference} & 95\% Interval & Leave-One-Out Range \\
\midrule
ASIF & 4.37 & 4.31 & [1.84, 6.30] & [3.42, 4.88] \\
Paired-OP & 4.91 & 4.37 & [2.64, 6.41] & [4.00, 5.30] \\
Full A--B & -0.34 & -0.19 & [-5.43, 5.57] & [-1.91, 1.53] \\
\bottomrule\end{tabularx}

\end{table*}

Candidate resolution, source gating, and residual orthogonalization contribute positive mean Recall@10 gains, each with $p_{\mathrm{Holm}}=0.08789$. Function-specific metrics assess discrimination and redundancy exploratorily. Full A--B is a descriptive reference: its difference interval spans zero, with no prespecified equivalence or non-inferiority margin. Dataset-cluster intervals describe the evaluated benchmark collection across two shared backbones.

\begin{table*}[t]

\caption{\textbf{Complete Responsibility Results and Exploratory Intervals.} The first four metrics are displayed as percentages, with changes and intervals in percentage points; Red.\ uses absolute correlation units. Backbone columns give within-backbone changes; intervals apply to the overall change. The gate change is constant-one minus source gate, with its interval obtained by reversing the G-AUC interval.}

\label{tab:supp_component_functions}

\centering
\footnotesize
\setlength{\tabcolsep}{3pt}
\renewcommand{\arraystretch}{1.15}
\begin{tabularx}{\textwidth}{@{}>{\raggedright\arraybackslash}m{0.105\linewidth}*{5}{Y} M{0.22\linewidth}@{}}
\toprule
Metric & \hsacell{\textbf{HSA (Ours)}} & Ablation & Overall $\Delta$ & \shortstack{ImageBind\\$\Delta$} & \shortstack{LanguageBind\\$\Delta$} & 95\% Cluster Interval \\
\midrule
R@10 $\uparrow$ & \hsacell{\textbf{31.15}} & 19.15 & {\color{hsaloss}-12.00} & {\color{hsaloss}-14.43} & {\color{hsaloss}-9.82} & [-16.96, -4.75] \\
P-AUC $\uparrow$ & \hsacell{\textbf{81.17}} & 75.97 & {\color{hsaloss}-5.20} & {\color{hsaloss}-6.79} & {\color{hsaloss}-3.76} & [-7.90, -3.37] \\
C@1 $\uparrow$ & \hsacell{\textbf{28.80}} & 24.37 & {\color{hsaloss}-4.43} & {\color{hsamain}+1.22} & {\color{hsaloss}-9.51} & [-8.83, -0.00] \\
G-AUC $\uparrow$ & \hsacell{\textbf{71.43}} & 50.00 & {\color{hsaloss}-21.43} & {\color{hsamain}+37.50} & {\color{hsaloss}-41.67} & [-50.00, 32.22] \\
Red. $\downarrow$ & \hsacell{\textbf{0.034}} & 0.308 & {\color{hsaloss}+0.273} & {\color{hsaloss}+0.148} & {\color{hsaloss}+0.386} & [0.098, 0.403] \\
\bottomrule\end{tabularx}

\end{table*}

\begin{table*}[t]

\caption{\textbf{Sensitivity Excluding NYUv2.} Means use percentages; differences and intervals use percentage points. Every row covers eighteen relations and nine datasets.}

\label{tab:supp_nyuv2_exclusion}

\centering
\footnotesize
\setlength{\tabcolsep}{3pt}
\renewcommand{\arraystretch}{1.15}
\begin{tabularx}{\textwidth}{@{}>{\raggedright\arraybackslash}m{0.31\linewidth}YY M{0.20\linewidth} M{0.20\linewidth}@{}}
\toprule
Quantity & \shortstack{Relation\\Mean} & \shortstack{Dataset\\Mean} & 95\% Interval & Leave-One-Out Range \\
\midrule
HSA Mean & 32.13 & 31.84 & [13.27, 49.78] & [26.67, 35.62] \\
Frozen Cosine Mean & 18.64 & 21.24 & [3.96, 37.38] & [13.56, 20.77] \\
\midrule
HSA Minus Cosine & 13.49 & 10.59 & [5.73, 19.30] & [8.53, 14.87] \\
HSA Minus Full A--B & 1.34 & 3.20 & [-2.16, 6.98] & [-0.11, 4.22] \\
\midrule
Intact Minus Permuted $A$--$H$ & 27.34 & 26.04 & [10.05, 42.37] & [22.05, 30.46] \\
Intact Minus Permuted $H$--$B$ & 27.40 & 26.74 & [10.38, 42.80] & [22.29, 30.54] \\
Intact Minus Both Permuted & 25.29 & 23.09 & [9.21, 37.89] & [20.70, 28.22] \\
\midrule
Leading Minus Subsequent & 16.67 & 14.35 & [5.84, 26.20] & [12.65, 18.51] \\
Leading Minus Random & 19.81 & 16.63 & [6.72, 30.16] & [15.56, 22.01] \\
\bottomrule\end{tabularx}

\end{table*}

\subsection{Component Responsibilities: Definitions and Complete Results}
\label{app:component_functions}
The responsibility metrics were specified before their additional measurement and use the original complete score and registered test inputs. Fitted quantities remain fixed except for the removed component and its prescribed source-scale calibration. Carrier evidence is evaluated by the original bidirectional full-gallery Recall@10. All changes are ablation minus full HSA; lower Red.\ and higher values of the other metrics are preferred.

\noindent\textbf{Reliability Discrimination, P-AUC.} Positives are all registered $(A_i,B_i)$ pairs, once per row. Negatives use five fixed seeds, 47--51, excluding same-class candidates under class-positive protocols. Calibrated carrier scores distinguish these positives from mismatched pairs. We compare the original channel reliabilities with a constant spectrum having the same mean; the uniform condition recalibrates its carrier scale by the same source-only rule. P-AUC measures discrimination for these registered pairs, while full-gallery retrieval preserves the original multiple-positive definitions.

\noindent\textbf{Candidate Resolution, C@1.} In each direction, carrier scores define a fixed top-ten candidate pool. Full HSA and carrier-only scores rank the same pool. C@1 is Top-1 accuracy conditional on that pool containing at least one valid positive. Pool formation and Top-1 selection share the same candidate order and tie handling, with no label-based tie breaking. Coverage is the proportion of queries meeting the condition. Counts are combined over both directions within each relation before relation-equal aggregation.

\noindent\textbf{Gate Discrimination, G-AUC.} The source gate $g_R$ is the predictor for each relation. The evaluation label records whether adding the ungated candidate-resolution term improves Recall@10 over carrier-only scoring. This label evaluates gate ordering after scoring; gate estimation continues to use source statistics alone. A constant-one gate has AUC 0.5. The pooled comparison uses the complete nineteen-relation set. The descriptive backbone breakdown in Table~\ref{tab:supp_component_functions} recomputes G-AUC from the same predictions and labels within each backbone, giving 0.125 for ImageBind and 0.917 for LanguageBind. These within-backbone values measure a different ordering from the pooled G-AUC of 0.714.

\noindent\textbf{Score Redundancy, Red.} On the same registered positive and negative pairs used for reliability, Red.\ is the absolute Pearson correlation between carrier and candidate-resolution scores. Orthogonal residual coordinates and raw centered, row-normalized residual coordinates use their respective resolution scales obtained by the same source-only rule. This metric measures linear score redundancy between the two branches.
Candidate coverage is defined for all nineteen relations. Its relation-equal mean is 27.72\%, with 95\% dataset-cluster interval [12.04, 43.41]\%. The conditional set contains 15,282 of 45,170 bidirectional queries, giving pooled coverage of 33.83\%. These summaries weight relations and queries differently; the main analysis uses relation-equal aggregation. C@1 improvement is conditional on the fixed carrier pool containing a positive.
Candidate-resolution benefit varies across relations: ungated resolution benefits 5 of nineteen, with beneficial/non-beneficial counts of 1/8 for ImageBind and 4/6 for LanguageBind. Pooled G-AUC is 0.714, with 95\% cluster interval $[0.178,1.000]$. The interval uses 99,438 bootstrap draws containing both labels; 562 single-label draws are undefined. P-AUC and redundancy intervals are narrower; the candidate-resolution lower bound approaches zero. These exploratory metrics characterize discrimination, candidate ranking, and score redundancy.

The auxiliary metrics follow distinct stages of retrieval: P-AUC assesses pair discrimination, C@1 assesses selection within carrier-localized candidates, G-AUC assesses gate ordering across relations, and Red.\ measures score overlap. C@1 should be interpreted alongside candidate coverage because its denominator includes only covered queries. Full-gallery Recall@10 evaluates all queries, connecting these conditional diagnostics to the complete retrieval task.

\subsection{NYUv2 Evaluation Protocol and Cross-Dataset Robustness}

\label{app:nyuv2_boundary}

NYUv2 uses shared frozen features across methods and controls within each task. Retrieval min--max normalizes depth per image and quantizes it to 8-bit values (0--255), subsequently interpreted as millimeters. Classification converts filled metric depth from meters to millimeters before encoding. RGB resize and tensor-conversion orders also differ across tasks. Each task is evaluated under its own protocol; cross-task fixed-state reuse covers the eight compatible relations in \hsaAppendixRef{app:frozen_fit_reuse}.
Excluding NYUv2 leaves eighteen relations across nine datasets (Table~\ref{tab:supp_nyuv2_exclusion}). HSA reaches 32.13\% mean Recall@10, exceeding frozen cosine by 13.49 points, with descriptive 95\% dataset-cluster interval $[5.73,19.30]$. All intact-minus-permuted and leading-minus-control intervals remain positive. This subset preserves both the retrieval advantage and the key mechanism results, supporting cross-dataset robustness.

The two averaging schemes in Table~\ref{tab:supp_nyuv2_exclusion} address complementary aspects of benchmark composition. Relation-equal averaging gives each retained target-pair configuration the same weight, whereas dataset-equal averaging first combines configurations within a dataset. TartanRGBT contributes one mean for six relations. The corresponding HSA means are 32.13\% and 31.84\%, with gains over cosine of 13.49 and 10.59 points. Every leave-one-dataset-out gain within this subset is positive, ranging from 8.53 to 14.87 points. Thus the improvement persists when datasets receive equal weight and when each remaining dataset is omitted in turn.

\end{document}